\documentclass[10pt]{article} 
\usepackage[accepted]{tmlr}

\renewcommand{\headrulewidth}{2pt}   %

\newcommand\stackequal[2]{%
  \mathrel{\stackunder[2pt]{\stackon[4pt]{=}{$\scriptscriptstyle#1$}}{%
  $\scriptscriptstyle#2$}}}

\usepackage{stackengine}
\newcommand\ledot{\mathrel{\ensurestackMath{%
  \stackengine{-.5ex}{\lessdot}{-}{U}{c}{F}{F}{S}}}}

\usepackage{xcolor}
\usepackage{adjustbox}
\usepackage{tikz}
\usetikzlibrary{arrows.meta,
                chains,
                positioning}

\usepackage{hyperref} 
\usepackage{url}
\usepackage{amsmath}
\usepackage{cleveref}
\usepackage{mathrsfs}
\usepackage{algpseudocode}
\usepackage[inline]{enumitem}
\usepackage{comment}
\usepackage{cancel}
\usepackage{algorithm}
\usepackage{amsthm}
\usepackage{thm-restate}
\usepackage{amssymb}
\usepackage{bm,bbm}

\makeatletter
\newlength{\trianglerightwidth}
\algnewcommand{\LineComment}[1]{\Statex \hskip\ALG@thistlm $\triangleright$ #1}
\algnewcommand{\LineCommentCont}[1]{\Statex \hskip\ALG@thistlm%
  \parbox[t]{\dimexpr\linewidth-\ALG@thistlm}{\hangindent=\algorithmicindent \hangafter=1 \strut\makebox[\algorithmicindent][l]{$\triangleright$}#1\strut}}
\makeatother

\usepackage{calrsfs}
\DeclareMathAlphabet{\pazocal}{OMS}{zplm}{m}{n}
\newcommand{\norm}[1]{\left\lVert#1\right\rVert}

\DeclareMathOperator*{\argmax}{arg\,max}
\DeclareMathOperator*{\argmin}{arg\,min}
\DeclareMathOperator{\Tr}{Tr}

\DeclareMathOperator{\Diag}{Diag}

\usepackage{booktabs}
\usepackage{multirow}
\usepackage{graphicx}
\usepackage{subfig}
\usepackage{makecell}

\DeclareSymbolFont{myletters}{OML}{ztmcm}{m}{it}
\DeclareMathSymbol{\uplambda}{\mathord}{myletters}{"15}

\algnewcommand{\ElseIIf}[1]{\algorithmicelse\ #1}
\algnewcommand{\IIf}[1]{\State\algorithmicif\ #1\ \algorithmicthen}
\algnewcommand{\EndIIf}{\unskip\ \algorithmicend\ \algorithmicif}

\makeatletter

\usepackage{etoolbox}
\newcommand{\StatePar}[1]{%
  \State\parbox[t]{\dimexpr\linewidth-\ALG@thistlm}{\strut #1\strut}%
}
\makeatother

\title{\textsc{AdaCubic}: An Adaptive Cubic Regularization Optimizer for Deep Learning}

\affilone{Aristotle University of Thessaloniki, Greece \\ }
\affiltwo{University of Applied Sciences Northwestern Switzerland, Muttenz, Switzerland}
\correspondence{tsingalis@csd.auth.gr}

\author{
Ioannis Tsingalis$^{1}$, Constantine Kotropoulos$^{1}$, and Corentin Briat$^{2}$}

\def\month{02}  
\def\year{2026} 
\def\openreview{\url{https://openreview.net/forum?id=pZBQ7J37lk}} 

\definecolor{darkblue}{rgb}{0.0,0.0,0.55}
\usepackage[strict]{changepage}

\usepackage{framed}

\definecolor{formalshade}{rgb}{0.95,0.95,1}

\begin{document}

\maketitle

\begin{abstract}
A novel regularization technique, \textsc{AdaCubic}, is proposed that adapts the weight of the cubic term. The heart of \textsc{AdaCubic} is an auxiliary optimization problem with cubic constraints that dynamically adjusts the weight of the cubic term in Newton's cubic regularized method. We use Hutchinson’s method to approximate the Hessian matrix, thereby reducing computational cost. We demonstrate that \textsc{AdaCubic} inherits the cubically regularized Newton method's local convergence guarantees. Our experiments in Computer Vision, Natural Language Processing, and Signal Processing tasks demonstrate that \textsc{AdaCubic} outperforms or competes with several widely used optimizers. Unlike other adaptive algorithms that require hyperparameter fine-tuning, \textsc{AdaCubic} is evaluated with a fixed set of hyperparameters, rendering it a highly attractive optimizer in settings where fine-tuning is infeasible. This makes \textsc{AdaCubic} an attractive option for researchers and practitioners alike. To our knowledge, \textsc{AdaCubic} is the first optimizer to leverage cubic regularization in scalable deep learning applications.

\vspace{0.5em}
\begin{center}
\centering
\texttt{\url{https://github.com/iTsingalis/AdaCubic}}
\end{center}

\end{abstract}

\section{Introduction}\label{sec:intoduction}

Deep Neural Networks (DNNs) have demonstrated strong performance across a variety of machine learning tasks~\citep{pouyanfar2018,dargan2020}. DNN models are non-convex~\citep{jin2021,danilova2022,pooladzandi2022}. Accordingly, saddle points may arise during the optimization procedure~\citep{bedi2021escaping}. In~\citet{dauphin14}, it is shown that the saddle points affect the efficiency of a DNN. Therefore, methods that avoid saddle points are necessary, as discussed next.

The Cubic Regularized (CR) Newton's method was introduced in~\citep{nesterov2006}. This method effectively circumvents saddle points in a non-convex setting. The first research direction focuses on carefully selecting the regularization parameter for the cubic term. In~\citet{cartis2011}, an Adaptive Regularized Cubic (ARC) method is presented where the cubic regularization term is adapted dynamically, similarly to the radius in the Trust Region methods~\citep{conn2000}. To mitigate the computational burden of deriving the Hessian matrix and the gradient in ARC,~\citet{carmon2019} solves the CR sub-problem using gradient descent. Alternatively, one solves the cubic sub-problem using a subsampled gradient and a Hessian-vector product~\citep{tripuraneni2018}. In~\citet{kohler2017}, a subsampled scheme for the gradient and the Hessian matrix is exploited, achieving the same convergence rate as ARC. In~\citet{wang2020}, momentum information is utilized to improve the convergence rate of CR. Inspired by~\citet{fang2018}, a recursive stochastic variance reduced CR method is proposed in~\citet{zhou2020}, yielding a better convergence rate than that reported in~\citep{tripuraneni2018}. In~\citet{huang2022}, the CR method was applied to solve unconstrained convex-concave saddle point problems. 

In a second research direction, it has been demonstrated that injecting a random perturbation whenever a saddle point is encountered can facilitate escape from saddle points. In~\citet{ge2015,jin2017}, both negative curvature and random perturbation are applied to Stochastic Gradient Descent (\textsc{SGD}) to escape saddle points. Within the same scope, in~\citet{allenzhu2018, royer2018}, it is shown that negative curvature and random perturbation can be used to find an $(\epsilon_g, \epsilon_H)$-stationary point faster than the first-order methods. A drawback of these methods is the need to compute the smallest eigenvalue of the Hessian matrix and the corresponding eigenvector. Several methods have been proposed to address this limitation. In~\citet{li2019}, it is shown that a perturbed version of Stochastic Recursive Gradient Descent, without using the Hessian matrix information, also converges to an $(\epsilon_g, \epsilon_H)$-stationary point. In~\citet{allen2018,zhang2021}, a robust Hessian matrix power method is proposed to compute the negative curvature near saddle points, yielding faster convergence than the standard perturbed gradient descent methods. In~\citet{chen2022}, to achieve a better convergence rate, the average movement of the iterates is controlled by a step-size shrinkage scheme~\citep{li2019}. 

In a third research direction, escaping saddle points relies on momentum information~\citep{wang2021distributed}. First-order methods with random initialization and momentum information are shown to be able to escape saddle points in~\citep{sun2019}. A greater momentum in \textsc{SGD} enlarges the projection to an escape direction, leading to a fast saddle point escape~\citep{wang2020a}. In~\citet{levy2021}, a parameter-free recursive momentum method is proposed for non-convex optimization. In~\citet{wang2020a}, it is shown that acceleration can be achieved for non-quadratic functions under Polyak-{\L}ojasiewicz condition and non-convexity. In~\citet{wang2021}, it is presented that the momentum term accelerates the training of a one-layer-wide ReLU network.

A fourth research direction employs variance reduction to escape saddle points. In~\citet{allen2016}, the minimization of the sum of smooth functions is studied, where variance reduction is applied to speed up convergence in both the stochastic and the deterministic case. A general variance-reduction estimation method is introduced that is not restricted to gradients~\citep{fang2018}. This method has been applied to numerous problems and has achieved convergence rates superior to those reported in~\citep{allen2016}. In~\citet{nguyen2017a}, a recursive gradient estimator for convex optimization is introduced. The latter estimator is then extended to non-convex problems in~\citep{nguyen2017b}. In~\citet{ge2019}, the first variance reduction technique not based on a separate negative curvature search subroutine is proposed. 

Last but not least, a second-order optimizer, called \textsc{AdaHessian}, has been introduced in~\citep{ya2021}. \textsc{AdaHessian} is based on the Adaptive Moments Estimation (\textsc{Adam}) optimizer~\citep{kingma2014} and leverages Hutchinson's method to approximate the curvature information with low computational cost~\citep{bekas2007}. The convergence rate of \textsc{AdaHessian} for a strongly convex and smooth loss function can be found in~\citep{ya2021,pooladzandi}. Based on the \textit{Hessian power}, the convergence rate of \textsc{AdaHessian} for a strongly convex and smooth loss function matches that of either gradient descent or Newton's method~\citep{jahani2021,sadiev2022}.

In this paper, we focus on the CR Newton method~\citep{nesterov2006} and propose a novel algorithm that dynamically adapts the weight of the cubic term in the cubic subproblem. 
The adaptation of the cubic term is achieved by utilizing an auxiliary cubically constrained optimization problem. The proposed algorithm, \textsc{AdaCubic}, leverages the advantages of CR theory and Hutchinson’s estimation technique. In more detail, the \textbf{contributions} of this paper are:
\begin{itemize}
\item A novel method is proposed that automatically adapts the regularization parameter $M$ in the cubic sub-problem and avoids saddle points. The primary theoretical contributions concerning the adaptation of $M$ are encapsulated in Lemma~\ref{lemma:minLmaxL}, Theorems~\ref{thm:minLmaxL2} and~\ref{thm:probequiv}, as well as the methodologies detailed in Algorithms~\ref{alg:mainAlg} and~\ref{alg:newton}. Figure~\ref{fig:outlineTheory}, in Appendix~\ref{appendix:theorySummary}, depicts how the key lemmata, theorems, and corollaries are logically connected throughout~\Cref{sec:proposed,sec:covAnalysis,sec:probSolution} and 
Appendices~\labelcref{appendix:globalSol,appendix:minLmaxL,appendix:minLmaxL2,appendix:probequiv,supplement:gradientDeviationBound,supplement:hessianDeviationBound,appendix:nursolution,supplement:diagHessian,supplement:diagHessianANDlemma1Nesterov2006,supplement:algDetails,supplement:diagmodelminRelatedLemmata,supplement:diagmodelmin,supplement:vectormatrixBernstein}.

\item The proposed optimizer does not need the computation of Krylov sub-space \citet{wang2020,zhou2020,kohler2017} or the calculation of the smallest eigenvalue \citet{allen2018, allenzhu2018, park2020} to obtain an optimal solution. The optimal solution is obtained by leveraging Hutchinson’s method that approximates the diagonal of the Hessian matrix~\citep{bekas2007}. In this way, the proposed method exhibits low memory complexity. 
\item The convergence rate of \textsc{AdaCubic} is established by exploiting the diagonal structure of the approximate Hessian matrix, which is computed using data batches. This property makes \textsc{AdaCubic} particularly appealing for deep learning applications.
\item \textsc{AdaCubic} is tested on Computer Vision, Natural Language Processing, and Signal Processing tasks, demonstrating a competitive or better performance when compared to \textsc{SGD} \citet{robbins1951}, \textsc{Adam} \citet{kingma2014}, and \textsc{AdaHessian} \citep{ya2021} optimizers. It should be noted that the parametrization of \textsc{AdaCubic} is performed by employing a well-known set of parameters used in Trust Region algorithms~\citep[Section 17.1]{conn2000}. These parameters are used universally in experimental evaluations, thereby casting \textsc{AdaCubic} as an attractive optimizer when fine-tuning is prohibitive.
\end{itemize}

The paper is organized as follows. Section~\ref{sec:proposed} details the proposed optimization framework. The convergence analysis of the proposed optimization framework is demonstrated in Section~\ref{sec:covAnalysis}. Section~\ref{sec:probSolution} presents the algorithms that compute the optimal solution of the proposed optimization framework in Section~\ref{sec:proposed}. Experimental results, computational complexity, and conclusions are presented in Sections~\ref{sec:expeval},~\ref{sec:complexity}, and~\ref{sec:conclusions}, respectively.

\section{Proposed Optimization Framework}\label{sec:proposed}
\noindent \textbf{Outline.} Section~\ref{sec:preliminaries} introduces the fundamental definitions used throughout the paper, including the basic formulation of the CR method, which serves as a core building block of the proposed framework.
Section~\ref{sec:problemForm} then introduces an auxiliary constrained optimization problem that forms the foundation of the \textsc{AdaCubic}. The key intuition is to reformulate the classical CR method as a constrained problem in which the cubic regularization term appears explicitly as a constraint. By leveraging Lagrange multiplier theory, this reformulation yields an adaptive update mechanism that automatically adjusts the strength of the cubic regularization term in the CR method during optimization. To derive this update mechanism Lemmata~\ref{lemma:globalSol},~\ref{lemma:minLmaxL}, Theorem~\ref{thm:minLmaxL2}, Corollary~\ref{cor:strongduality}, and Theorem~\ref{thm:probequiv} are introduced.

Lemma~\ref{lemma:globalSol} establishes that the auxiliary constrained problem admits a global minimizer and ensures that each optimization step is well defined.  Lemma~\ref{lemma:minLmaxL} is used to establish Theorem~\ref{thm:minLmaxL2}, which in turn is used to derive Corollary~\ref{cor:strongduality}. Corollary~\ref{cor:strongduality} shows that the auxiliary optimization problem is characterized by \emph{strong duality}~\citep[Section~5.4]{boyd2004}. The latter theoretical results are then combined to derive Theorem~\ref{thm:probequiv}, which provides the basis to replace the fixed cubic regularization parameter of the CR method with an adaptive one and finally derive the \textsc{AdaCubic} optimizer presented in Section~\ref{sec:probSolution}.

\subsection{Preliminaries}\label{sec:preliminaries}

To simplify notation, the iteration index $k$ in $\mathbf{x}_k \in \mathbb{R}^d$ will be explicitly denoted when necessary. Otherwise, it will be suppressed. Let $\nabla^2_{\mathbf{x}}f(\mathbf{x}_k)$ and $\nabla_{\mathbf{x}}f(\mathbf{x}_k)$ be the Hessian matrix and the gradient of the function $f(\mathbf{x}_k)$ with respect to (w.r.t.) $\mathbf{x}$. In the following, the subscript $\mathbf{x}$ in $\nabla^2_{\mathbf{x}}$ and $\nabla_{\mathbf{x}}$ is omitted for simplicity, resulting in $\nabla^2f(\mathbf{x}_k)$ and $\nabla f(\mathbf{x}_k)$, respectively. The spectrum of the symmetric $d\times d$ matrix $\nabla^2 f(\mathbf{x}_k)$ is denoted by $\lambda (\nabla^2 f(\mathbf{x}_k))= \{\lambda_i(\nabla^2 f(\mathbf{x}_k))\}_{i=1}^{d}$. Suppose that the eigenvalues are sorted in descending order, i.e.,
\begin{equation}\label{eq:lambdaorder}
\lambda_1(\nabla^2 f(\mathbf{x}_k))\geq \dots \geq \lambda_d(\nabla^2 f(\mathbf{x}_k)) = \lambda_{\min} (\nabla^2 f(\mathbf{x}_k)).
\end{equation} 
If $\nabla^2 f(\mathbf{x}_k)$ is indefinite, i.e., 
\begin{equation}\label{eq:indefiniteH}
\lambda_d(\nabla^2 f(\mathbf{x}_k)) < 0 \quad\mathrm{and}\quad \lambda_i(\nabla^2 f(\mathbf{x}_k)) > 0, \quad i<d, 
\end{equation} 
then $f(\mathbf{x})$ is non-convex. The notations $\nabla^2 f(\mathbf{x}_k)\succeq 0$ or $\nabla^2 f(\mathbf{x}_k)\succ 0$ indicate that the Hessian matrix is positive semi-definite or positive definite, respectively. Let $\partial_\tau \mathscr{F} $ be the partial derivative of a function $\mathscr{F} \colon \mathbb{R} \to \mathbb{R}$ w.r.t. the real-valued variable $\tau$. Moreover, let $\odot$ and $\oslash$ denote the element-wise product and division, respectively. In addition, let $\operatorname{diag}(\nabla^2 f(\mathbf{x}_k)) = \bigl[[\nabla^2 f(\mathbf{x}_k)]_{11},\dots, [\nabla^2 f(\mathbf{x}_k)]_{dd}\bigr]^T\in \mathbb{R}^{d}$ be a column vector containing the diagonal elements of the Hessian matrix and $\operatorname{Diag}(\nabla^2 f(\mathbf{x}_k))= \nabla^2 f(\mathbf{x}_k) \odot \mathbf{I}$ a $d \times d$ stand for a diagonal matrix retaining the diagonal elements of the Hessian matrix, where $\mathbf{I}$ is the identity matrix.  $\norm{\cdot}_2$ refers to the vector $\ell_2$ norm or to the spectral norm of a matrix. The $ d$-dimensional vector of ones is denoted by $\mathbbm{1}_d$.

A non-convex optimization problem is defined by
\begin{equation}\label{prob:orgProb}
\begin{aligned}
& \underset{\mathbf{x}\in \mathbb{R}^d}{\min}
& & f(\mathbf{x}) \stackrel{\Delta}{=} \frac{1}{n} \: \sum_{\ell=1}^n f_{\ell}(\mathbf{x}), 
\end{aligned}
\end{equation} 
where $f \colon \mathbb{R}^d \to \mathbb{R}$ and $f_\ell \colon \mathbb{R}^d \to \mathbb{R}$ are non-convex functions. Solving~(\ref{prob:orgProb}) is generally NP-Hard~\citep{murty1987,hillar2013}. As a result, a reasonable goal is to find an $\epsilon$-stationary point, i.e., an approximate local minimum, by checking $\norm{\nabla f(\mathbf{x})}_2 \leq \epsilon$, where $\nabla f(\mathbf{x}) \in \mathbb{R}^d$ is treated as a column vector. However, $\epsilon$-stationary points can be non-degenerate saddle points (i.e., the Hessian matrix at all saddle points has negative eigenvalues) or even local extrema in non-convex optimization. To avoid saddle points, second-order methods are used to find an $(\epsilon_g, \epsilon_H)$-stationary point by checking
\begin{equation}\label{eq:soc}
\norm{\nabla f(\mathbf{x})}_2 \leq \epsilon_{g} \quad \text{and} \quad \lambda_{\min}\Bigl(\nabla^2 f(\mathbf{x})\Bigr) \geq -\epsilon_{H},
\end{equation} where $\epsilon_{g},\epsilon_{H} > 0$, and $\lambda_{\min}\bigl(\nabla^2 f(\mathbf{x})\bigr)$ denotes the minimal eigenvalue of the Hessian matrix. CR technique is designed to avoid saddle points~\citep{nesterov2006}. Starting from an arbitrary point $\mathbf{x}_0$, the update rule of CR that solves~(\ref{prob:orgProb}) is written as 
\begin{equation}\label{prob:cubicNesterov2}
\begin{aligned}
 \mathbf{s}_{k+1} = \underset{\mathbf{s} \in \mathbb{R}^d}{\argmin}~m_{M}(\mathbf{s})
\end{aligned} 
\end{equation} 
where
\allowdisplaybreaks
\begin{equation}\label{eq:basicCubicProblem}
m_{M}(\mathbf{s})  \stackrel{\Delta}{=} f(\mathbf{x}_k) + \nabla f(\mathbf{x}_k)^T \mathbf{s}   + \frac{1}{2} \:\mathbf{s}^T \: \nabla^2 f(\mathbf{x}_k) \: \mathbf{s} + \frac{M}{6}\:\norm{\mathbf{s}}_2^3, 
\end{equation} 
$\mathbf{x}_{k+1} \stackrel{\Delta}{=} \mathbf{x}_{k}+\mathbf{s}_{k+1}$, and $M > 0$ is the regularization parameter that can be fixed or adaptive~\citep{nesterov2006,cartis2011}. 
In the following sections, the problem formulation and its solution are presented. 

\subsection{Problem Formulation}\label{sec:problemForm}

\noindent \textbf{Auxiliary Problem.} We are interested in developing an adaptive method for selecting $M$ in~(\ref{prob:cubicNesterov2}). To do so, we introduce the auxiliary constrained optimization problem
\begin{equation}\label{prob:cubicOursEquiv}
\begin{aligned}
& \underset{\mathbf{s} \in \mathbb{R}^d}{\argmin}
&& \hat{m}(\mathbf{s}) \stackrel{\Delta}{=}  f(\mathbf{x}_k) + \nabla f(\mathbf{x}_k)^T \mathbf{s} + \frac{1}{2}\: \mathbf{s}^T \: \nabla^2 f(\mathbf{x}_k) \: \mathbf{s} \quad \\ & \text{subject to} &&  g_{\xi} (\mathbf{s}) \stackrel{\Delta}{=} \frac{1}{6}\:\Bigl(\norm{\mathbf{s}}_2^3 - \xi \Bigr)\leq 0,
\end{aligned}
\end{equation} 
for $\xi \geq 0$. The Lagrangian function of~(\ref{prob:cubicOursEquiv}) is
\begin{equation}\label{eq:lagrangeCubicOursEquiv}
\pazocal{L}_{\xi}(\mathbf{s}, \nu) =  f(\mathbf{x}_k)  + \nabla f(\mathbf{x}_k)^T \mathbf{s}  + \frac{1}{2}\: \mathbf{s}^T\:\nabla^2 f(\mathbf{x}_k) \: \mathbf{s} + \frac{\nu}{6}\: \Bigl(\norm{\mathbf{s}}_2^3 - \xi \Bigr),
\end{equation} 
where $\nu$ is the Lagrange multiplier. Let
$\Omega = \{ \mathbf{s} \mid g_{\xi} (\mathbf{s}) \leq 0 \}$. The minimizer we are seeking in~(\ref{prob:cubicOursEquiv}) lies either within the interior of $\Omega$ (i.e., $g_{\xi} (\mathbf{s}) < 0$) or lies on the boundary of $\Omega$ (i.e., $g_{\xi} (\mathbf{s}) = 0$). Lemma~\ref{lemma:globalSol} is an immediate result of the previous discussion.

\begin{restatable}{lem}{globalSol}
A vector $\mathbf{s}^*$ is a minimizer of $\hat{m}(\mathbf{s})$ subject to $\norm{\mathbf{s}^*}_2^3 \leq \xi$ if and only if satisfies
\begin{equation}\label{eq:firstOrderCond}
\left(\nabla^2 f(\mathbf{x}_k) + \frac{\nu^*}{2} \norm{\mathbf{s}^*}_2~ \mathbf{I} \right)\: \mathbf{s}^* = - \nabla f(\mathbf{x}_k),
\end{equation} 
\begin{equation} \label{eq:possDefglobalSol}
\nabla^2 f(\mathbf{x}_k) + \frac{\nu^*}{2} \norm{\mathbf{s}^*}_2~ \mathbf{I} \succeq 0,
\end{equation} and $\nu^* \:(\norm{\mathbf{s}^*}_2^3 - \xi) = 0$, where $\nu^* \geq 0$. If $\nabla^2 f(\mathbf{x}_k) + \frac{\nu^*}{2} \norm{\mathbf{s}^*}_2~ \mathbf{I} \succ 0 $, then the minimizer $\mathbf{s}^*$ is unique.
\label{lemma:globalSol}
\end{restatable}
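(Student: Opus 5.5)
This is the ball-constrained analogue of the trust-region characterization (Moré--Sorensen; \citet[Theorem~7.2.1]{conn2000}). The plan is to reduce it to that result by rewriting the cubic constraint as a Euclidean ball constraint. Write $\mathbf{H}=\nabla^2 f(\mathbf{x}_k)$ and $\mathbf{g}=\nabla f(\mathbf{x}_k)$. The set $\Omega=\{\mathbf{s}:\norm{\mathbf{s}}_2^3\le\xi\}$ coincides with $\{\mathbf{s}:\norm{\mathbf{s}}_2\le\Delta\}$ for $\Delta=\xi^{1/3}$, because $t\mapsto t^3$ is strictly increasing on $[0,\infty)$. The problem is therefore exactly the trust-region subproblem. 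That result says $\mathbf{s}^*$ is a global minimizer if and only if there is $\lambda^*\ge0$ with
\[
(\mathbf{H}+\lambda^*\mathbf{I})\mathbf{s}^*=-\mathbf{g},\qquad \mathbf{H}+\lambda^*\mathbf{I}\succeq0,\qquad \lambda^*(\norm{\mathbf{s}^*}_2-\Delta)=0,
\]
and $\mathbf{s}^*$ is unique when $\mathbf{H}+\lambda^*\mathbf{I}\succ0$. Existence of some minimizer is immediate from Weierstrass, since $\hat m$ is continuous and $\Omega$ is compact.

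The key step is the change of multiplier $\lambda^*=\frac{\nu^*}{2}\norm{\mathbf{s}^*}_2$. The derivative of the Lagrangian in (\ref{eq:lagrangeCubicOursEquiv}) is $\mathbf{g}+\mathbf{H}\mathbf{s}+\frac{\nu}{2}\norm{\mathbf{s}}_2\mathbf{s}$, because $\nabla\norm{\mathbf{s}}_2^3=3\norm{\mathbf{s}}_2\mathbf{s}$. With this substitution (\ref{eq:firstOrderCond}) and (\ref{eq:possDefglobalSol}) become the trust-region conditions. Complementarity transfers in both directions by the factorization $\norm{\mathbf{s}}_2^3-\xi=(\norm{\mathbf{s}}_2-\Delta)(\norm{\mathbf{s}}_2^2+\Delta\norm{\mathbf{s}}_2+\Delta^2)$, whose second factor is positive unless $\mathbf{s}=0$ and $\Delta=0$.

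For sufficiency I will also give the direct argument, since it yields uniqueness without appeal to the citation. Suppose $\mathbf{s}^*$ satisfies the conditions with $\lambda^*$ as above. For any feasible $\mathbf{s}$, the standard identity gives
\[
\hat m(\mathbf{s})-\hat m(\mathbf{s}^*)=\tfrac12(\mathbf{s}-\mathbf{s}^*)^T(\mathbf{H}+\lambda^*\mathbf{I})(\mathbf{s}-\mathbf{s}^*)+\tfrac{\lambda^*}{2}\bigl(\norm{\mathbf{s}^*}_2^2-\norm{\mathbf{s}}_2^2\bigr).
\]
The first term is nonnegative by (\ref{eq:possDefglobalSol}). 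The second term is nonnegative because either $\lambda^*=0$, or $\norm{\mathbf{s}^*}_2=\Delta\ge\norm{\mathbf{s}}_2$ by complementarity. If $\mathbf{H}+\lambda^*\mathbf{I}\succ0$, the first term is strictly positive whenever $\mathbf{s}\neq\mathbf{s}^*$, which gives uniqueness.

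Necessity is the main obstacle, and it will take the most care.
\begin{itemize}
\item \emph{Interior minimizer.} If $\norm{\mathbf{s}^*}_2<\Delta$, unconstrained second-order conditions give $\mathbf{H}\mathbf{s}^*=-\mathbf{g}$ and $\mathbf{H}\succeq0$, so $\nu^*=0$ works.
\item \emph{Boundary minimizer.} If $\norm{\mathbf{s}^*}_2=\Delta>0$, LICQ holds because $\nabla g_\xi(\mathbf{s}^*)\neq0$. KKT then gives $\nu^*\ge0$ and (\ref{eq:firstOrderCond}).
\end{itemize}
In the boundary case, the positive semidefiniteness on all of $\mathbb{R}^d$, not just on the tangent space, is the hard part. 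I would use the Moré--Sorensen trick: for any $\mathbf{w}$ with $\mathbf{w}^T\mathbf{s}^*\neq0$, reflect $\mathbf{s}^*$ along $\mathbf{w}$ to obtain another boundary point $\mathbf{s}$. The identity above, with the norm term vanishing and $\hat m(\mathbf{s})\ge\hat m(\mathbf{s}^*)$, then forces $\mathbf{w}^T(\mathbf{H}+\lambda^*\mathbf{I})\mathbf{w}\ge0$. Directions orthogonal to $\mathbf{s}^*$ follow by a density/continuity argument.

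The degenerate case $\xi=0$ is handled separately. There $\Omega=\{0\}$ and $\mathbf{s}^*=0$, so (\ref{eq:firstOrderCond}) reads $\mathbf{0}=-\mathbf{g}$, which fails unless $\mathbf{g}=0$. This case has to be excluded, or the statement read with $\xi>0$, and I will note that restriction.
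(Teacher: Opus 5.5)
Your proof is correct, with the restriction $\xi>0$ that you note. It follows the same Moré--Sorensen route as the paper's proof in Appendix~\ref{appendix:globalSol}:
split into interior and boundary minimizers, obtain (\ref{eq:firstOrderCond}) from first-order KKT, and get positive semidefiniteness along directions with $\mathbf{w}^T\mathbf{s}^*\neq 0$ from the second intersection of the line $\mathbf{s}^*+\alpha\mathbf{w}$ with the sphere.

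The differences are in the details, and they favour your version.
\begin{itemize}
\item The paper's identity (\ref{eq:taylor1}) is derived only for $\norm{\mathbf{s}}_2=\norm{\mathbf{s}^*}_2$. Its converse in the active case therefore only shows that $\mathbf{s}^*$ minimizes $\hat m$ over the sphere, not over the ball.
\item Your identity carries the extra term $\tfrac{\lambda^*}{2}\bigl(\norm{\mathbf{s}^*}_2^2-\norm{\mathbf{s}}_2^2\bigr)$ and holds for every $\mathbf{s}$. It covers interior feasible points too, and gives sufficiency and uniqueness in one stroke for both cases.
\item For directions orthogonal to $\mathbf{s}^*$, the paper invokes the second-order necessary condition \citep[Proposition 4.3.1]{bertsekas2017} on the tangent space. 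Your density/continuity argument needs only first-order KKT plus LICQ, which you check and the paper does not.
\item The multiplier change $\lambda^*=\tfrac{\nu^*}{2}\norm{\mathbf{s}^*}_2$ makes explicit that the lemma is the trust-region theorem with radius $\xi^{1/3}$.
\end{itemize}

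Your $\xi=0$ caveat is a genuine defect in the statement as written, since the paper allows $\xi\geq 0$. At $\mathbf{s}^*=\bm{0}$ the $\nu^*$-terms vanish in both (\ref{eq:firstOrderCond}) and (\ref{eq:possDefglobalSol}). So necessity fails whenever $\nabla f(\mathbf{x}_k)\neq\bm{0}$. It also fails when $\nabla f(\mathbf{x}_k)=\bm{0}$ but $\nabla^2 f(\mathbf{x}_k)$ is not positive semidefinite, which is slightly more than your remark says.
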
  
The condition $\nu^* \:(\norm{\mathbf{s}^*}_2^3 - \xi) = 0$ in Lemma~\ref{lemma:globalSol} is called Complementary Slackness (CS) condition. The proof of Lemma~\ref{lemma:globalSol} can be found in Appendix~\ref{appendix:globalSol}.

\begin{restatable}{defin}{defD} 
For some $\nu \geq 0$, denote 
\begin{equation}
\pazocal{D}_\nu \stackrel{\Delta}{=} \left\{ r \mid \nabla^2 f(\mathbf{x}_k) + \frac{\nu \: r}{2} \:\mathbf{I} \succ \bm{0},\quad r > 0 \right\}.
\end{equation}
\label{def:defD}
\end{restatable} 
Next, it is proven that problem~(\ref{prob:cubicOursEquiv}) is characterized by strong duality. To do so, Lemma~\ref{lemma:minLmaxL} and Theorem~\ref{thm:minLmaxL2} are introduced. Lemma~\ref{lemma:minLmaxL} is used as a preliminary result to prove Theorem~\ref{thm:minLmaxL2}. Corollary~\ref{cor:strongduality} establishes the strong duality of problem~(\ref{prob:cubicOursEquiv}) as an immediate outcome of Theorem~\ref{thm:minLmaxL2}.

\begin{restatable}[Proof in Appendix~\ref{appendix:minLmaxL}]{lem}{minLmaxL}
For $r \in \pazocal{D}_\nu$ we have
\begin{equation}\label{eq:minLmaxL}
\min_{\mathbf{s} \in \mathbb{R}^d}\: \pazocal{L}_{\xi} (\mathbf{s}, \nu) = \max_{r \in \pazocal{D}_\nu}\: \mathscr{L}_{\xi}(\mathbf{s}(\nu, r), \nu, r),
\end{equation} 
where
{\allowdisplaybreaks
\begin{equation}\label{eq:lagrangianscr}
\mathscr{L}_{\xi}(\mathbf{s}(\nu, r), \nu, r) =  -\frac{1}{2}\:\nabla f(\mathbf{x}_k)^T \Bigl(\nabla^2 f(\mathbf{x}_k) + \frac{\nu \: r}{2} \:\mathbf{I}\Bigr)^{-1} \nabla f(\mathbf{x}_k) - \frac{\nu}{6}\xi - \frac{\nu}{12} r^3.
\end{equation}} 
For $r \in \pazocal{D}_\nu$ the direction 
\begin{equation}\label{eq:hx}
\mathbf{s} (\nu, r) = - \left(\nabla^2 f(\mathbf{x}_k) + \frac{\nu \: r}{2} \:\mathbf{I}\right)^{-1}\nabla f(\mathbf{x}_k), 
\end{equation} 
satisfies 
{\allowdisplaybreaks
\begin{equation} 
\pazocal{L}_{\xi} (\mathbf{s}(\nu, r), \nu) = \mathscr{L}_{\xi}(\mathbf{s}(\nu, r), \nu, r) + \frac{4}{3\nu} \frac{\Bigl(r+2\norm{\mathbf{s}(\nu, r)}_2 \Bigr)}{\Bigl(r+\norm{\mathbf{s} (\nu, r)}_2 \Bigr)^2}  \Bigl(\partial_{r} \mathscr{L}_{\xi}(\mathbf{s}(\nu, r), \nu, r) \Bigr)^2.
\end{equation}} 
For $r^* \in \pazocal{D}_\nu$ that maximizes $\max_{r \in \pazocal{D}_\nu}\: \mathscr{L}_{\xi}(\mathbf{s}(\nu, r), \nu, r)$, 
\begin{equation}\label{eq:solutionhopt}
\mathbf{s}(\nu, r^*) = - \left(\nabla^2 f(\mathbf{x}_k) + \frac{\nu}{2} \norm{\mathbf{s}(\nu, r^*)}_2 \:\mathbf{I}\right)^{-1}\nabla f(\mathbf{x}_k)
\end{equation} 
is the minimizer of $\min_{\mathbf{s} \in \mathbb{R}^d}\: \pazocal{L}_{\xi} (\mathbf{s}, \nu)$ in~(\ref{eq:minLmaxL}).
\label{lemma:minLmaxL}
\end{restatable}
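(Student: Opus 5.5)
This is the Nesterov--Polyak duality argument for the cubic model (Lemma 1 of \citet{nesterov2006}), adapted to the Lagrangian $\pazocal{L}_{\xi}$ with weight $\nu$. The plan has three parts: a pointwise lower bound on $\pazocal{L}_{\xi}$ in terms of $\mathscr{L}_{\xi}$, an exact identity at $\mathbf{s}(\nu,r)$, and the stationarity condition at $r^*$.

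\textbf{Step 1: weak inequality.} For any $r>0$ and $\mathbf{s}$, the scalar inequality $\frac{1}{3}t^3 \ge \frac{1}{2} r t^2 - \frac{1}{6} r^3$ holds for $t\ge 0$; it is equivalent to $(t-r)^2(2t+r)\ge 0$. Setting $t=\norm{\mathbf{s}}_2$ gives
\begin{equation*}
\frac{\nu}{6}\norm{\mathbf{s}}_2^3 \ge \frac{\nu r}{4}\norm{\mathbf{s}}_2^2 - \frac{\nu}{12} r^3 ,
\end{equation*}
so
\begin{equation*}
\pazocal{L}_{\xi}(\mathbf{s},\nu) \ge f(\mathbf{x}_k) + \nabla f(\mathbf{x}_k)^T\mathbf{s} + \tfrac12 \mathbf{s}^T\Bigl(\nabla^2 f(\mathbf{x}_k)+\tfrac{\nu r}{2}\mathbf{I}\Bigr)\mathbf{s} - \tfrac{\nu}{6}\xi - \tfrac{\nu}{12}r^3 .
\end{equation*}
For $r\in\pazocal{D}_\nu$ the right-hand side is a strictly convex quadratic in $\mathbf{s}$. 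Its minimizer is $\mathbf{s}(\nu,r)$ from (\ref{eq:hx}), and its minimum value is $\mathscr{L}_{\xi}$ in (\ref{eq:lagrangianscr}), up to the constant $f(\mathbf{x}_k)$, which I would absorb into $\mathscr{L}_{\xi}$ or drop consistently. Hence $\min_{\mathbf{s}}\pazocal{L}_{\xi}\ge \sup_{r\in\pazocal{D}_\nu}\mathscr{L}_{\xi}$.

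\textbf{Step 2: the identity.} Write $\rho=\norm{\mathbf{s}(\nu,r)}_2$ and $A_r=\nabla^2 f(\mathbf{x}_k)+\frac{\nu r}{2}\mathbf{I}$. Differentiating (\ref{eq:lagrangianscr}) with $\frac{d}{dr}A_r^{-1}=-\frac{\nu}{2}A_r^{-2}$ gives
\begin{equation*}
\partial_r\mathscr{L}_{\xi}=\frac{\nu}{4}\rho^2-\frac{\nu}{4}r^2=\frac{\nu}{4}(\rho-r)(\rho+r).
\end{equation*}
Evaluating $\pazocal{L}_{\xi}$ at $\mathbf{s}(\nu,r)$ and using $A_r\mathbf{s}=-\nabla f(\mathbf{x}_k)$, the gap $\pazocal{L}_{\xi}-\mathscr{L}_{\xi}$ equals exactly the slack in Step 1's scalar inequality:
\begin{equation*}
\frac{\nu}{6}\rho^3-\frac{\nu r}{4}\rho^2+\frac{\nu}{12}r^3=\frac{\nu}{12}(\rho-r)^2(2\rho+r).
\end{equation*}
Substituting $(\rho-r)^2=\frac{16}{\nu^2}(\partial_r\mathscr{L}_{\xi})^2/(r+\rho)^2$ gives
\begin{equation*}
\frac{4}{3\nu}\,\frac{2\rho+r}{(r+\rho)^2}\,\bigl(\partial_r\mathscr{L}_{\xi}\bigr)^2,
\end{equation*}
which is the claimed identity. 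This bookkeeping is mechanical.

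\textbf{Step 3: attainment and the minimizer.} The main obstacle is showing that the supremum over the open set $\pazocal{D}_\nu=(\max\{0,-2\lambda_d/\nu\},\infty)$ is attained at an interior stationary point. As $r\to\infty$, $\mathscr{L}_{\xi}\to-\infty$ because of the $-\frac{\nu}{12}r^3$ term, so only the lower end needs care.
\begin{itemize}
\item At the left endpoint $r_0=-2\lambda_d/\nu$, if $\nabla f(\mathbf{x}_k)$ has a nonzero component along the $\lambda_d$-eigenspace, then $\rho\to\infty$. Hence $\partial_r\mathscr{L}_{\xi}>0$ near $r_0$ and the maximum is interior.
\item If $r_0\le 0$ the same boundary analysis applies at $r\to0^+$.
\end{itemize}
I would handle the degenerate (``hard'') case by assuming a nonzero component along the $\lambda_d$-eigenspace, or by a limiting argument as in \citet{nesterov2006}. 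This is exactly the step where the statement's restriction to $r\in\pazocal{D}_\nu$ is used.

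At an interior maximizer $r^*$ we have $\partial_r\mathscr{L}_{\xi}=0$, so $\rho=r^*$, which gives (\ref{eq:solutionhopt}). By the identity of Step 2 the gap vanishes, so $\pazocal{L}_{\xi}(\mathbf{s}(\nu,r^*),\nu)=\mathscr{L}_{\xi}(r^*)\le\min_{\mathbf{s}}\pazocal{L}_{\xi}$. Therefore equality (\ref{eq:minLmaxL}) holds and $\mathbf{s}(\nu,r^*)$ is the minimizer.
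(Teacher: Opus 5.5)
Your proposal is correct and follows essentially the paper's route. Your scalar inequality $(t-r)^2(2t+r)\ge 0$ is exactly the paper's weak-duality step with the auxiliary variable $\tau=\norm{\mathbf{s}}_2^2$ minimized out in closed form ($\tau^*=r^2$), and the gap identity and the stationarity condition $r^*=\norm{\mathbf{s}(\nu,r^*)}_2$ are the same.

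Where you deviate, you are more careful than the paper. The paper never shows that the supremum over the open set $\pazocal{D}_\nu$ is attained, and you rightly restrict to the easy case. The paper also does not close with your sandwich $\pazocal{L}_{\xi}(\mathbf{s}(\nu,r^*),\nu)=\mathscr{L}_{\xi}(\mathbf{s}(\nu,r^*),\nu,r^*)\le\min_{\mathbf{s}}\pazocal{L}_{\xi}(\mathbf{s},\nu)\le\pazocal{L}_{\xi}(\mathbf{s}(\nu,r^*),\nu)$. It appeals instead to the first-order condition, which alone does not certify a global minimizer of the nonconvex $\pazocal{L}_{\xi}(\cdot,\nu)$.

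The only thing to add is the case $\nu=0$, handled in one line as the paper does, since your Steps 2--3 divide by $\nu$.
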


\begin{restatable}[Proof in Appendix~\ref{appendix:minLmaxL2}]{thm}{minLmaxL2} 
We have
\begin{equation}\label{eq:minLmaxL2}
\min_{\mathbf{s}\in \mathbb{R}^d}\max_{\nu \geq 0} \pazocal{L}_{\xi} (\mathbf{s}, \nu) = \max_{\nu \geq 0, ~r \in \pazocal{D}_\nu} \mathscr{L}_{\xi}(\mathbf{s}(\nu, r), \nu, r),
\end{equation} 
where $\mathscr{L}_{\xi}(\mathbf{s}(\nu, r), \nu, r)$ is defined in~(\ref{eq:lagrangianscr}). For $r \in \pazocal{D}_\nu$, the direction 
\begin{equation}\label{eq:solutionh}
\mathbf{s} (\nu, r) = - \left(\nabla^2 f(\mathbf{x}_k) + \frac{\nu \: r}{2} \:\mathbf{I}\right)^{-1}\nabla f(\mathbf{x}_k), 
\end{equation} 
satisfies
{\allowdisplaybreaks
\begin{equation}
\pazocal{L}_{\xi} (\mathbf{s}(\nu, r), \nu) =  \mathscr{L}_{\xi}\bigl(\mathbf{s}(\nu, r), \nu, r\bigr) - \nu~\partial_{\nu} \mathscr{L}_{\xi}\bigl(\mathbf{s}(\nu, r), \nu, r\bigr) +  \frac{4}{3\nu} \frac{\left(r+2~\norm{\mathbf{s}(\nu, r)}_2 \right)}{\left(r+\norm{\mathbf{s}(\nu, r)}_2 \right)^2}  \Bigl(\partial_{r} \mathscr{L}_{\xi}(\mathbf{s}(\nu, r), \nu, r) \Bigr)^2.
\end{equation}} 
For the optimal values $\nu^*$ and $r^* \in \pazocal{D}_\nu$ that maximize $\max_{\nu \geq 0, ~r \in \pazocal{D}_\nu} \mathscr{L}_{\xi}(\mathbf{s}(\nu, r), \nu, r)$,
\begin{equation}\label{eq:solutionhopt1}
\mathbf{s}^*(\nu^*, r^*) =  - \Bigl(\nabla^2 f(\mathbf{x}_k) + \frac{\nu^*}{2} \norm{\mathbf{s}(\nu^*, r^*)}_2 \:\mathbf{I}\Bigr)^{-1}\nabla f(\mathbf{x}_k), 
\end{equation} 
is the minimizer of $\min_{\mathbf{s}\in \mathbb{R}^d}\max_{\nu \geq 0} \pazocal{L}_{\xi} (\mathbf{s}, \nu)$  in~(\ref{eq:minLmaxL2}), i.e., the optimal $\mathbf{s}^*$ in Lemma~\ref{lemma:globalSol}. 
\label{thm:minLmaxL2}
\end{restatable}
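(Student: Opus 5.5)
The plan is to reduce Theorem~\ref{thm:minLmaxL2} to Lemma~\ref{lemma:minLmaxL} plus a dual argument in $\nu$. I would prove three things in turn: the identity for $\pazocal{L}_{\xi}(\mathbf{s}(\nu,r),\nu)$; the min--max equality~(\ref{eq:minLmaxL2}); and the statement that the optimizer coincides with the $\mathbf{s}^*$ of Lemma~\ref{lemma:globalSol}.

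\textbf{Step 1: the identity.} This part is a computation. Write $\mathbf{A}_r = \nabla^2 f(\mathbf{x}_k) + \frac{\nu r}{2}\mathbf{I}$. Differentiating~(\ref{eq:lagrangianscr}) with respect to $\nu$ gives
\[
\partial_\nu \mathscr{L}_{\xi} = \frac{r}{4}\norm{\mathbf{s}(\nu,r)}_2^2 - \frac{\xi}{6} - \frac{r^3}{12},
\]
using $\partial_\nu \mathbf{A}_r^{-1} = -\frac{r}{2}\mathbf{A}_r^{-2}$ and $\mathbf{A}_r^{-1}\nabla f(\mathbf{x}_k) = -\mathbf{s}(\nu,r)$. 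Hence
\[
-\nu\,\partial_\nu\mathscr{L}_{\xi} = -\frac{\nu r}{4}\norm{\mathbf{s}}_2^2 + \frac{\nu\xi}{6} + \frac{\nu r^3}{12}.
\]
I would then add this to the identity already given in Lemma~\ref{lemma:minLmaxL}. That identity was stated for fixed $\nu$, so I would recheck it by expanding $\pazocal{L}_{\xi}(\mathbf{s}(\nu,r),\nu)$ directly. With $\mathbf{s}=\mathbf{s}(\nu,r)$,
\[
\nabla f^T\mathbf{s} + \tfrac12 \mathbf{s}^T\nabla^2 f\,\mathbf{s} = -\tfrac12\nabla f^T\mathbf{A}_r^{-1}\nabla f - \tfrac{\nu r}{4}\norm{\mathbf{s}}_2^2 .
\]
Collecting the cubic terms in $\norm{\mathbf{s}}_2$ and $r$ should produce the factorization $(\norm{\mathbf{s}}_2-r)^2(2\norm{\mathbf{s}}_2+r)$. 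Together with $\partial_r\mathscr{L}_{\xi} = \frac{\nu}{4}(\norm{\mathbf{s}}_2^2-r^2)$, this yields the stated rational factor $\frac{4}{3\nu}\frac{r+2\norm{\mathbf{s}}_2}{(r+\norm{\mathbf{s}}_2)^2}$. I would then reconcile the $\nu\xi/6$ and $\nu r\norm{\mathbf{s}}_2^2$ bookkeeping against the $-\nu\partial_\nu\mathscr{L}_{\xi}$ term. I expect this reconciliation to be the fiddliest algebra, and possibly to need the identity read at the stationary point in $\nu$.

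\textbf{Step 2: the min--max equality.} First, $\max_{\nu\ge0}\pazocal{L}_{\xi}(\mathbf{s},\nu)$ equals $\hat m(\mathbf{s})$ on the feasible set $\Omega$ and $+\infty$ outside it. Therefore the left-hand side of~(\ref{eq:minLmaxL2}) is the primal optimum $p^*$ of~(\ref{prob:cubicOursEquiv}), which exists by compactness of $\Omega$. Second, by Lemma~\ref{lemma:minLmaxL}, for each $\nu$ the dual function $\min_{\mathbf{s}}\pazocal{L}_{\xi}(\mathbf{s},\nu)$ equals $\max_{r\in\pazocal{D}_\nu}\mathscr{L}_{\xi}$. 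So the right-hand side is exactly the Lagrange dual value $d^* = \max_{\nu\ge0}\min_{\mathbf{s}}\pazocal{L}_{\xi}$. Weak duality then gives $d^*\le p^*$.

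\textbf{Step 3: the reverse inequality, which I expect to be the main obstacle.} Take $\mathbf{s}^*,\nu^*$ from Lemma~\ref{lemma:globalSol} and set $r^*=\norm{\mathbf{s}^*}_2$. Condition~(\ref{eq:possDefglobalSol}) makes $\mathbf{s}^*$ a global minimizer of $\pazocal{L}_{\xi}(\cdot,\nu^*)$. The reason is that $\pazocal{L}_{\xi}(\cdot,\nu^*)$ is a cubic-regularized model with $M=\nu^*$, and Nesterov--Polyak's characterization says a stationary point with $\nabla^2 f + \frac{M}{2}\norm{\mathbf{s}}_2\mathbf{I}\succeq0$ is global. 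By complementary slackness, $\pazocal{L}_{\xi}(\mathbf{s}^*,\nu^*)=\hat m(\mathbf{s}^*)=p^*$, so $d^*\ge p^*$.

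A subtlety arises when the matrix in~(\ref{eq:possDefglobalSol}) is only semidefinite, since then $r^*\notin\pazocal{D}_{\nu^*}$. I would handle this by a limiting argument $r\downarrow r^*$, using continuity of $\mathscr{L}_{\xi}$. Alternatively, I would interpret the maximum as a supremum in the hard case.

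\textbf{Step 4: identifying the maximizer.} Finally, at the maximizing $(\nu^*,r^*)$, Lemma~\ref{lemma:minLmaxL} gives $r^*=\norm{\mathbf{s}(\nu^*,r^*)}_2$, and hence~(\ref{eq:solutionhopt1}). Stationarity in $\nu$, namely $\partial_\nu\mathscr{L}_{\xi}=0$ when $\nu^*>0$, together with $r^*=\norm{\mathbf{s}}_2$ gives $\norm{\mathbf{s}^*}_2^3=\xi$; when $\nu^*=0$, the constraint must be inactive. These are exactly the complementary slackness and first-order conditions of Lemma~\ref{lemma:globalSol}. That lemma therefore identifies $\mathbf{s}^*(\nu^*,r^*)$ as its minimizer.
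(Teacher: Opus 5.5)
Your Step 1 cannot be completed as planned. The ``reconciliation'' you expect to be fiddly is impossible for general $r\in\pazocal{D}_\nu$, and your own direct expansion already shows why. With $\mathbf{s}=\mathbf{s}(\nu,r)$, the term $-\frac{\nu}{6}\xi$ appears identically in $\pazocal{L}_{\xi}(\mathbf{s},\nu)$ and in $\mathscr{L}_{\xi}$, so it cancels. Up to the constant $f(\mathbf{x}_k)$, which (\ref{eq:lagrangianscr}) omits, you get
\[
\pazocal{L}_{\xi}(\mathbf{s},\nu)-\mathscr{L}_{\xi}(\mathbf{s},\nu,r)=\frac{\nu}{12}\bigl(\norm{\mathbf{s}}_2-r\bigr)^2\bigl(r+2\norm{\mathbf{s}}_2\bigr).
\]
This is exactly the identity of Lemma~\ref{lemma:minLmaxL}, and it is exact. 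Appending $-\nu\,\partial_\nu\mathscr{L}_{\xi}$ to its right-hand side changes it by a generally nonzero amount. So the theorem's identity holds (for $\nu>0$, where the $\frac{4}{3\nu}$ factor is defined) if and only if $\partial_\nu\mathscr{L}_{\xi}=\frac{r}{4}\norm{\mathbf{s}}_2^2-\frac{\xi}{6}-\frac{r^3}{12}=0$. The paper's appendix gets the identity only by substituting the stationarity relation (\ref{eq:xi}), $\xi=\frac{3r}{2}\norm{\mathbf{s}}_2^2-\frac{r^3}{2}$, partway through the computation. Its derivation is therefore also valid only at a $\nu$-stationary point, where the identity collapses to Lemma~\ref{lemma:minLmaxL}. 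You should state and prove it under that hypothesis, which is the only place it is ever used.

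Your Steps 2--4 are sound and take a different route from the paper.

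The paper argues dual-to-primal. It assumes the joint maximizer $(\nu^*,r^*)$ is an interior stationary point with $\nu^*>0$ and $r^*\in\pazocal{D}_{\nu^*}$. From the two stationarity equations it reads off $r^*=\norm{\mathbf{s}}_2$ and $\norm{\mathbf{s}}_2^3=\xi$. It then uses the identity to equate $\pazocal{L}_{\xi}(\mathbf{s}(\nu^*,r^*),\nu^*)$ with the dual value, and certifies primal optimality via Lemma~\ref{lemma:globalSol}.

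You argue primal-to-dual. You get a primal minimizer by compactness, multipliers from the ``only if'' direction of Lemma~\ref{lemma:globalSol}, global optimality of $\mathbf{s}^*$ for $\pazocal{L}_{\xi}(\cdot,\nu^*)$ from the cubic-model characterization, and equality of values from complementary slackness.

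Your route needs no identity at all. It does not presuppose that the dual maximum is attained at an interior stationary point, and it covers $\nu^*=0$ and, with $\sup$, the hard case, both of which the paper silently excludes. The paper's route delivers the closed form (\ref{eq:solutionhopt1}) of the maximizer directly.

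You can shorten Step 3. If $r^*=\norm{\mathbf{s}^*}_2\in\pazocal{D}_{\nu^*}$, then $\mathbf{s}(\nu^*,r^*)=\mathbf{s}^*$, and the Lemma~\ref{lemma:minLmaxL} identity above gives $\mathscr{L}_{\xi}(\nu^*,r^*)=\pazocal{L}_{\xi}(\mathbf{s}^*,\nu^*)=p^*$ at once.

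In Step 4 you need two more conditions besides complementary slackness and the first-order system. First, you need (\ref{eq:possDefglobalSol}), which follows from $r^*\in\pazocal{D}_{\nu^*}$. Second, when $\nu^*=0$ you need primal feasibility $\norm{\mathbf{s}}_2^3\le\xi$, not ``inactivity''. This follows from $\partial_\nu\mathscr{L}_{\xi}\le 0$ at $\nu=0$, evaluated at $r=\norm{\mathbf{s}}_2$.
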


\begin{restatable}{cor}{strongduality} 
The constrained optimization problem~(\ref{prob:cubicOursEquiv}) is characterized by \textit{strong duality}, i.e.,
\begin{equation}\label{eq:strongDuality}
\min_{\mathbf{s}\in \mathbb{R}^d} \max_{\nu \geq 0} \pazocal{L}_{\xi}(\mathbf{s}, \nu) =\max_{\nu \geq 0} \min_{\mathbf{s}\in \mathbb{R}^d} \pazocal{L}_{\xi} (\mathbf{s}, \nu).
\end{equation} 
\label{cor:strongduality}
\begin{proof}
See the proof of Theorem~\ref{thm:minLmaxL2}.  
\end{proof}
\end{restatable}

Given Corollary~\ref{cor:strongduality}, the equivalence between problems~(\ref{prob:cubicNesterov2}) and~(\ref{prob:cubicOursEquiv}) is established in Theorem~\ref{thm:probequiv}. The equivalence implies that both problems have the same optimum.

\begin{restatable}[Proof in Appendix~\ref{appendix:probequiv}]{thm}{probequiv} 
Let $\nu^*$ be the optimal dual variable of the constrained optimization problem~(\ref{prob:cubicOursEquiv}). The following optimization problems
\begin{equation}\label{eq:equivalence}
\min_{\mathbf{s} \in \mathbb{R}^d}\: m_{M} (\mathbf{s}) \: \: \mathrm{and} \: \: \min_{\mathbf{s}\in \mathbb{R}^d}\: \hat{m} (\mathbf{s}) \: \mathrm{subject \: to} \: \mbox{ $g_{\xi} (\mathbf{s}) \leq 0$}
\end{equation} are equivalent w.r.t. the optimal solution $\mathbf{s}^*$, when $M = \nu^*$ and $\xi = \norm{\mathbf{s}^*}_2^3$.
\label{thm:probequiv}
\end{restatable}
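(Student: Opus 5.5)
The plan is to compare the optimality characterizations of the two problems and show they coincide under the identification $M=\nu^*$, $\xi=\norm{\mathbf{s}^*}_2^3$. For the cubic model $m_M$ in~(\ref{eq:basicCubicProblem}), I would use the global characterization of \citet{nesterov2006}, which parallels Lemma~\ref{lemma:globalSol}. It states that $\mathbf{s}^*$ is a global minimizer of $m_M$ if and only if
\begin{equation*}
\Bigl(\nabla^2 f(\mathbf{x}_k) + \frac{M}{2}\norm{\mathbf{s}^*}_2\,\mathbf{I}\Bigr)\mathbf{s}^* = -\nabla f(\mathbf{x}_k)
\quad\text{and}\quad
\nabla^2 f(\mathbf{x}_k) + \frac{M}{2}\norm{\mathbf{s}^*}_2\,\mathbf{I}\succeq 0 .
\end{equation*}
If this is not taken as given, I would derive it directly by observing that $\pazocal{L}_{\xi}(\mathbf{s},\nu)$ in~(\ref{eq:lagrangeCubicOursEquiv}) equals $m_{\nu}(\mathbf{s}) - \nu\xi/6$. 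Hence minimizing $\pazocal{L}_\xi(\cdot,\nu)$ over $\mathbb{R}^d$ is the same as minimizing $m_\nu$, and the last part of Lemma~\ref{lemma:minLmaxL} describes that minimizer in exactly the stationary form~(\ref{eq:solutionhopt}).

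\textbf{Direction 1 (constrained $\Rightarrow$ cubic).} Let $\mathbf{s}^*$ solve~(\ref{prob:cubicOursEquiv}) with optimal dual variable $\nu^*$. By Corollary~\ref{cor:strongduality}, strong duality holds, so the primal value equals $\min_{\mathbf{s}}\pazocal{L}_\xi(\mathbf{s},\nu^*)$. The primal optimum $\mathbf{s}^*$ then attains this inner minimum, by the standard saddle-point argument: $\hat m(\mathbf{s}^*) = \pazocal{L}_\xi(\mathbf{s}^*,\nu^*)$ by complementary slackness, and $\pazocal{L}_\xi(\mathbf{s}^*,\nu^*) \ge \min_{\mathbf{s}}\pazocal{L}_\xi(\mathbf{s},\nu^*)$. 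Since $\pazocal{L}_\xi(\cdot,\nu^*) = m_{\nu^*}(\cdot) - \nu^*\xi/6$, the point $\mathbf{s}^*$ minimizes $m_M$ with $M=\nu^*$. Equivalently, conditions~(\ref{eq:firstOrderCond})--(\ref{eq:possDefglobalSol}) from Lemma~\ref{lemma:globalSol} with $\nu^*=M$ are exactly the cubic optimality conditions above.

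\textbf{Direction 2 (cubic $\Rightarrow$ constrained).} Let $\mathbf{s}^*$ minimize $m_M$, and set $\nu^*=M$ and $\xi=\norm{\mathbf{s}^*}_2^3$. Feasibility then holds with equality, so the complementary slackness condition $\nu^*(\norm{\mathbf{s}^*}_2^3-\xi)=0$ is satisfied automatically, and $\nu^*\ge0$ because $M>0$. The cubic optimality conditions supply~(\ref{eq:firstOrderCond}) and~(\ref{eq:possDefglobalSol}). Lemma~\ref{lemma:globalSol} (its ``if'' direction) then shows that $\mathbf{s}^*$ minimizes~(\ref{prob:cubicOursEquiv}). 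Uniqueness transfers in both directions whenever the shifted matrix is positive definite, again by Lemma~\ref{lemma:globalSol}.

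\textbf{Main obstacle.} The delicate step is making the choice $\xi=\norm{\mathbf{s}^*}_2^3$ non-circular. One must check that the $\nu^*$ produced by the dual problem for this particular $\xi$ is indeed $M$, and does not merely satisfy the conditions for some other multiplier. I would handle this through Theorem~\ref{thm:minLmaxL2}: the optimal pair $(\nu^*,r^*)$ reproduces $\mathbf{s}^*$ via~(\ref{eq:solutionhopt1}). If the boundary is active with $\mathbf{s}^*\neq 0$, the multiplier is uniquely determined by taking the inner product of~(\ref{eq:firstOrderCond}) with $\mathbf{s}^*$, which gives
\begin{equation*}
\nu^* = -2\,\frac{\nabla f(\mathbf{x}_k)^T\mathbf{s}^* + \mathbf{s}^{*T}\nabla^2 f(\mathbf{x}_k)\,\mathbf{s}^*}{\norm{\mathbf{s}^*}_2^3},
\end{equation*}
and this value matches $M$ by the cubic stationarity equation. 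The degenerate case $\mathbf{s}^*=0$, which occurs exactly when $\nabla f(\mathbf{x}_k)=0$ and $\nabla^2 f(\mathbf{x}_k)\succeq0$, has to be treated separately as a trivial case.
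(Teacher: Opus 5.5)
Your proof is correct, but Direction~2 takes a different route from the paper's.

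\textbf{Direction~1.} This is the paper's first part in a cleaner form. Both rest on Corollary~\ref{cor:strongduality}, complementary slackness, and the identity $\pazocal{L}_{\xi}(\cdot,\nu^*)=m_{\nu^*}(\cdot)-\nu^*\xi/6$. Your saddle-point chain $\hat{m}(\mathbf{s}^*)=\pazocal{L}_{\xi}(\mathbf{s}^*,\nu^*)\geq\min_{\mathbf{s}}\pazocal{L}_{\xi}(\mathbf{s},\nu^*)=\hat{m}(\mathbf{s}^*)$ avoids a flawed step in the paper. There, $\norm{\mathbf{s}}_2^3-\xi\leq 0$ is declared ``by feasibility'' inside a minimum taken over all of $\mathbb{R}^d$.

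\textbf{Direction~2.} The paper argues by direct comparison, phrased as a contradiction. A feasible $\mathbf{s}$ has $\norm{\mathbf{s}}_2^3\leq\xi=\norm{\mathbf{s}^*}_2^3$, so
$\hat{m}(\mathbf{s})=m_M(\mathbf{s})-\frac{M}{6}\norm{\mathbf{s}}_2^3\geq m_M(\mathbf{s}^*)-\frac{M}{6}\norm{\mathbf{s}^*}_2^3=\hat{m}(\mathbf{s}^*)$.
This needs only $M\geq 0$ and global optimality of $\mathbf{s}^*$ for $m_M$, with no second-order conditions.

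You instead combine two results:
\begin{itemize}
\item the necessity half of the cubic characterization in \citet{nesterov2006}, namely stationarity plus $\nabla^2 f(\mathbf{x}_k)+\frac{M}{2}\norm{\mathbf{s}^*}_2\mathbf{I}\succeq 0$;
\item the sufficiency half of Lemma~\ref{lemma:globalSol}.
\end{itemize}
This is heavier, but it yields a full KKT certificate with multiplier $M$. Together with your uniqueness-of-multiplier observation for $\mathbf{s}^*\neq\mathbf{0}$, it shows the optimal dual variable for $\xi=\norm{\mathbf{s}^*}_2^3$ is exactly $M$.

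The paper's second part never proves this identification; it leaves $M=\nu^*$ implicit. On this point your proposal is more complete than the paper. You could also get it cheaply from the paper's comparison plus your saddle-point argument. Any dual-optimal $\nu^*$ makes $\mathbf{s}^*$ a minimizer, hence a stationary point, of $\pazocal{L}_{\xi}(\cdot,\nu^*)$. Subtracting the two stationarity equations then gives $(\nu^*-M)\norm{\mathbf{s}^*}_2\,\mathbf{s}^*=\mathbf{0}$.

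\textbf{Caution.} Drop the fallback of deriving the cubic characterization from Lemma~\ref{lemma:minLmaxL}, for three reasons:
\begin{itemize}
\item That lemma only exhibits one stationary point $\mathbf{s}(\nu,r^*)$ with $r^*$ in the open set $\pazocal{D}_\nu$.
\item Such an $r^*$ need not exist in the hard case, where the supremum over $\pazocal{D}_\nu$ is approached only at its boundary.
\item The lemma does not assert that an arbitrary minimizer of $m_M$ satisfies the semidefiniteness condition, which is exactly the direction your Direction~2 uses.
\end{itemize}
Cite the published characterization instead (\citet{nesterov2006}, \citet{cartis2011}).
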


\section{Local Convergence Analysis}\label{sec:covAnalysis}

\noindent \textbf{Outline.} This section provides the local convergence analysis of Algorithm~\ref{alg:mainAlg}. It begins with Assumption~\ref{assum:continuityAssum}, which defines the Lipschitz continuity constants for $f_i(\mathbf{x})$, $\nabla f_i (\mathbf{x})$, and $\nabla^2 f_i(\mathbf{x})$. Subsequently, Theorem~\ref{thm:diagmodelmin} establishes the local convergence of Algorithm~\ref{alg:mainAlg} when using the exact gradient and Hessian matrix.

Adequate agreement between the exact gradient $\nabla f(\mathbf{x}_k)$ and the approximate gradient $\mathbf{g}_k$ is established in Assumption~\ref{assum:gradAgreementAssum}. This assumption is grounded on~\citet[Assumption 2]{wang2019} and facilitates the approximation of the gradient using a sampling scheme in Lemma~\ref{lemma:gradientDeviationBound}, akin to the one outlined in~\citep[Theorem 7]{kohler2017}. 

A sufficient agreement between the exact diagonal  Hessian matrix $\operatorname{Diag} (\nabla^2 f(\mathbf{x}_k))$ and the approximate diagonal Hessian matrix $\mathbf{B}_k$ in~(\ref{eq:approxHessianDiag}), is established in Assumption~\ref{assum:hessAgreementAssum}.
Assumption~\ref{assum:hessAgreementAssum} is a direct application of~\citep[Assumption 2]{wang2019}. Additionally, Assumption~\ref{assum:hessAgreementAssum} supports Lemma~\ref{lemma:hessianDeviationBound}, while Lemma~\ref{lemma:aproxmodelmin} is pivotal for establishing Lemma~\ref{lemma:localConvCond}. Lemma~\ref{lemma:localConvCond} is used in Lemmata~\ref{lemma:gradientDeviationBound},~\ref{lemma:hessianDeviationBound}, and Corollary~\ref{cor:gradienthessianSampling}. Lemmata~\ref{lemma:gradientDeviationBound} and~\ref{lemma:hessianDeviationBound} provide the deviation bounds for the gradient and Hessian matrix, along with the corresponding conditions required for these bounds to hold. These conditions are consolidated in Corollary~\ref{cor:gradienthessianSampling}, which ensures the validity of both deviation bounds.

The analysis concludes by discussing the local convergence of the sub-sampled case, where the exact gradient $\nabla f(\mathbf{x}_k)$ and diagonal Hessian matrix $\Diag(\nabla^2 f(\mathbf{x}_k))$ are replaced with their sub-sampled approximations $\mathbf{g}_k$ and $\mathbf{B}_k$, respectively.

\noindent \textbf{Convergence Analysis.} Next, we begin with the main results of the analysis. Assumption~\ref{assum:continuityAssum} is commonly used in previous works~\citep{nesterov2006,cartis2011,cartis2011B,kohler2017} and is applied here in combination with Remark~\ref{rmrk:continuityAssum1}.

Let $\pazocal{F} \subseteq \mathbb{R}^d$ be a closed convex set with a non-empty interior. Let $\mathbf{x}_0 \in \mathrm{int}\: \pazocal{F}$ be a starting point of the iterative optimization scheme in the interior of $\pazocal{F}$.

\begin{restatable}[Continuity]{assum}{continuityAssum} 
The convergence analysis is based on the following assumptions:
\begin{itemize}
\item The functions $f_i (\mathbf{x})$ are twice-continuously differentiable and bounded from below by $f_i^{\mathrm{low}}$.
\item The functions $f_i(\mathbf{x})$, $\nabla f_i (\mathbf{x})$, and $\nabla^2 f_i(\mathbf{x})$ are Lipschitz continuous in $\pazocal{F}$ with Lipschitz constants $L_f$, $L_{g}$, and $L_{H}$, respectively.
\end{itemize}
\label{assum:continuityAssum}
\end{restatable} 

\begin{restatable}{rmrk}{continuityAssum1} 
Due to the triangle inequality, it follows that the Lipschitz continuity also holds for $f(\mathbf{x})$, $\nabla f(\mathbf{x})$, and $\nabla^2 f(\mathbf{x})$, with Lipschitz constants $L_f$, $L_{g}$, and $L_{H}$, respectively. In addition, given Assumption~\ref{assum:continuityAssum}, $f(\mathbf{x})$ is also lower bounded by some $f^{\mathrm{low}}$. 
\label{rmrk:continuityAssum1}
\end{restatable}

By leveraging Theorem~\ref{thm:probequiv}, the iteration complexity of Algorithm~\ref{alg:mainAlg} is equivalent to that performed by the cubic regularization method in~\citep{nesterov2006}. Theorem~\ref{thm:diagmodelmin} analyses the iteration complexity of Algorithm~\ref{alg:mainAlg} by adapting the analysis from \citet[Theorem 1]{nesterov2006}, when $\Diag(\nabla^2 f(\mathbf{x}))$ replaces $\nabla^2 f(\mathbf{x})$.

\begin{restatable}[Proof in Appendix~\ref{supplement:diagmodelmin}]{thm}{diagmodelmin} 
Suppose Assumption~\ref{assum:continuityAssum} holds. Also, let the sequence $\mathbf{x}_{i}$, with $i\geq 0$, be generated by Algorithm~\ref{alg:mainAlg} when $\Diag (\nabla^2 f(\mathbf{x}_i))$ is used. Then, after $k$ iterations, the sequence $\{ \mathbf{x}_i \}_{i \geq 1}$ satisfies
\begin{equation}\label{eq:normgradfUpperExactDiag}
\min_{1 \leq i\leq k} \norm{\nabla f(\mathbf{x}_i)}_2 \leq 
\pazocal{O} \left(\frac{1}{k^{2/3}}\right).
\end{equation} 
\label{thm:diagmodelmin}
\end{restatable}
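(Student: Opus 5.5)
I would follow the structure of \citet[Theorem 1]{nesterov2006}. The new ingredient is that the model uses $\mathbf{D}_i \stackrel{\Delta}{=} \Diag(\nabla^2 f(\mathbf{x}_i))$ in place of $\nabla^2 f(\mathbf{x}_i)$, so the mismatch $\mathbf{E}_i \stackrel{\Delta}{=} \nabla^2 f(\mathbf{x}_i) - \mathbf{D}_i$ has to be controlled.

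\textbf{Step 1: optimality of the step.} By Theorem~\ref{thm:probequiv}, each step of Algorithm~\ref{alg:mainAlg} is a global minimizer of the cubic model with Hessian $\mathbf{D}_i$ and $M_i = \nu_i^*$. I will assume, as a standing requirement, that $M_i$ stays in a bounded window $[M_{\min}, M_{\max}]$ with $M_{\min}$ a multiple of $L_H$. Lemma~\ref{lemma:globalSol}, applied with $\mathbf{D}_i$, then gives two conditions:
\begin{equation*}
\nabla f(\mathbf{x}_i) + \mathbf{D}_i \mathbf{s}_{i+1} + \tfrac{M_i}{2}\norm{\mathbf{s}_{i+1}}_2\mathbf{s}_{i+1} = \mathbf{0}, \qquad \mathbf{D}_i + \tfrac{M_i}{2}\norm{\mathbf{s}_{i+1}}_2\mathbf{I} \succeq 0 .
\end{equation*}
Since $\mathbf{D}_i$ is diagonal, the second condition is coordinatewise. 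Exactly as in Nesterov's Lemma 4, these two conditions give a model decrease:
\begin{equation*}
f(\mathbf{x}_i) - m_{M_i}(\mathbf{s}_{i+1}) \ge \tfrac{M_i}{12}\norm{\mathbf{s}_{i+1}}_2^3 .
\end{equation*}

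\textbf{Step 2: gradient at the new point.} I start from the Lipschitz Hessian bound (Assumption~\ref{assum:continuityAssum}, Remark~\ref{rmrk:continuityAssum1}),
\begin{equation*}
\norm{\nabla f(\mathbf{x}_{i+1}) - \nabla f(\mathbf{x}_i) - \nabla^2 f(\mathbf{x}_i)\mathbf{s}_{i+1}}_2 \le \tfrac{L_H}{2}\norm{\mathbf{s}_{i+1}}_2^2 .
\end{equation*}
Substituting the first-order condition of Step~1 yields
\begin{equation*}
\norm{\nabla f(\mathbf{x}_{i+1})}_2 \le \tfrac{L_H + M_i}{2}\norm{\mathbf{s}_{i+1}}_2^2 + \norm{\mathbf{E}_i \mathbf{s}_{i+1}}_2 .
\end{equation*}

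\textbf{Step 3: function decrease.} Similarly, the cubic Taylor bound gives
\begin{equation*}
f(\mathbf{x}_{i+1}) \le m_{M_i}(\mathbf{s}_{i+1}) + \tfrac{1}{2}\mathbf{s}_{i+1}^T\mathbf{E}_i\mathbf{s}_{i+1} + \tfrac{L_H - M_i}{6}\norm{\mathbf{s}_{i+1}}_2^3 .
\end{equation*}
Combined with Step~1, this gives a decrease of order $\norm{\mathbf{s}_{i+1}}_2^3$, provided the off-diagonal term is dominated. Summing over $i$ and using $f\ge f^{\mathrm{low}}$ then gives $\sum_{i=0}^{k-1}\norm{\mathbf{s}_{i+1}}_2^3 \le C\,(f(\mathbf{x}_0)-f^{\mathrm{low}})$. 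Hence $\min_i \norm{\mathbf{s}_{i+1}}_2^2 \le (C'/k)^{2/3}$, and Step~2 turns this into the claimed $\pazocal{O}(k^{-2/3})$ rate for $\min_{1\le i\le k}\norm{\nabla f(\mathbf{x}_i)}_2$.

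\textbf{Main obstacle.} The hard part is the off-diagonal error $\mathbf{E}_i$. It contributes terms that are linear in $\norm{\mathbf{s}}_2$ (in the gradient bound) and quadratic in $\norm{\mathbf{s}}_2$ (in the decrease), which would spoil the rate. I would first try to absorb these terms with an error condition of Nesterov type, as in \citet{cartis2011}:
\begin{equation*}
\norm{\mathbf{E}_i\mathbf{s}_{i+1}}_2 \le \kappa\norm{\mathbf{s}_{i+1}}_2^2 .
\end{equation*}
This holds, for instance, locally near a point where the Hessian is approximately diagonal; it is in that sense that the result is a \emph{local} convergence statement. Under this condition $\mathbf{E}_i$ simply inflates $L_H$ to $L_H + 2\kappa$. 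One then needs $M_i$ large enough, e.g.\ $M_i \ge 2(L_H + 2\kappa)$, which the adaptive update must guarantee. If that route is unavailable, a fallback is to note that $\mathbf{s}^T\mathbf{E}\mathbf{s}$ vanishes on coordinate directions and bound it through the diagonal dominance of the Hessian in $\pazocal{F}$. Either way, the constants stay independent of $k$, and the $k^{-2/3}$ rate follows.
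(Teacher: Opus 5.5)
Your skeleton is the paper's. Its appendix proof runs Nesterov's argument with $\Diag(\nabla^2 f)$ in place of $\nabla^2 f$: Lemma~\ref{lemma:lemma4Nesterov2006} supplies $f(\mathbf{x})-\bar{f}_M(\mathbf{x})\ge\frac{M}{12}r_M^3(\mathbf{x})$ and $f(T_M(\mathbf{x}))\le\bar{f}_M(\mathbf{x})$, the proof telescopes, and Lemmas~\ref{lemma:lemma3Nesterov2006} and~\ref{lemma:lemma5Nesterov2006} turn $r_M$ into a gradient bound, with $M_i$ simply set equal to $L_H$.

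The one real difference is your ``main obstacle''. The paper never meets $\mathbf{E}_i$, because all of those lemmas rest on Lemma~\ref{lemma:lemma1Nesterov2006}. That lemma asserts Nesterov's Taylor bounds with $\Diag(\nabla^2 f(\mathbf{x}))$ and the same constant $L_H$, justified only by the Lipschitz continuity of $\Diag(\nabla^2 f)$ (Lemma~\ref{lemma:diagHessian}). The justification fails. Nesterov's Lemma~1 uses $\nabla f(\mathbf{y})-\nabla f(\mathbf{x})=\int_0^1\nabla^2 f(\mathbf{x}+\tau(\mathbf{y}-\mathbf{x}))(\mathbf{y}-\mathbf{x})\,d\tau$, i.e.\ that the matrix is the derivative of $\nabla f$, which $\Diag(\nabla^2 f)$ is not. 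Concretely, take $f(\mathbf{x})=\frac12\mathbf{x}^T\mathbf{A}\mathbf{x}$ with $\mathbf{A}$ symmetric and non-diagonal, so the Hessian is constant. The left-hand side of~(\ref{eq:lemma1Nesterov20062}) is then $\frac12\bigl|\mathbf{h}^T(\mathbf{A}-\Diag(\mathbf{A}))\mathbf{h}\bigr|$ with $\mathbf{h}=\mathbf{y}-\mathbf{x}$. This is of order $\norm{\mathbf{h}}_2^2$ along any direction where the quadratic form is nonzero, so it cannot be bounded by $\frac{L_H}{6}\norm{\mathbf{h}}_2^3$ as $\mathbf{h}\to\bm{0}$, whatever $L_H$ is. Your Steps~2--3, which keep the $\mathbf{E}_i$ terms, are the correct inequalities. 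The paper's proof has no valid mechanism for discarding those terms.

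The gap in your proposal is therefore genuine, but the paper does not fill it. The rate needs hypotheses beyond Assumption~\ref{assum:continuityAssum}, and you should state them as hypotheses, not as ``standing requirements''. Suppose $\norm{\mathbf{E}_i\mathbf{s}_{i+1}}_2\le\kappa\norm{\mathbf{s}_{i+1}}_2^2$ (the Hessian-approximation condition used in ARC~\citep{cartis2011}) and $0<M_{\min}\le M_i\le M_{\max}$ with $M_{\min}\ge L_H+3\kappa$. Then:
\begin{itemize}
\item your Step~3 gives $f(\mathbf{x}_i+\mathbf{s}_{i+1})\le m_{M_i}(\mathbf{s}_{i+1})$, hence $\rho_i\ge 1$;
\item so every step is accepted, which you need, since otherwise Algorithm~\ref{alg:mainAlg} sets $\mathbf{x}_{i+1}=\mathbf{x}_i$;
\item the decrease per step is at least $\frac{M_{\min}}{12}\norm{\mathbf{s}_{i+1}}_2^3$;
\item Step~2 then yields $\min_{1\le i\le k}\norm{\nabla f(\mathbf{x}_i)}_2\le\frac{L_H+M_{\max}+2\kappa}{2}\bigl(\frac{12(f(\mathbf{x}_0)-f^{\mathrm{low}})}{M_{\min}k}\bigr)^{2/3}$.
\end{itemize}

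However, nothing in Assumption~\ref{assum:continuityAssum} controls $\mathbf{E}_i$. Nor does anything in Algorithms~\ref{alg:mainAlg}--\ref{alg:newton} keep $\nu_{i+1}$ in such a window. Algorithm~\ref{alg:newton} returns $\nu=0$ whenever $\mathbf{B}\succ\bm{0}$ and the Newton step is interior, in which case the bound $\frac{M_i}{12}\norm{\mathbf{s}_{i+1}}_2^3$ of your Step~1 is vacuous.

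Neither of your justifications of the error condition delivers it. An approximately diagonal Hessian, $\norm{\mathbf{E}_i}_2\le\delta$, only gives $\norm{\mathbf{E}_i\mathbf{s}}_2\le\delta\norm{\mathbf{s}}_2$. Diagonal dominance likewise yields only $|\mathbf{s}^T\mathbf{E}_i\mathbf{s}|\le c\norm{\mathbf{s}}_2^2$; vanishing on the coordinate axes says nothing about general $\mathbf{s}$. Both fall one power of $\norm{\mathbf{s}}_2$ short of what Steps~2--3 need.

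Finally, the Taylor bounds hold only on $\pazocal{F}$. You also need $[\mathbf{x}_i,\mathbf{x}_{i+1}]\subset\pazocal{F}$, e.g.\ via a level-set argument as in Nesterov's Lemma~2.
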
 
If we want to find the iteration $k$ that satisfies $\min_{1 \leq i\leq k} \norm{\nabla f(\mathbf{x}_i)}_2 \leq \epsilon$, we upper bound~(\ref{eq:normgradfUpperExactDiag}) by $\epsilon$ and we conclude that 
\begin{equation}\label{eq:kIterComplexExactDiag}
k \geq \pazocal{O} \left(\frac{1}{\epsilon^{3/2}}\right).
\end{equation} 

\noindent \textbf{Deviation Bounds.} Rather than utilizing deterministic gradient and Hessian information, we can employ estimates of the gradient, the Hessian matrix, and the loss function, which are derived from an independent set of points $\pazocal{B}_k$, i.e.,
\begin{equation}\label{eq:approxGrad}
    \mathbf{g}_k = \frac{1}{|\pazocal{B}_k|} \sum_{i \in \pazocal{B}_k} \nabla f_i (\mathbf{x}_k),
\end{equation}
\begin{equation}\label{eq:approxHessian}
    \mathbf{H}_k = \frac{1}{|\pazocal{B}_k|} \sum_{i \in \pazocal{B}_k} \nabla^2 f_i (\mathbf{x}_k),
\end{equation} and 
\begin{equation}\label{eq:approxLoss}
   F (\mathbf{x}_k) = \frac{1}{|\pazocal{B}_k|} \sum_{i \in \pazocal{B}_k} f_i (\mathbf{x}_k).
\end{equation}

\begin{restatable}[Sufficient agreement of $\mathbf{g}_k$ and $\nabla f(\mathbf{x}_k)$]{assum}{gradAgreementAssum} There is a constant $C_g>0$ such that the inexact gradient $\mathbf{g}_k$ satisfies, for all $k \geq 0$,
\begin{equation}\label{eq:gradAgreementAssum}
\norm{\mathbf{g}_k - \nabla f(\mathbf{x}_k)}_2 \leq C_g \norm{\mathbf{s}_k}_2^2.
\end{equation}
\label{assum:gradAgreementAssum}
\end{restatable} 
For some $\mathbf{x}_k$, the computation of the Hessian matrix $\mathbf{H}_{k}\in \mathbb{R}^{d \times d}$ in~(\ref{eq:approxHessian}) is expensive due to the large size $d$ of $\mathbf{x}_k$. Only the Hessian-vector product can be calculated at a reasonable computational complexity~\citep{pearlmutter1994}. Let $\pazocal{H}_{k}  \colon \mathbb{R}^d \to \mathbb{R}^d$ be a function such that $\pazocal{H}_{k} (\mathbf{v}) \stackrel{\Delta}{=} \mathbf{H}_{k} \mathbf{v}$, where $\mathbf{H}_{k} $ is not accessible. Given the Hessian-vector product operator $\pazocal{H}_{k}$, the diagonal of $\mathbf{H}_{k}$, i.e., $\mathbf{h}_{k}\stackrel{\Delta}{=}\operatorname{diag}(\mathbf{H}_{k})$, is approximated by the Hutchinson's method as~\citet{bekas2007}
\allowdisplaybreaks
\begin{equation}\label{eq:hutchinsosHessian}
\mathbf{b}_{k} = \left[\sum_{s=1}^{\pazocal{S}} \pazocal{H}_{k}(\mathbf{v}_s) \odot \mathbf{v}_s \right]\oslash \left[ \sum_{i=1}^{\pazocal{S}}  \mathbf{v}_s \odot \mathbf{v}_s \right] =  \frac{1}{\pazocal{S}}\sum_{s=1}^{\pazocal{S}} \pazocal{H}_{k}(\mathbf{v}_s) \odot \mathbf{v}_s \in \mathbb{R}^d,
\end{equation} 
where $\mathbf{v}_s \sim \mathtt{Rademacher (0.5)}$ and $\pazocal{S}$ is the number of random vectors used in the approximation. Thus, the diagonal approximate Hessian matrix $\mathbf{B}_{k} \in \mathbb{R}^{d \times d}$ is given by
\begin{equation}\label{eq:approxHessianDiag}
\mathbf{B}_{k} \stackrel{\Delta}{=} \operatorname{Diag}(\mathbf{b}_{k}) = \frac{1}{\pazocal{S}} \sum_{s=1}^\pazocal{S} \operatorname{Diag} \left( \pazocal{H}_{k}(\mathbf{v}_s) \odot \mathbf{v}_s \right).
\end{equation} 
The approximate Hessian matrix (\ref{eq:approxHessianDiag}) is used in the description of Algorithms~\ref{alg:mainAlg} and~\ref{alg:newton}, in Section~\ref{sec:probSolution}. It is worth noting that in the code implementation of Algorithms~\ref{alg:mainAlg} and~\ref{alg:newton} only the diagonal of $\mathbf{B}_{k}$ is computed, which reduces the memory cost from $d \times d$ to $d$.

\begin{restatable}{assum}{hessAgreementAssum} There is a constant $C_B>0$ such that the inexact Hessian $\mathbf{B}_k$ satisfies, for all $k \geq 0$,
\begin{equation}\label{eq:hessAgreementAssum}
\norm{\mathbf{B}_k - {\operatorname{Diag} (\nabla^2 f(\mathbf{x}_k))}}_2 \leq C_B \norm{\mathbf{s}_k}_2.
\end{equation}
\label{assum:hessAgreementAssum}
\end{restatable} 
By replacing $\nabla f (\mathbf{x}_k)$ and $\nabla^2 f (\mathbf{x}_k)$ with $\mathbf{g}_k$ and $\mathbf{B}_k$ we get
\begin{equation}\label{eq:basicCubicProblemApprox}
\mathfrak{m}_{M}(\mathbf{s}) \stackrel{\Delta}{=} F(\mathbf{x}_k) + \mathbf{g}_k^T \mathbf{s} + \frac{1}{2} \:\mathbf{s}^T \: \mathbf{B}_k \: \mathbf{s} + \frac{M}{6}\:\norm{\mathbf{s}}_2^3.
\end{equation} 
Note that the conditions under which $\nabla f (\mathbf{x}_k)$ and $\nabla^2 f (\mathbf{x}_k)$ can be substituted with $\mathbf{g}_k$ and $\mathbf{B}_k$ are detailed in Lemmata~\ref{lemma:gradientDeviationBound} and~\ref{lemma:hessianDeviationBound}, respectively. 

Let $(\mathbf{s}_{k+1}, \nu_{k+1})$ be the output of Algorithm~\ref{alg:newton} for $\mathbf{B}=\mathbf{B}_k$, $\mathbf{g}=\mathbf{g}_k$, and so on. Algorithm~\ref{alg:newton} is called in line~\ref{line:innerRootFinder} of Algorithm~\ref{alg:mainAlg}. Then, recall that $(\mathbf{s}_{k+1}, \nu_{k+1})$ is a minimizer of~(\ref{prob:cubicOursEquiv}) and according to Theorem~\ref{thm:probequiv} it is also a minimizer of problem~(\ref{prob:cubicNesterov2}) for $M=\nu_{k+1}$. 
The first- and second-order optimality conditions
\begin{equation}\label{eq:skplus1Nablafrakm}
\mathbf{s}_{k+1}^T\:\nabla_{\mathbf{s}} \mathfrak{m}_{M=\nu_{k+1}} (\mathbf{s}_{k+1})=0
\end{equation} 
and
\begin{equation}
\quad \mathbf{s}_{k+1}^T\:\Bigl(\nabla^2_{\mathbf{s}} \mathfrak{m}_{M=\nu_{k+1}}(\mathbf{s}_{k+1})\Bigr)\:\mathbf{s}_{k+1}\geq 0,
\end{equation}
get us to Lemma~\ref{lemma:aproxmodelmin}. Lemma~\ref{lemma:aproxmodelmin} is exploited to prove Lemma~\ref{lemma:localConvCond}. Lemma~\ref{lemma:localConvCond} is used by Lemmata~\ref{lemma:gradientDeviationBound},~\ref{lemma:hessianDeviationBound}, and Corollary~\ref{cor:gradienthessianSampling}.

\begin{restatable}[Approximate model minimizer]{lem}{aproxmodelmin}
Let 
\begin{equation}\label{prob:cubicNesterovAprox}
\begin{aligned}
 \mathbf{s}_{k+1} = \underset{\mathbf{s} \in \mathbb{R}^d}{\argmin}~\mathfrak{m}_{M}(\mathbf{s}).
\end{aligned} 
\end{equation} Then, the following statements hold
\begin{equation}\label{eq:firstOptCubicIter}
\mathbf{g}_k + \mathbf{B}_{k} \: \mathbf{s}_{k+1} + \frac{M}{2}\:\norm{\mathbf{s}_{k+1}}_2\mathbf{s}_{k+1} = \bm{0},
\end{equation}
\begin{equation}\label{eq:secondOptCubicIter}
\mathbf{B}_{k} + \frac{M}{2}\:\norm{\mathbf{s}_{k+1}}_2\mathbf{I} \succeq \bm{0}, 
\end{equation} 
and
\begin{equation}\label{eq:OptCubicIter}
\mathbf{g}_k^T \mathbf{s}_{k+1}+ \frac{1}{2} \mathbf{s}_{k+1}^T\mathbf{B}_{k} \: \mathbf{s}_{k+1} +  \frac{M}{6}\:\norm{\mathbf{s}_{k+1}}_2^3 \leq  -\frac{M}{12}\: \norm{\mathbf{s}_{k+1}}_2^3.
\end{equation} 
Recall that $\mathbf{x}_{k+1} = \mathbf{x}_{k} + \mathbf{s}_{k+1}$ and from Theorem~\ref{thm:probequiv}, $M=\nu_{k+1}$.
\begin{proof}
The reader is referred to~\citep[Lemma 3]{wang2019}.  
\end{proof}
\label{lemma:aproxmodelmin}
\end{restatable}

\begin{restatable}{lem}{localConvCond} 
Let $\{F(\mathbf{x}_k)\}$ be bounded from below by $F^{\mathrm{low}}$. Also, let $\mathbf{s}_{k+1}$ satisfy the first two conditions in Lemma~\ref{lemma:aproxmodelmin} and let $M$ be bounded from below by some $M^{\mathrm{low}}$. Then
\begin{equation}\label{eq:stozero}
    \norm{\mathbf{s}_{k+1}} \to 0, \text{ as}~k \to \infty.
\end{equation}

\begin{proof}
First, note that by Assumption~\ref{assum:continuityAssum}, $F(\mathbf{x})$ is also bounded from below by some $F^{\mathrm{low}}$. Additionally, since $M$ is bounded from below and $M=\nu_{k+1}$, as indicated in Theorem~\ref{thm:probequiv}, $\nu_{k+1}$ is also bounded from below. The lower bound of $M$ is further discussed in Lemma~\ref{lemma:lemma2Nesterov2006} in Appendix~\ref{supplement:diagmodelminRelatedLemmata}.

Following similar lines to~\citet[Lemma 5.1]{cartis2011}, we focus on the sub-sequence of successful iterations, as in~\citep{cartis2011,conn2000}. Thus, from the successful iteration in Algorithm~\ref{alg:mainAlg}, i.e., when $\rho_k \in [\eta_1, \eta_2)$, we have
\allowdisplaybreaks
\begin{multline}
    F(\mathbf{x}_k) - F(\mathbf{x}_{k+1}) \geq  \eta_1 (F(\mathbf{x}_k) - \mathfrak{m}_{M=\nu_{k+1}} (\mathbf{s}_{k+1})) \\ \stackrel{(\ref{eq:basicCubicProblemApprox})}{\geq}  \eta_1 \Biggl(- \mathbf{g}_k^T \mathbf{s}_{k+1} -  \frac{1}{2} \:\mathbf{s}_{k+1}^T \: \mathbf{B}_k \: \mathbf{s}_{k+1} - \frac{M^{\mathrm{low}}}{6}\:\norm{\mathbf{s}_{k+1}}_2^3\Biggr),
\end{multline} 
which by applying~(\ref{eq:OptCubicIter}) yields
\begin{equation}\label{eq:localConvCond}
    F(\mathbf{x}_k) - F(\mathbf{x}_{k+1}) \geq \eta_1 \frac{M^{\mathrm{low}}}{12} \norm{\mathbf{s}_{k+1}}_2^3.
\end{equation} Summing over all iterates from $0$ to $k-1$ in~(\ref{eq:localConvCond}) we obtain
\begin{equation}
    F(\mathbf{x}_0) - F(\mathbf{x}_{k+1}) \geq \frac{\eta_1}{12} M^{\mathrm{low}} \sum_{k=0}^{k-1} \norm{\mathbf{s}_k}_2^3,
\end{equation} 
which taking into account that $\{F(\mathbf{x}_k)\}$ is bounded below yields
\begin{equation}
    \frac{12}{\eta_1 M^{\mathrm{low}}} \left(F(\mathbf{x}_0) - F^{\mathrm{low}} \right) \geq \sum_{k=0}^{k-1} \norm{\mathbf{s}_k}_2^3.
\end{equation} 
Thus, the series $\sum_{k=0}^{k-1} \norm{\mathbf{s}_k}_2^3$ is convergent and~(\ref{eq:stozero}) holds. The same conclusion is also derived in~\citep[Lemma 5.1]{cartis2011}.
\end{proof}
\label{lemma:localConvCond}
\end{restatable}

\begin{restatable}{lem}{gradientDeviationBound}
Let the approximate gradient $\mathbf{g}_k$ be computed on a set of points $\pazocal{B}_k^g$, with cardinality $|\pazocal{B}_k^g|$. For $\epsilon \geq 4 \sqrt{2}L_f \sqrt{\frac{\ln \frac{1}{\delta} + \frac{1}{4}}{|\pazocal{B}_k^g|}}$ we have with high probability $1 - \delta$ that 
\begin{equation}\label{eq:gradientDeviationBound}
    \norm{\mathbf{g}_k  - \nabla f(\mathbf{x}_k)}_2 \leq \epsilon.
\end{equation} 
In addition, if
\begin{equation}\label{eq:gradientSampling}
    |\pazocal{B}_k^g| \geq 32 L_f^2 \frac{\ln \frac{1}{\delta} + \frac{1}{4}}{C_g^2 \norm{\mathbf{s}_k}_2^4},
\end{equation} 
and Lemma~\ref{lemma:localConvCond} holds, $\mathbf{g}_k$ satisfies Assumption~\ref{assum:gradAgreementAssum}. 
\label{lemma:gradientDeviationBound}
\begin{proof}
The proof can be found in Appendix~\ref{supplement:gradientDeviationBound}.
\end{proof}
\end{restatable}

\begin{restatable}{lem}{hessianDeviationBound}
Let the approximate diagonal Hessian matrix $\mathbf{B}_k$ be computed on a set of points $\pazocal{B}_k^H$, with cardinality $|\pazocal{B}_k^H|$. For $\epsilon \geq \sqrt{d} L_g \frac{\ln \frac{2d}{\delta}}{\pazocal{S}|\pazocal{B}_k^H|}$ we have with high probability $1 - \delta$ that 
\begin{equation}\label{eq:hessianDeviationBound}
    \norm{\mathbf{B}_k  - \Diag(\nabla^2 f(\mathbf{x}_k))}_2 \leq \epsilon.
\end{equation} 
In addition, if
\begin{equation}\label{eq:hessianSampling}
    |\pazocal{B}_k^H| \geq \sqrt{d} L_g \frac{\ln \frac{2d}{\delta}}{\pazocal{S}\norm{\mathbf{s}_k}_2 C_B}
\end{equation} 
and Lemma~\ref{lemma:localConvCond} holds, $\mathbf{B}_k$ satisfies Assumption~\ref{assum:hessAgreementAssum}.
\begin{proof}
The proof can be found in Appendix~\ref{supplement:hessianDeviationBound}.
\end{proof}
\label{lemma:hessianDeviationBound}
\end{restatable}

\begin{restatable}{cor}{gradienthessianSampling} 
If 
\begin{equation}\label{eq:gradienthessianSampling}
    |\pazocal{B}_k| \geq \max \left\{32 L_f^2 \frac{\ln \frac{1}{\delta} + \frac{1}{4}}{C_g^2 \norm{\mathbf{s}_{k-1}}_2^4}, \sqrt{d} L_g \frac{\ln \frac{2d}{\delta}}{\pazocal{S}\norm{\mathbf{s}_{k-1}}_2 C_B} \right\}, 
\end{equation} 
then $\mathbf{g}_k$ and $\mathbf{B}_k$ satisfy Assumptions~\ref{assum:gradAgreementAssum} and~\ref{assum:hessAgreementAssum} with probability $1-\delta$, for $\delta \in (0, 1]$.
\label{cor:gradienthessianSampling}
\begin{proof}
We combine the results of Lemma~\ref{lemma:gradientDeviationBound} and~\ref{lemma:hessianDeviationBound}. 
Note that $\norm{\mathbf{s}_{k-1}}$ is used instead of $\norm{\mathbf{s}_{k}}$. Due to Lemma~\ref{lemma:localConvCond},  $\norm{\mathbf{s}_{k}}_2 \leq \norm{\mathbf{s}_{k-1}}_2 \Leftrightarrow \norm{\mathbf{s}_{k}}_2^{-1} \geq \norm{\mathbf{s}_{k-1}}_2^{-1}$. This modification is useful for the practical application of the sampling schemes. However, this poses a challenge since $C_g$, $C_H$, $L_f$, and $L_g$ are not easily accessible.
\end{proof}
\end{restatable} 

\begin{restatable}{rmrk}{Discussion} 
Lemma~\ref{lemma:localConvCond} and Corollary~\ref{cor:gradienthessianSampling} imply that the sample size is eventually equal to the entire sample size $n$ as Algorithm~\ref{alg:mainAlg} converges. Thus we have
\begin{equation}
    \mathbf{g}_k \to \nabla f(\mathbf{x}_k) \text{ and } \mathbf{B}_k \to \Diag( \nabla^2 f(\mathbf{x}_k)) \text{ as } k \to \infty.
\end{equation} 
This allows us to invoke the deterministic local convergence guarantees as $k \to \infty$ in Theorem~\ref{thm:diagmodelmin}. However, stochastic first- and second-order information from $\mathbf{g}_k$ and $\mathbf{B}_k$ is used. 
\label{rmrk:Discussion}
\end{restatable}

\section{Algorithmic Solution}\label{sec:probSolution}

In Theorem~\ref{thm:minLmaxL2}, it was shown that the optimal $(\nu^*, r^*)$ solving $ \max_{\nu \geq 0, ~r \in \pazocal{D}_\nu} \mathscr {L}_{\xi} (\nu, r)$ is used in~(\ref{eq:solutionh}) to compute the minimizer of~(\ref{prob:cubicOursEquiv}). To solve~(\ref{prob:cubicOursEquiv}) and compute $(\nu^*, r^*)$ Algorithm~\ref{alg:mainAlg} and~\ref{alg:newton} are utilized, respectively. In particular, Lemma~\ref{lemma:nursolution} is employed in Algorithm~\ref{alg:newton}, which is essential for calculating the values of $\nu^*$ and $r^*$.

Let  \textit{VSI}, \textit{SI}, and \textit{UI} stand for the Very Successful, Successful, and Unsuccessful Iteration, respectively, in Algorithm~\ref{alg:mainAlg}~\citep[Section 6.1]{conn2000}. Denote  $\lambda_d^+ (\mathbf{B}_k)$ as the minimal non-negative diagonal shift that makes $\mathbf{B}_k$ sufficiently positive definite to allow a stable computation of the TR step. Details on the selection of $\lambda_d^+ (\mathbf{B}_k)$ in Algorithm~\ref{alg:newton} can be found in~\citet[Section 7.3.11]{conn2000} and~\citep{gould1999}.

\begin{restatable}[Proof in Appendix~\ref{appendix:nursolution}]{lem}{nursolution} 
The optimal values $\nu^*$ and $r^*$ achieving
\begin{equation}\label{prob:minLmaxL4}
\max_{\nu \geq 0, ~r \in \pazocal{D}_\nu} \mathscr {L}_{\xi} (\nu, r)
\end{equation} 
are given by $r^* = \sqrt[3]{\xi}$ and by solving 
\begin{equation}\label{eq:systemnur2}
\phi (\nu^*, r^*) = \frac{1}{\norm{\mathbf{s}(\nu^*, r^*)}_2} - \frac{1}{\sqrt[3]{\xi}} = 0,
\end{equation} w.r.t. $\nu^*$, respectively.
\label{lemma:nursolution}
\end{restatable}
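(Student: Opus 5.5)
Writing $\mathbf{H}=\nabla^2 f(\mathbf{x}_k)$ and $\mathbf{g}=\nabla f(\mathbf{x}_k)$, the plan is to obtain $(\nu^*,r^*)$ from the stationarity conditions of the concave-type dual function $\mathscr{L}_{\xi}(\nu,r)$ in (\ref{eq:lagrangianscr}) over the open set $\{\nu\ge0,\ r\in\pazocal{D}_\nu\}$. On this set $\mathbf{H}+\frac{\nu r}{2}\mathbf{I}\succ 0$, so $\mathscr{L}_{\xi}$ is smooth. I would differentiate it using $\partial_t (\mathbf{A}+t\mathbf{I})^{-1} = -(\mathbf{A}+t\mathbf{I})^{-2}$ together with $\mathbf{s}(\nu,r)$ from (\ref{eq:hx}). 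This gives
\begin{equation*}
\partial_r \mathscr{L}_{\xi} = \frac{\nu}{4}\norm{\mathbf{s}(\nu,r)}_2^2 - \frac{\nu}{4} r^2, \qquad
\partial_\nu \mathscr{L}_{\xi} = \frac{r}{4}\norm{\mathbf{s}(\nu,r)}_2^2 - \frac{\xi}{6} - \frac{r^3}{12}.
\end{equation*}

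For an interior maximizer with $\nu^*>0$, setting $\partial_r\mathscr{L}_{\xi}=0$ forces $r^*=\norm{\mathbf{s}(\nu^*,r^*)}_2$, since $r>0$. This is exactly the fixed-point relation already seen in (\ref{eq:solutionhopt}) of Lemma~\ref{lemma:minLmaxL}. Substituting it into $\partial_\nu\mathscr{L}_{\xi}=0$ gives $\frac{(r^*)^3}{4}-\frac{(r^*)^3}{12}=\frac{\xi}{6}$, i.e. $(r^*)^3=\xi$, so $r^*=\sqrt[3]{\xi}$. Combining the two relations then yields $\norm{\mathbf{s}(\nu^*,r^*)}_2=\sqrt[3]{\xi}$, which is equivalent to $\phi(\nu^*,r^*)=0$ in (\ref{eq:systemnur2}). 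Read this way, for fixed $r^*$ it is a scalar equation in $\nu^*$, the secular equation solved by Newton's method in Algorithm~\ref{alg:newton}. I would note that this is consistent with complementary slackness in Lemma~\ref{lemma:globalSol}: $\nu^*>0$ forces $\norm{\mathbf{s}^*}_2^3=\xi$.

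Next I must show the stationary point is the maximizer and the equation is solvable. I would invoke Theorem~\ref{thm:minLmaxL2}, which states that the maximizer yields the minimizer $\mathbf{s}^*$ of Lemma~\ref{lemma:globalSol}, together with Corollary~\ref{cor:strongduality}, so the first-order conditions are also sufficient. For solvability I would use the standard monotonicity argument. The function $\nu\mapsto\norm{(\mathbf{H}+\frac{\nu r^*}{2}\mathbf{I})^{-1}\mathbf{g}}_2$ is strictly decreasing on the admissible range, because $\frac{d}{dt}\norm{(\mathbf{H}+t\mathbf{I})^{-1}\mathbf{g}}_2^2=-2\,\mathbf{g}^T(\mathbf{H}+t\mathbf{I})^{-3}\mathbf{g}<0$. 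It tends to $0$ as $\nu\to\infty$ and to $+\infty$ as $\frac{\nu r^*}{2}\downarrow -\lambda_d(\mathbf{H})$, provided $\mathbf{g}$ has a component along the bottom eigenvector. The root of $\phi$ therefore exists and is unique, and $\phi$ itself, being the reciprocal form, is nearly linear, which justifies the Newton root finder.

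The main obstacle is the boundary and degenerate cases, where the interior argument fails. The first is $\nu^*=0$: here $\partial_r\mathscr{L}_{\xi}\equiv0$ and the step lies strictly inside $\Omega$, which requires $\mathbf{H}\succ0$ and $\norm{\mathbf{H}^{-1}\mathbf{g}}_2^3<\xi$. The second is the ``hard case'', where $\mathbf{g}$ is orthogonal to the eigenspace of $\lambda_d$ and the supremum is attained only on the boundary of $\pazocal{D}_\nu$. For the first I would argue separately via complementary slackness that the statement is then trivial or vacuous, taking $r^*=\sqrt[3]{\xi}$ by convention. For the second I would either impose the generic assumption that $\mathbf{g}$ is not orthogonal to that eigenspace, or handle it with the shift $\lambda_d^+(\mathbf{B}_k)$ as in the trust-region literature.
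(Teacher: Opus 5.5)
Your proposal is correct and follows the paper's proof essentially step for step: setting $\partial_r\mathscr{L}_{\xi}=0$ gives $r=\norm{\mathbf{s}(\nu,r)}_2$, substituting this into $\partial_\nu\mathscr{L}_{\xi}=0$ gives $r^*=\sqrt[3]{\xi}$, and $\norm{\mathbf{s}(\nu^*,r^*)}_2=\sqrt[3]{\xi}$ is then recast as the secular equation $\phi=0$; the paper likewise treats only $\nu>0$ and defers $\nu=0$ and the hard case to Algorithm~\ref{alg:newton}, and your existence and sufficiency remarks go beyond what the paper writes. One small correction: the case $\nu^*=0$ with an inactive constraint is not trivial or vacuous, because there $\norm{\mathbf{s}(0,r)}_2^3<\xi$ and hence $\phi(0,\sqrt[3]{\xi})>0$, so that case must be excluded from the characterization rather than absorbed into it by convention.
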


\begin{algorithm}[!ht]
    \caption{\textsc{AdaCubic} algorithm}
    \label{alg:mainAlg}
    \begin{algorithmic}[1]
    \State{Set $\xi_k\gets 1$, $\kappa_{\text{easy}} \in (0, 1)$, $0 < \alpha_2 < 1 \leq \alpha_1$, and $0 < \eta_1 \leq \eta_2 < 1$. } 
	\Repeat \Comment{$k$-th iteration, $k=0,1,\dots$}
	{\LineCommentCont{The function $F$, $\mathbf{B}_k$, and~$\mathbf{g}_k$ are evaluated on the same batch.}}
        {\LineCommentCont{\textsc{RootFinder} is Algorithm~\ref{alg:newton}.}}
	\State{$\mathbf{s}_{k+1}, \: \nu_{k+1} \gets \textsc{RootFinder}(\mathbf{B}_{k}, \mathbf{g}_{k},  \xi_k, \kappa_{\text{easy}})$} \label{line:innerRootFinder}
\State{Compute $\rho_k$ using \begin{equation*}
\rho_k = \frac{F(\mathbf{x}_k) - F(\mathbf{x}_k + \mathbf{s}_{k+1} )}{F(\mathbf{x}_k) - \mathfrak{m}_{\nu_{k+1}}(\mathbf{s}_{k+1})}
\end{equation*}}\label{line:rhodef}

\If{$\rho_k \geq \eta_1$}	
	\State{$\mathbf{x}_{k+1} \gets \mathbf{x}_k + \mathbf{s}_{k+1}$} 
\Else
 \State{$\mathbf{x}_{k+1} \gets \mathbf{x}_k$} 
\EndIf 
 \label{line:rhok}
    \State{Update $\xi_k$ using \begin{equation*}
    \xi_{k+1} \gets \begin{cases}
      \max \left\{\alpha_1 \norm{\mathbf{s}_{k+1}}_2^3,\: \xi_k \right\} & \mbox{if $\rho_k \geq \eta_2$} \hspace{.4cm}  \text{\Comment{VSI}} 
      \\
      \text{keep the same} \: \xi_k &  \mbox{if $\rho_k \in [\eta_1, \eta_2)$} \hfill  \text{\Comment{SI}} \\ 
      \max \left\{ \alpha_2 \norm{\mathbf{s}_{k+1}}_2^3, \epsilon_m \right\}& \mbox{if $\rho_k \leq \eta_1$}  \hspace{.9cm} \text{\Comment{UI}}
    \end{cases}
  \end{equation*} where $\epsilon_m \approx 10^{-6}$.} \label{line:updatexi}	
\Until{execution stops (e.g., after a specific number of training epochs)}
\end{algorithmic}
\end{algorithm}

\begin{algorithm}[!ht]
\caption{Find model minimizer}
\label{alg:newton}
\begin{algorithmic}[1]
\Procedure{RootFinder}{$\mathbf{B}, \mathbf{g},  \xi, \kappa_{\text{easy}}$}
\State{Set $r \gets \sqrt[3]{\xi}$} \label{lineAlgNewton:initr}
\If{$\mathbf{B}$ is positive definite} 
\State{$\nu \gets 0$} \label{lineAlgNewton:initnu1}
\Else 
\LineCommentCont{For some $\lambda_d^+ (\mathbf{B})$ barely smaller than $\lambda_d (\mathbf{B})$}
\State{$\nu \gets -2 \:\lambda_d^+ (\mathbf{B}) \: \big/ \: r$}\label{lineAlgNewton:initnu2}  
\EndIf
\State{Compute $\mathbf{s} = -(\mathbf{B} + \frac{1}{2}\nu \: r\:\mathbf{I})^{-1}\:\mathbf{g}$}
\If{$\norm{\mathbf{s}}_2^3 \leq \xi$}
\If{$\mathbf{B}$ is positive definite or $\norm{\mathbf{s}}_2^3 = \xi$}
\State{\textbf{return} $\mathbf{s}$, $\nu$}
\Else
\StatePar{Compute the eigenvector $\mathbf{u}_d$ that corresponds to the eigenvalue $\lambda_d (\mathbf{B})$. Then find the root $\alpha$ of the equation $\norm{\mathbf{s} + \alpha~\mathbf{u}_d }_2 = \xi^{1/3}$ which 
makes the model $\mathfrak{m}_{\nu}(\mathbf{s} + \alpha \: \mathbf{u}_d)$ the smallest.
}
\State{\textbf{return} $\mathbf{s} + \alpha~\mathbf{u}_d$, $\nu$}
\EndIf
\EndIf
\LineComment{The following, produces $\mathbf{s}^*$ and $\nu^*$ in Lemma~\ref{lemma:globalSol}.}
\While{$|\norm{\mathbf{s}}_2-\xi^{1/3}| \leq \kappa_{\text{easy}} \:\xi^{1/3}$}\label{lineAlgNewton:neutonalgstart} 
\LineComment{By Remark~\ref{rmrk:safequardr}, $\nu$ increases.}
\State{$\nu \gets \nu - \phi(\nu, r) \: / \: \partial_{\nu} \phi(\nu, r)$} \label{lineAlgNewton:newtonUpdate}
\State{$\mathbf{s} = -(\mathbf{B} + \frac{1}{2}\nu \: r\:\mathbf{I})^{-1}\:\mathbf{g}$}
\EndWhile \label{lineAlgNewton:neutonalgend}
\State{\textbf{return} $\mathbf{s}$, $\nu$}
\EndProcedure%
\end{algorithmic}
\end{algorithm} 

Next, we clarify the role and physical interpretation of the \textsc{AdaCubic} hyperparameters as they appear in Algorithms~\ref{alg:mainAlg} and~\ref{alg:newton}:

\begin{itemize}
\item \textbf{${\eta_1}$ (acceptance threshold).}
$\eta_1 \in (0,1)$ is the minimum ratio between the actual loss reduction and the predicted reduction of the cubic model required to accept a step. If $\rho_k \ge \eta_1$, the step is considered successful and the parameters are updated. This parameter ${\eta_1}$ controls how cautiously the algorithm accepts update steps. Smaller values make acceptance easier, while larger values enforce stricter agreement between the cubic model $\mathfrak{m}_{\nu_{k+1}} (\mathbf{s}_{k+1})$ and the objective function $F(\mathbf{x}_{k}+\mathbf{s}_{k+1})$.

\item \textbf{${\eta_2}$ (very successful threshold).}
$\eta_2 \geq \eta_1$ identifies very successful iterations. When $\rho_k \ge \eta_2$, the effective trust-region boundary is expanded, allowing larger steps in subsequent iterations. This mechanism accelerates convergence when the cubic model $\mathfrak{m}_{\nu_{k+1}} (\mathbf{s}_{k+1})$ is highly accurate.

\item \textbf{${\alpha_1}$ (expansion factor).}
$\alpha_1 \geq 1$ controls the increase of the trust-region parameter $\xi_k$ after very successful iterations, thereby expanding the effective trust-region boundary.

\item \textbf{${\alpha_2}$ (shrinkage factor).} 
$\alpha_2 \in (0,1)$ decreases the trust-region boundary after unsuccessful iterations ($\rho_k \le \eta_1$). By shrinking the trust-region boundary, more conservative updates are obtained, thereby improving robustness in regions where the cubic model $\mathfrak{m}_{\nu_{k+1}}(\mathbf{s})$ is less accurate.

\item \textbf{${\kappa_{\text{easy}}}$ (root-finding tolerance).} 
$\kappa_{\text{easy}} \in (0,1)$ specifies the error tolerance to terminate the Newton iterations when solving the cubic subproblem in Algorithm~\ref{alg:newton}. ${\kappa_{\text{easy}}}$ determines how close the norm of the computed step should be to the trust-region boundary before the termination of the dual variable calculation. Smaller values enforce higher accuracy in solving the subproblem, while larger values favor computational efficiency.

\end{itemize}

Overall, $\eta_1$ and $\eta_2$ govern step acceptance, $\alpha_1$ and $\alpha_2$ regulate updates of the trust-region boundary, and $\kappa_{\text{easy}}$ balances accuracy and efficiency in the inner solver of Algorithm~\ref{alg:newton}. \textsc{AdaCubic} adaptively computes the dual parameter $\nu_{k+1}$, which determines the step $\mathbf{s}_{k+1}$, the acceptance ratio $\rho_k$, and consequently the evolution of the trust-region parameter $\xi_k$. The dual variable $\nu_{k+1}$ encodes local curvature information through the Hessian approximation and acts as an adaptive term in the cubic subproblem. This relationship enables an automatic adjustment of $\xi_k$, allowing \textsc{AdaCubic} to respond effectively to the local geometry of the non-convex loss landscape and to achieve competitive performance across the benchmarks in Section~\ref{sec:expeval}.

\section{Experimental Evaluation}\label{sec:expeval}

Experiments are conducted on computer vision, natural language processing, and signal processing tasks, where the results obtained with the proposed \textsc{AdaCubic} optimizer are compared with those obtained with the \textsc{SGD}, \textsc{Adam}, and \textsc{AdaHessian} optimizers. The natural language processing experiments are conducted using the Hugging Face Transformers library~\citep{transformers2020}. For \textsc{SGD}, \textsc{Adam}, and \textsc{AdaHessian}, the Learning Rate (LR) is fine-tuned. For \textsc{AdaCubic}, the parameters $\eta_1=0.05$, $\eta_2=0.75$, $\alpha_1=2.5$,~$\alpha_2=0.25$, and~$\kappa_{\text{easy}} = 0.01$ are chosen universally in the experimental evaluation. These parameters are chosen based on the analysis in~\citep[Section 17.1]{conn2000}. Table~\ref{tab:adacubic_params} summarizes the universal hyperparameter values used by \textsc{AdaCubic} across all benchmarks.

\begin{table}[!ht]
\centering
\caption{Universal \textsc{AdaCubic} hyperparameter settings.
All hyperparameters in Algorithm~\ref{alg:newton} are fixed across benchmarks.
$\epsilon_m$ denotes a numerical safeguard used in Algorithm~\ref{alg:mainAlg}.}
\label{tab:adacubic_params}
\begin{tabular}{lcccccc}
\toprule
& \multicolumn{6}{c}{\textsc{AdaCubic} Hyperparameters} \\
\cmidrule(lr){2-7}
\textbf{Hyperparameter}
& $\eta_1$
& $\eta_2$
& $\alpha_1$
& $\alpha_2$
& $\kappa_{\text{easy}}$
& $\epsilon_m$ \\
\midrule
\textbf{Assigned Value}
& $0.05$
& $0.75$
& $2.5$
& $0.25$
& $0.01$
& $10^{-6}$ \\
\bottomrule
\end{tabular}
\end{table}
Tables~\ref{tab:exp_overview} and~\ref{tab:opt_hparams} summarize the experimental configurations for each benchmark, including datasets, model architectures, optimizers, and LR settings.

\begin{table}[!ht]
\centering
\caption{Summary of model architectures, training settings, and optimizers used in all experiments.}
\label{tab:exp_overview}
\resizebox{\columnwidth}{!}{
\begin{tabular}{lcccccc}
\toprule
\textbf{Task} & \textbf{Dataset} & \textbf{Model} & \textbf{Batch} & \textbf{Epochs} & \textbf{Optimizers} \\
\midrule
\multirow{2}{*}{CV}
& \texttt{CIFAR-10}   
& \texttt{ResNet20} / \texttt{ResNet32} 
& 256 & 500 
& \textsc{SGD}, \textsc{Adam}, \textsc{AdaHessian}, \textsc{AdaCubic} \\

& \texttt{CIFAR-100}  
& \texttt{ResNet18}            
& 256 & 200 
& \textsc{SGD}, \textsc{Adam}, \textsc{AdaHessian}, \textsc{AdaCubic} \\
\midrule
\multirow{3}{*}{NLU}
& \texttt{SST-2}, \texttt{QNLI}, \texttt{RTE}, \texttt{WNLI} 
& \texttt{SqueezeBERT} 
& 32 & 15 
& \textsc{SGD}, \textsc{AdaHessian}, \textsc{AdaCubic} \\

& \texttt{MRPC}, \texttt{QQP}             
& \texttt{SqueezeBERT} 
& 32 & 15 
& \textsc{SGD}, \textsc{AdaHessian}, \textsc{AdaCubic} \\

& \texttt{STS-B}, \texttt{MNLI}           
& \texttt{SqueezeBERT} 
& 32 & 15 
& \textsc{SGD}, \textsc{AdaHessian}, \textsc{AdaCubic} \\
\midrule
\multirow{2}{*}{LM}
& \texttt{WikiText-2} 
& \texttt{RoBERTa} / \texttt{BERT} / \texttt{DistilBERT} 
& 8 & 6 
& \textsc{SGD}, \textsc{AdaHessian}, \textsc{AdaCubic} \\

& \texttt{PTB}        
& \texttt{RoBERTa} / \texttt{BERT} / \texttt{DistilBERT} 
& 8 & 6 
& \textsc{SGD}, \textsc{AdaHessian}, \textsc{AdaCubic} \\
\midrule
CMI
& \texttt{VISION} 
& \texttt{ResNet18} 
& 256 & 100 
& \textsc{Adam}, \textsc{AdaCubic} \\
\bottomrule
\end{tabular}}
\end{table}

\begin{table}[!ht]
\centering
\caption{Summary of LRs used in all experiments. For the CV benchmark, LRs are decayed by a factor of $10$ at epochs $80$ and $120$ on \texttt{CIFAR-10}, and by a factor of $20$ at epochs $60$, $120$, and $160$ on \texttt{CIFAR-100}. For NLU, LM, and CMI benchmarks, no LR decay is applied. For CMI, LR is decayed by a factor of $10$ at epochs $80$ and $120$. \textsc{AdaCubic} is used with a fixed universal parameter set and does not require LR tuning.}
\label{tab:opt_hparams}
\begin{tabular}{lllcc}
\toprule
\textbf{Optimizer} & \textbf{Task} & \textbf{Dataset(s)} & \textbf{Initial LR} & \textbf{LR Schedule / Tuning} \\
\midrule
\textsc{SGD}
& CV
& \texttt{CIFAR-10} / \texttt{CIFAR-100}
& $0.1$
& Step decay (tuned) \\

\textsc{Adam}
& CV
& \texttt{CIFAR-10} / \texttt{CIFAR-100}
& $10^{-3}$
& Step decay (tuned) \\

\textsc{AdaHessian}
& CV
& \texttt{CIFAR-10} / \texttt{CIFAR-100}
& $0.15$
& Step decay (tuned) \\

\textsc{AdaCubic}
& CV
& \texttt{CIFAR-10} / \texttt{CIFAR-100}
& no LR
& Universal parameters \\

\midrule
\textsc{SGD}
& NLU
& \texttt{SST-2}, \texttt{QNLI}, \texttt{RTE}, \texttt{WNLI}
& $2{\times}10^{-2}$
& Tuned \\

\textsc{SGD}
& NLU
& \texttt{STS-B}
& $2{\times}10^{-3}$
& Tuned \\

\textsc{AdaHessian}
& NLU
& \texttt{SST-2}, \texttt{QNLI}, \texttt{STS-B}, \texttt{MNLI}
& $2{\times}10^{-3}$
& Tuned \\

\textsc{AdaHessian}
& NLU
& \texttt{MRPC}, \texttt{RTE}
& $2{\times}10^{-4}$
& Tuned \\

\textsc{AdaHessian}
& NLU
& \texttt{WNLI}
& $2{\times}10^{-2}$
& Tuned \\

\textsc{AdaCubic}
& NLU
& All \texttt{GLUE} tasks
& no LR
& Universal parameters \\

\midrule
\textsc{SGD}
& LM
& \texttt{WikiText-2}, \texttt{PTB}
& $5{\times}10^{-3}$
& Tuned \\

\textsc{AdaHessian}
& LM
& \texttt{WikiText-2} (all models)
& $5{\times}10^{-4}$
& Tuned \\

\textsc{AdaHessian}
& LM
& \texttt{PTB} (\texttt{RoBERTa})
& $5{\times}10^{-3}$
& Tuned \\

\textsc{AdaHessian}
& LM
& \texttt{PTB} (\texttt{BERT}, \texttt{DistilBERT})
& $5{\times}10^{-4}$
& Tuned \\

\textsc{AdaCubic}
& LM
& \texttt{WikiText-2}, \texttt{PTB}
& no LR
& Universal parameters \\

\midrule
\textsc{Adam}
& CMI
& \texttt{VISION}
& $10^{-4}$
& Tuned \\

\textsc{AdaCubic}
& CMI
& \texttt{VISION}
& no LR
& Universal parameters \\
\bottomrule
\end{tabular}
\end{table}

\noindent\textbf{Computer Vision (CV).} To prove the effectiveness of \textsc{AdaCubic}, experiments are conducted using \texttt{CIFAR-10} and \texttt{CIFAR-100} datasets~\citep{krizhevsky2009}. The experimental results are summarized in Table~\ref{tbl:imageclassification}. In all experiments, a batch size of 256 is used. The mean accuracy and standard deviation (std) over five runs are reported for each experiment. The number of epochs used to train the models on \texttt{CIFAR-10} and \texttt{CIFAR-100} is 500 and 200, respectively. In addition, the optimizers are fine-tuned w.r.t. the initial LR and the decaying LR scheme. For \textsc{SGD}, \textsc{Adam}, and \textsc{AdaHessian}, the initial learning rates are 0.1, 0.001, and 0.15. Furthermore, for \textsc{AdaHessian}, $\beta_1$ and $\beta_2$ are set to 0.9 and 0.999, respectively. On \texttt{CIFAR-10}, the LR is decayed by a factor of 10 at epochs 80 and 120, while on \texttt{CIFAR-100}, the LR is decayed by a factor of 20 at epochs 60, 120, and 160. In addition, spatial averaging~\citep{ya2021} is used for \textsc{AdaCubic} and \textsc{AdaHessian} on \texttt{CIFAR-100}. The entries corresponding to the best accuracy are marked in bold. $\Delta$ reports the accuracy differences between \textsc{AdaCubic} and the strongest competing optimizer in each setting. When spatial averaging is used, the accuracy is shown in gray.

On the \texttt{CIFAR-10} dataset, both \textsc{AdaHessian} and \textsc{AdaCubic} demonstrate higher accuracy than conventional optimization methods like \textsc{SGD} and \textsc{Adam}. It is worth noting that, while both methods excel, \textsc{AdaHessian} achieves a slight edge in accuracy over \textsc{AdaCubic} for \texttt{ResNet20} and \texttt{ResNet32} by 0.15\% and 0.5\%, respectively. This performance distinction underscores the effectiveness of \textsc{AdaCubic} and positions it as a formidable competitor to \textsc{AdaHessian} in enhancing model accuracy on the \texttt{CIFAR-10} dataset.

On the \texttt{CIFAR-100} dataset without spatial averaging, \textsc{AdaCubic} falls behind \textsc{SGD}, \textsc{Adam}, and \textsc{AdaHessian} by margins of 0.81\%, 0.23\%, and 0.64\%, respectively. However, with spatial averaging, both \textsc{AdaHessian} and \textsc{AdaCubic} achieve improved accuracy. This comparative analysis highlights \textsc{AdaCubic}'s distinct performance characteristics, demonstrating its unique capabilities relative to other optimizers in challenging scenarios, such as on the \texttt{CIFAR-100} dataset.

\begin{table}[!ht]
\caption{Accuracy (\%) and std of the accuracy measures for \texttt{ResNet18/20/32} models on \texttt{CIFAR-10} and \texttt{CIFAR-100} datasets. $\Delta$ reports the gap between the strongest competing optimizer and \textsc{AdaCubic}.}
\centering
\begin{tabular}{cccc}  
\cmidrule{2-4}
& \multicolumn{2}{c}{\texttt{CIFAR-10}} & \texttt{CIFAR-100} \\ \cmidrule{2-4}  & \texttt{ResNet20} & \texttt{ResNet32} & \texttt{ResNet18}  \\
\midrule
\textsc{SGD} &  88.52 $\pm$ 0.24  & 89.02 $\pm$ 0.20  & \textbf{72.62} $\pm$ 0.002 \\ 
\textsc{Adam} &  90.26 $\pm$ 0.19  & 91.24 $\pm$ 0.20  & 72.04 $\pm$ 0.13 \\ 
\midrule
\multirow{2}{*}{\textsc{AdaHessian}} & \textbf{91.64} $\pm$ 0.46 & \textbf{93.15} $\pm$ 0.12   & 72.45 $\pm$ 0.16 \\  &-&-& \colorbox{gray!30}{72.59 $\pm$ 0.271} \\
\midrule
\multirow{2}{*}{\textsc{AdaCubic}}  &  91.49 $\pm$ 0.46  & 92.65 $\pm$ 0.19   & 71.81 $\pm$ 0.003 \\  
 & - & - &  \colorbox{gray!30}{72 $\pm$ 0.337} \\
 \midrule
\multirow{2}{*}{$\Delta$}  &  0.15 $\pm$ 0.36   & 0.5 $\pm$ 0.07  & 0.81 $\pm$ 0.001 \\  
 & - & - &  \colorbox{gray!30}{0.59 $\pm$ 0.066} \\
\bottomrule
\end{tabular}
\label{tbl:imageclassification}
\end{table}

\begin{figure}[!ht]
\centering
\rule[-.5cm]{0cm}{4cm}
\subfloat[\label{fig:lossCifar10a}]{\includegraphics[width=0.8\linewidth]{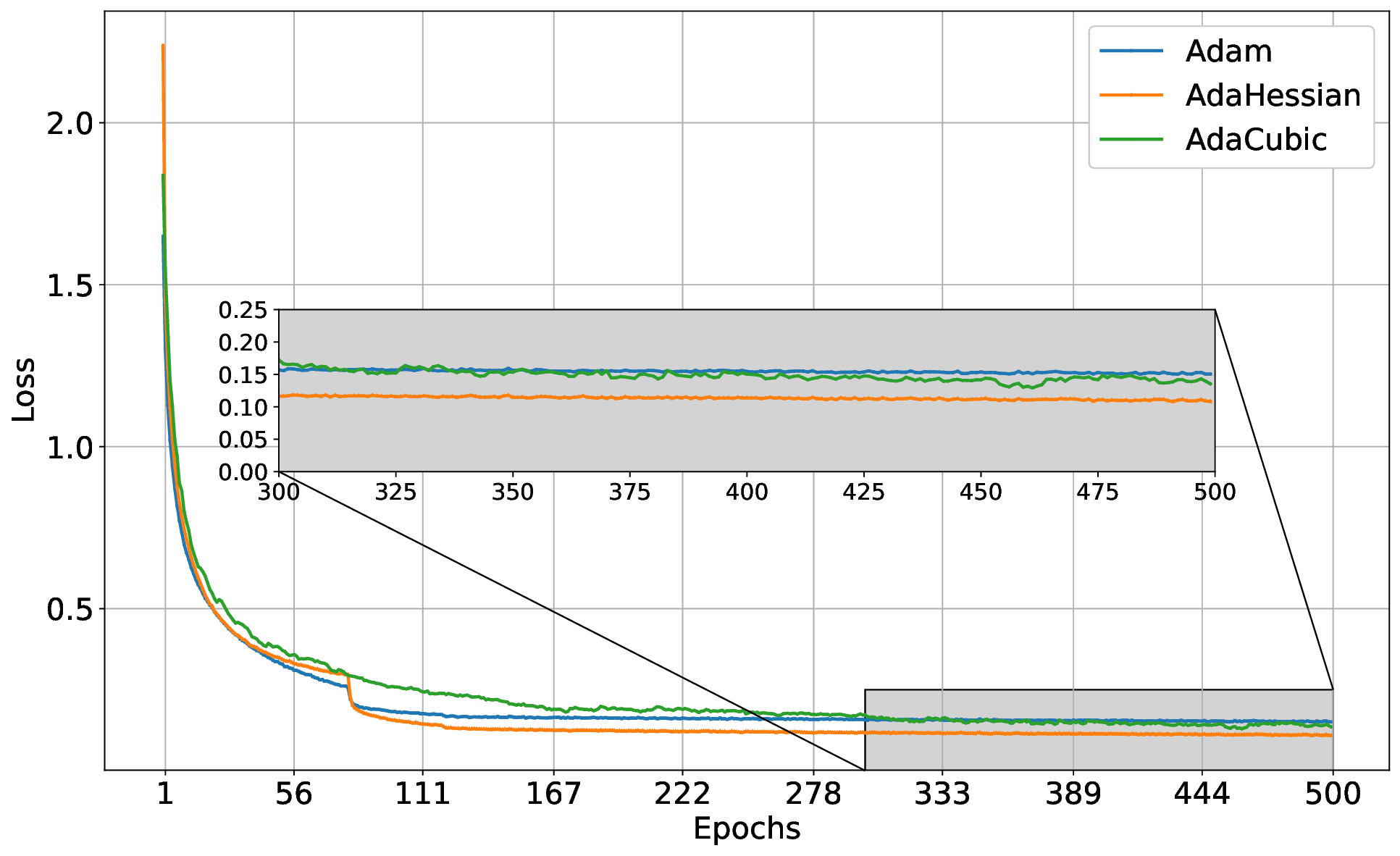}}
\rule[-.5cm]{0cm}{4cm}
\subfloat[\label{fig:lossCifar10b}]{\includegraphics[width=0.8\linewidth]{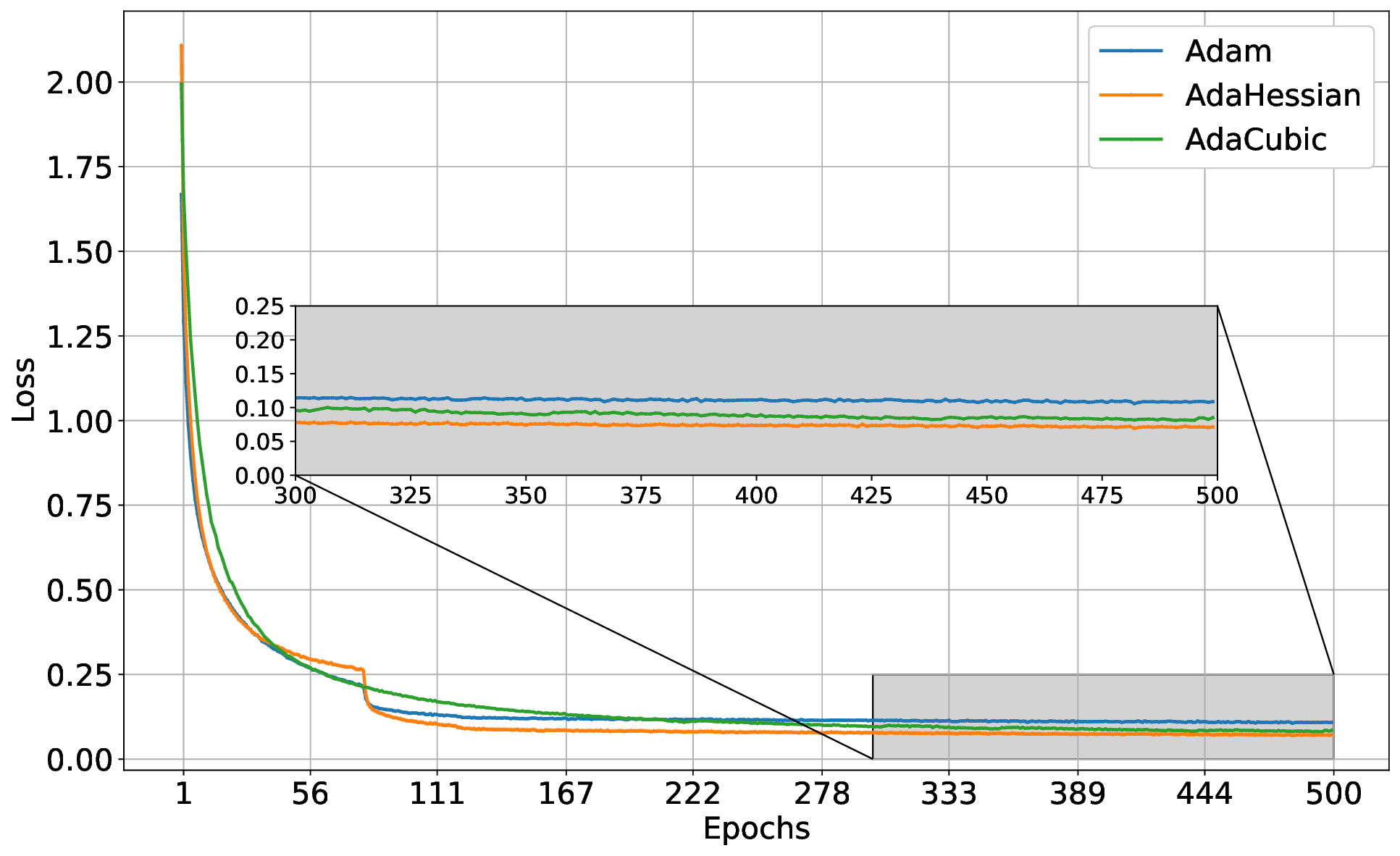}}
\rule[-.5cm]{0cm}{4cm}
\caption{Training loss curve of \texttt{ResNet20} (top) and \texttt{ResNet32} (bottom) on \texttt{CIFAR-10} for \textsc{Adam}, \textsc{AdaHessian}, and \textsc{AdaCubic} optimizers.}
\label{fig:lossCifar10}
\end{figure}

Figure~\ref{fig:lossCifar10} depicts the training loss of \texttt{ResNet20} (top) and \texttt{ResNet32} (bottom) on \texttt{CIFAR-10} for \textsc{Adam}, \textsc{AdaHessian}, and \textsc{AdaCubic} optimizers. As can be seen, the losses of \textsc{Adam} and \textsc{AdaHessian} decrease dramatically at epoch 80, when the LR has decayed by a factor of 10. 
As can be seen, only by using an adaptive LR can the training loss reduction of \textsc{Adam} and \textsc{AdaHessian} match that of \textsc{AdaCubic}. In the last epochs, the loss of \textsc{AdaCubic} is lower than that of \textsc{Adam} and higher than that of \textsc{AdaHessian}. It should be noted that, in all experiments, \textsc{AdaCubic} is used with the same set of parameters and achieves competitive performance compared to the remaining fine-tuned optimizers.

\begin{table}[ht!]
\caption{Figures of merit on \texttt{GLUE} benchmark using \textsc{SGD}, \textsc{AdaHessian}, and \textsc{AdaCubic} optimizers on natural language understanding tasks. $\Delta$ reports the gap between the strongest competing optimizer and \textsc{AdaCubic}}.
\centering
\begin{tabular}{ccccc}  
\toprule
\multirow{2}{*}{\textbf{Dataset}} & \textsc{SGD} & \textsc{AdaHessian} & \textsc{AdaCubic} & $\Delta$ \\
\cmidrule{2-5}
& \multicolumn{4}{c}{\textbf{Accuracy (\%)}} \\
\cmidrule{1-5}
\texttt{SST-2}  & \textbf{91.62} & 90.71 & 90.71 & 0.91 \\
\texttt{QNLI}   & \textbf{90.37} & 89.47 & 90.01 & 0.36 \\
\texttt{RTE}    & \textbf{70.39} & 64.98 & \textbf{70.39} & 0.00 \\
\texttt{WNLI}   & \textbf{56.33} & \textbf{56.33} & \textbf{56.33} & 0.00 \\
\midrule
& \multicolumn{4}{c}{$F_1$ / \textbf{Accuracy (\%)}} \\
\cmidrule{1-5}
\texttt{MRPC}  & \textbf{0.9094 / 87.25} & 0.8562 / 78.18 & {0.9042 / 86.76} & 
$0.0052 / 0.49$ \\
\texttt{QQP}   & \textbf{0.8775 / 90.89} & 0.8742 / 90.82 & 0.8723 / 90.40 &
$0.0052 / 0.49$ \\
\midrule
& \multicolumn{4}{c}{\textbf{Pearson / Spearman Corr.}} \\
\cmidrule{1-5}
\texttt{STS-B} & \textbf{0.8863 / 0.8845} & 0.8786 / 0.8735 & {0.8832 / 0.8814} & 
$0.0031 / 0.0031$ \\
\midrule
& \multicolumn{4}{c}{\textbf{Matched / Mismatched Accuracy (\%)}} \\
\cmidrule{1-5}
\texttt{MNLI}  & \textbf{82.45 / 82.05} & 81.65 / 81.57 & {81.88 / 81.89} &
$0.57 / 0.16$ \\
\bottomrule
\end{tabular}
\label{tbl:gluetask}
\end{table}

On \texttt{CIFAR-10}, \textsc{AdaCubic} consistently outperforms first-order methods (\textsc{SGD}, \textsc{Adam}) and ranks second to \textsc{AdaHessian}, with very small gaps of $0.15\%$ and $0.5\%$ for \texttt{ResNet20} and \texttt{ResNet32}, respectively, as summarized in Table~\ref{tbl:imageclassification}. On \texttt{CIFAR-100} without spatial averaging, \textsc{AdaCubic} trails the best-performing optimizer by at most $0.81\%$. Due to its larger number of classes and increased classification difficulty, CIFAR-100 will possibly lead to optimization regimes with stronger parameter interactions. Since \textsc{AdaCubic}, like \textsc{AdaHessian}, relies on a diagonal approximation of the Hessian, it does not explicitly capture such off-diagonal curvature effects, which may partially explain the observed gap. Importantly, when spatial averaging is applied, the performance of \textsc{AdaCubic} improves and becomes closer to that of \textsc{AdaHessian} and \textsc{SGD}, confirming that part of the gap is related to high-variance curvature estimation.

\noindent\textbf{Natural Language Understanding (NLU).} Table~\ref{tbl:gluetask} summarizes the results on the natural language understanding task. The \texttt{GLUE} benchmark~\citep{wang2018glue} is used to train the \texttt{SqueezeBERT}~\citep{Iandola2020} model for 15 epochs. For \textsc{SGD} and \textsc{AdaHessian}, the initial LR is fine-tuned in all datasets. For \textsc{SGD}, the initial LR is set to $2\cdot 10^{-2}$ for all datasets except from \texttt{STS-B} where it is set to $2\cdot 10^{-3}$. For \textsc{AdaHessian}, the initial LR is set to $2\cdot 10^{-3}$ for \texttt{SST-2}, \texttt{STS-B}, \texttt{MNLI}, and \texttt{QNLI}, to $2\cdot 10^{-4}$ for \texttt{MRPC} and \texttt{RTE}, and to $2\cdot 10^{-2}$ for \texttt{WNLI}. 

The default parameters of the \texttt{SqueezeBERT} model can be found in the official Hugging Face library\footnote{\url{https://github.com/huggingface/transformers/tree/main/examples/pytorch/text-classification}}. The dataset acronyms in the Hugging Face library are \texttt{SST-2}, \texttt{QNLI}, \texttt{RTE}, \texttt{WNLI}, \texttt{MRPC}, \texttt{QQP}, \texttt{STS-B}, and \texttt{MNLI}, while the model acronym is \texttt{squeezebert/squeezebert-uncased}.

To simplify the experimental evaluation, the experiments are divided into four groups, each corresponding to a different performance measure. Group 1 consists of the \texttt{SST-2}, \texttt{QNLI}, \texttt{RTE}, and \texttt{WNLI} datasets. Group 2 consists of the \texttt{MRPC} and \texttt{QQP} datasets, while groups 3 and 4 consist of the \texttt{SST-B} and \texttt{MNLI} datasets, respectively. The entries corresponding to the best metrics are marked in bold. $\Delta$ reports the accuracy differences between \textsc{AdaCubic} and the strongest competing optimizer in each setting.

\begin{itemize}
    \item \textit{Group 1.} Concerning accuracy measure, \textsc{AdaCubic} and \textsc{AdaHessian} demonstrate the same performance on \texttt{SST-2}, while \textsc{SGD} performs better by 0.91\%. On \texttt{QNLI}, \textsc{SGD} outperforms \textsc{AdaCubic} by 0.36\%, while \textsc{AdaCubic} outperforms \textsc{AdaHessian} by 0.54\%. On \texttt{RTE}, \textsc{AdaCubic} and \textsc{SGD} achieve the same performance, while \textsc{AdaHessian} is outperformed by 5.41\%. On \texttt{WNLI}, all optimizers achieve the same performance. 
    Overall, the mean accuracies achieved by \textsc{SGD}, \textsc{AdaHessian}, and \textsc{AdaCubic} are 77.17\%, 75.37\%, and 76.86\%, respectively. It can be observed that, on average, \textsc{SGD} outperforms \textsc{AdaCubic} by 0.31\%, while \textsc{AdaCubic} outperforms \textsc{AdaHessian} by 1.5\%.
    
    \item \textit{Group 2.} Concerning $F_1$ measure on \texttt{MRPC}, \textsc{SGD} outperforms \textsc{AdaCubic} by  0.0052, while \textsc{AdaCubic} outperforms \textsc{AdaHessian} by 0.048. On the same dataset, \textsc{SGD} achieves higher accuracy than \textsc{AdaCubic} by 0.49\%, whereas \textsc{AdaCubic} outperforms \textsc{AdaHessian} by 8.58\%. Concerning $F_1$ measure on \texttt{QQP}, \textsc{SGD} outperforms \textsc{AdaHessian} by 0.0052, while \textsc{AdaHessian} outperforms \textsc{AdacCubic} by 0.0019. On the same dataset, \textsc{SGD} achieves higher accuracy than \textsc{Adahessian} by 0.07\%, while \textsc{AdaHessian} outperforms \textsc{AdaCubic} by 0.42\%. 
    Overall, the mean $F_1$ values achieved by \textsc{SGD}, \textsc{AdaHessian}, and \textsc{AdaCubic} are 0.89345, 0.8652, and 0.88825, respectively, while the mean accuracies are 89.07\%, 84.5\%, and 88.58\%, respectively. This way, on average, \textsc{SGD} outperforms \textsc{AdaCubic} by 0.0052 and 0.49\%, on $F_1$ and accuracy measures, respectively, while \textsc{AdaCubic} outperforms \textsc{AdaHessian} by 0.02305 and 4.08\%, respectively.

    \item \textit{Group 3.} Concerning Pearson correlation index, \textsc{SGD} outperforms \textsc{AdaCubic} by 0.0031, while \textsc{AdaCubic} outperforms \textsc{AdaHessian} by 0.0046. Regarding the Spearman correlation index, \textsc{SGD} outperforms \textsc{AdaCubic} by 0.0031, while \textsc{AdaCubic} outperforms \textsc{AdaHessian} by 0.0079.

    \item \textit{Group 4.} Concerning matched accuracy~\citep{wang2018glue}, \textsc{SGD} outperforms \textsc{AdaCubic} by 0.57\%, while \textsc{AdaCubic} outperforms \textsc{AdaHessian} by 0.23\%. Concerning mismatched accuracy~\citep{wang2018glue}, \textsc{SGD} outperforms \textsc{AdaCubic} by 0.16\%, while \textsc{AdaCubic} outperforms \textsc{AdaHessian} by 0.32\%. 
\end{itemize}

It is worth noting that \textsc{AdaCubic} exhibits the second-best performance with a pre-fixed universal set of parameters, while \textsc{SGD} and \textsc{AdaHessian} are fine-tuned w.r.t. the initial LR.

\noindent\textbf{Language Modeling (LM).}
Tables~\ref{tbl:languagemodelling1} and~\ref{tbl:languagemodelling2} summarize the results on the language modeling, where perplexity~\citep{jelinek1977} is used as an evaluation metric. \texttt{PTB}~\citep{marcinkiewicz1994} and \texttt{wikitext-2}~\citep{merity2017pointer} datasets are used to train \texttt{RoBERTa}~\citep{liu2019roberta}, \texttt{BERT}~\citep{bert2019}, and \texttt{DistilBERT}~\citep{Sanh2019} models with \textsc{SGD}, \textsc{AdaHessian}, and \textsc{AdaCubic} optimizers.

\begin{table}[!ht]
\caption{Perplexity achieved by \textsc{SGD}, \textsc{AdaHessian}, and \textsc{AdaCubic} on \texttt{wikitext-2} dataset.}
\centering
\begin{tabular}{cccc}  
\toprule
\textbf{Optimizer}& \texttt{RoBERTa}& \texttt{BERT} & \texttt{DistilBERT} \\
\midrule
\textsc{SGD} & \textbf{3.547} &  13.380 & \textbf{6.118} \\ 
\textsc{AdaCubic} & 3.756 & \textbf{5.759} & 6.565\\ 
\textsc{AdaHessian} & 4.374 & 16.151 & 6.822\\ 
\bottomrule
\end{tabular}
\label{tbl:languagemodelling1}
\end{table}

\begin{table}[ht!]
\caption{Perplexity  using \textsc{SGD}, \textsc{AdaHessian}, and \textsc{AdaCubic} on the \texttt{PTB} dataset.}
\centering
\begin{tabular}{cccc}  
\toprule
\textbf{Optimizer}& \texttt{RoBERTa}& \texttt{BERT} & \texttt{DistilBERT} \\
\midrule
\textsc{SGD} & \textbf{4.345} & 17.344 & 8.299  \\ 
\textsc{AdaCubic} & 5.145 & \textbf{14.170} & \textbf{7.334}\\ 
\textsc{AdaHessian} & 7.582 & 20.851 & 10.182\\ 
\bottomrule
\end{tabular}
\label{tbl:languagemodelling2}
\end{table}

The initial LR of \textsc{SGD} is fine-tuned to $5\cdot10^{-3}$ for all models and both datasets. For \textsc{AdaHessian}, the initial LR is fine-tuned to $5\cdot10^{-4}$ for all models on \texttt{wikitext-2} dataset. On \texttt{PTB} dataset, the initial LR of \textsc{AdaHessian} optimizer is set to $5\cdot 10^{-3}$ to train \texttt{RoBERTa} model, while the remaining models are trained with initial LR $5\cdot 10^{-4}$. 
The remaining parameters for the trained models can be found in the official Hugging Face library\footnote{\url{https://github.com/huggingface/transformers/tree/main/examples/pytorch/language-modeling}}. The dataset acronyms in the Hugging Face library are \texttt{ptb\_text\_only} and \texttt{wikitext-2-raw-v1}. In contrast, the model acronyms are \texttt{roberta-base}, \texttt{bert-base-cased}, and~ \texttt{distilbert-base-uncased}.

\begin{figure*}[!ht]
    \centering
    \rule[-.5cm]{0cm}{4cm}
    \subfloat[\centering \texttt{RoBERTa}]{{\includegraphics[width=5.2cm]{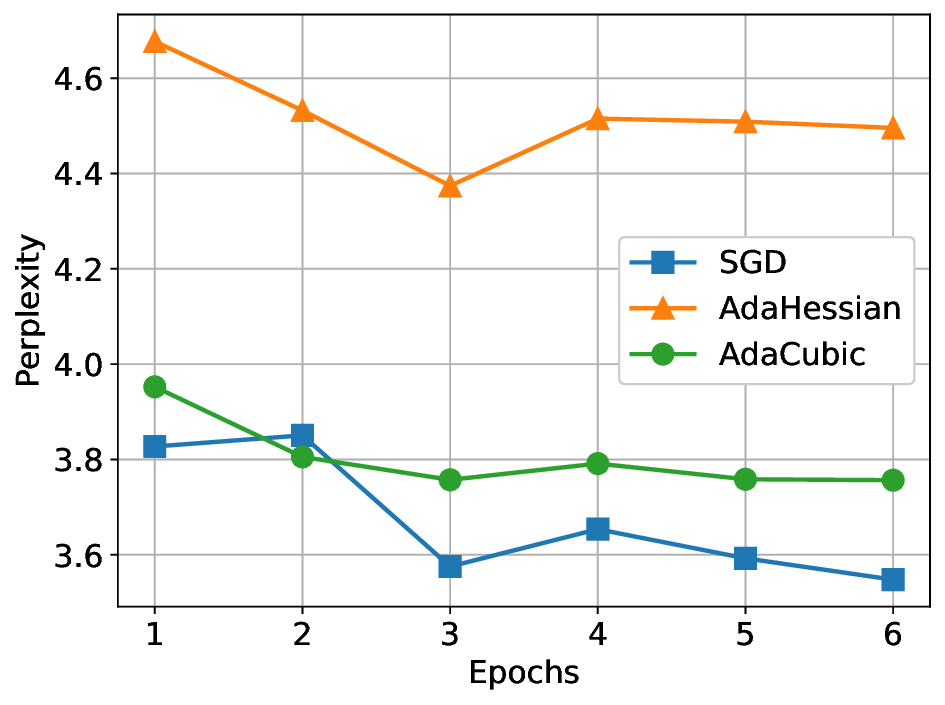} }}%
    \rule[-.5cm]{0cm}{4cm}
    \subfloat[\centering \texttt{BERT}]{{\includegraphics[width=5.2cm]{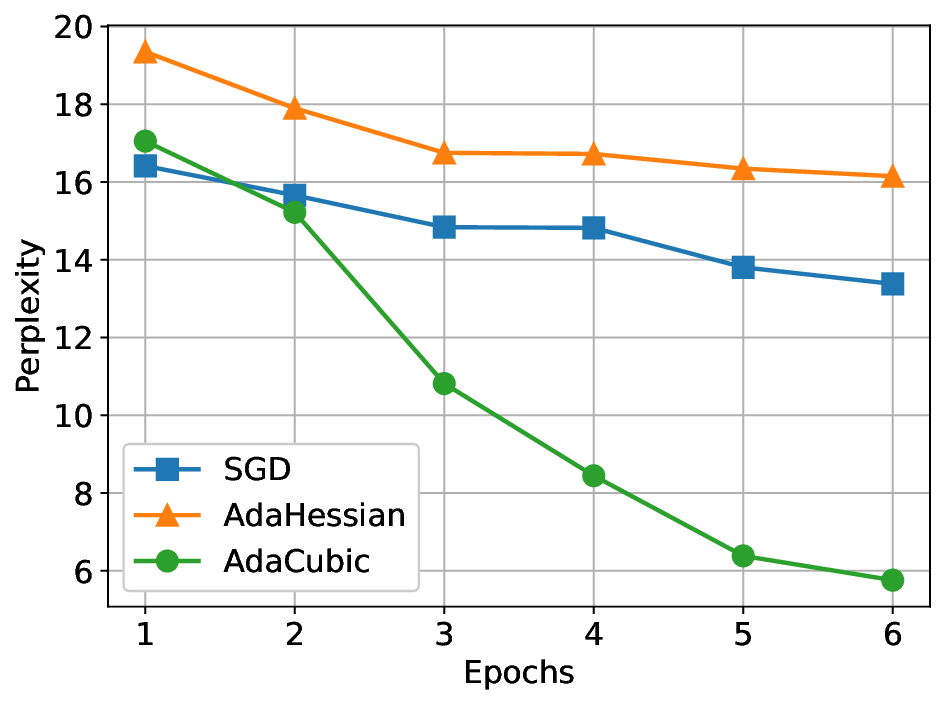} }}%
    \rule[-.5cm]{0cm}{4cm}
    \subfloat[\centering \texttt{DistilBERT}]{{\includegraphics[width=5.2cm]{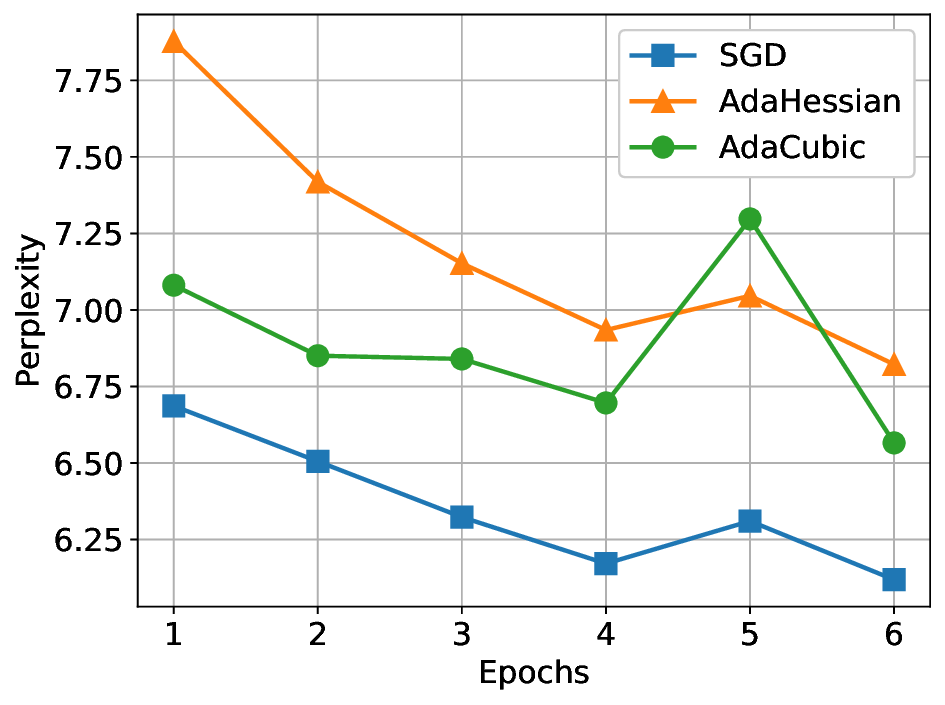} }}%
    \rule[-.5cm]{0cm}{4cm}
    \caption{Perplexity vs. epochs for  \texttt{RoBERTa}, \texttt{BERT}, and \texttt{DistilBERT} models on \texttt{wikitext-2} dataset.}%
    \label{fig:languagemodelling1}%
\end{figure*}
\begin{figure*}[!ht]
    \centering\rule[-.5cm]{0cm}{4cm}
    \subfloat[\centering \texttt{RoBERTa}]{{\includegraphics[width=5.2cm]{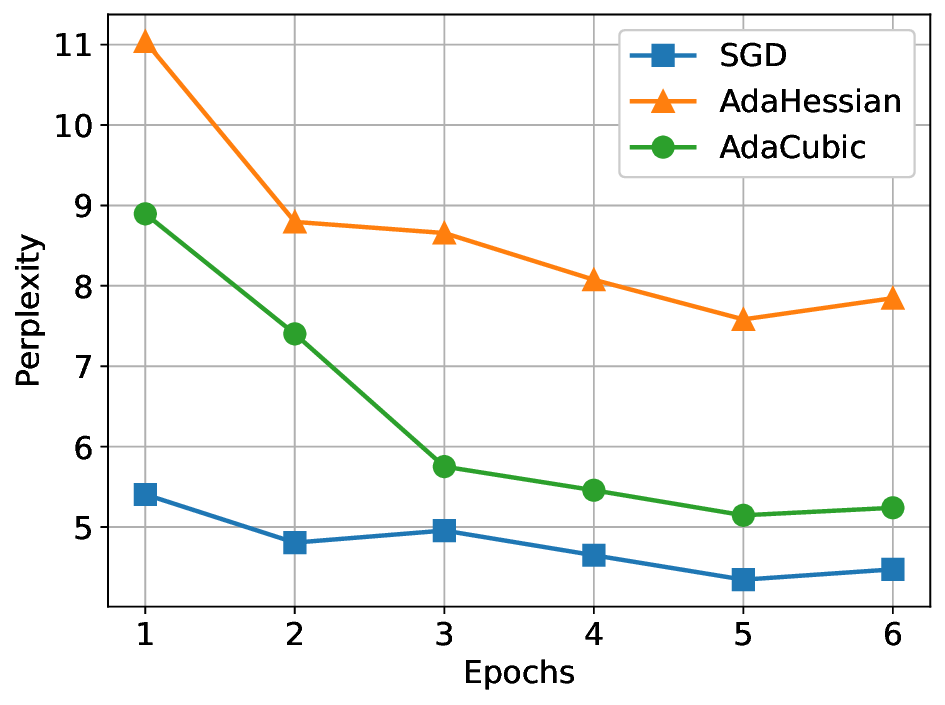} }}%
    \rule[-.5cm]{0cm}{4cm}
    \subfloat[\centering \texttt{BERT}]{{\includegraphics[width=5.2cm]{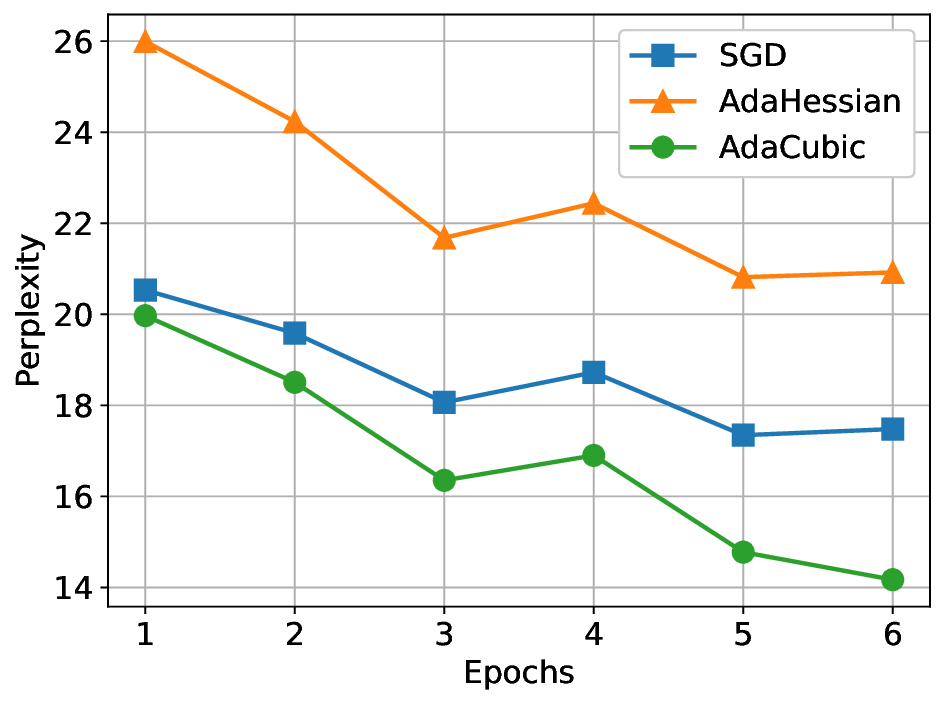} }}%
    \rule[-.5cm]{0cm}{4cm}
    \subfloat[\centering \texttt{DistilBERT}]{{\includegraphics[width=5.2cm]{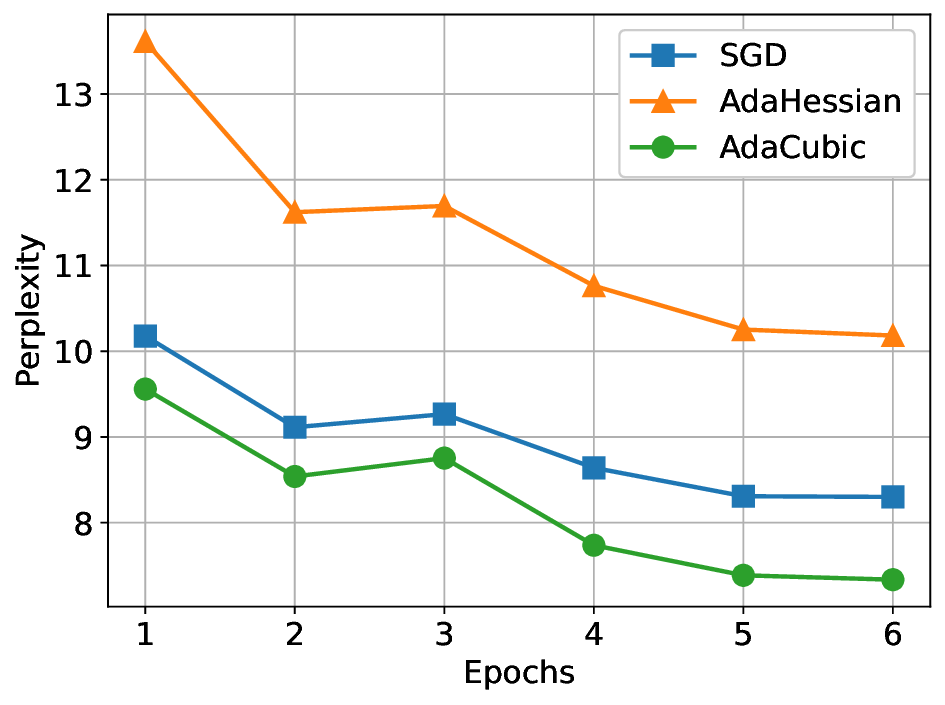} }}%
    \rule[-.5cm]{0cm}{4cm}
    \caption{Perplexity vs. epochs for  \texttt{RoBERTa}, \texttt{BERT}, and \texttt{DistilBERT} models on \texttt{PTB} dataset.}%
    \label{fig:languagemodelling2}%
\end{figure*}

First, the perplexity measurements gathered for the \texttt{wikitext-2} dataset in Table~\ref{tbl:languagemodelling1} are discussed. When \texttt{RoBERTa} is used, \textsc{SGD} outperforms \textsc{AdaCubic} and \textsc{AdaHessian} by 0.209 and 0.827, respectively. Next, when \texttt{BERT} is used, \textsc{AdaCubic} outperforms \textsc{SGD} and \textsc{AdaHessian} by 7.621 and 10.392, respectively. For the \texttt{DistilBERT} model, \textsc{SGD} outperforms \textsc{AdaCubic} and \textsc{AdaHessian} by 0.447 and 0.704, respectively. We observe that in all models, \textsc{AdaCubic} outperforms \textsc{AdaHessian} and performs better or competitively when compared to \textsc{SGD}. Figure~\ref{fig:languagemodelling1} depicts the perplexity metric vs. epochs for all models and optimizers on the \texttt{wikitext-2} dataset.

Table~\ref{tbl:languagemodelling2} gathers perplexity measures on the \texttt{PTB} dataset. When \texttt{RoBERTa} is used, \textsc{SGD} outperforms \textsc{AdaCubic} and \textsc{AdaHessian} by 0.8 and 3.237, respectively. For the \texttt{BERT} model, \textsc{AdaCubic} outperforms \textsc{SGD} and \textsc{AdaHessian} by 3.174 and 6.681, respectively. For the \texttt{DistilBERT} model, \textsc{AdaCubic} outperforms \textsc{SGD} and \textsc{AdaHessian} by 0.965 and 2.848, respectively. Figure~\ref{fig:languagemodelling2} depicts the perplexity metric vs. epochs for all models and optimizers on the \texttt{PTB} dataset. 

On the NLU benchmark, Table~\ref{tbl:gluetask}, \textsc{AdaCubic} consistently achieves either the best or the second-best performance across all tasks, with the performance gaps reported in the $\Delta$ column remaining small. The second-best performance of \textsc{AdaCubic} on certain GLUE tasks can be understood in light of recent Hessian-based analyses of Transformers~\citep{zhang2024transformers}. In particular, \citet{zhang2024transformers} shows that Transformer models exhibit block-wise heterogeneity in their Hessian structure, with strong curvature differences and interactions across parameter groups. While \textsc{AdaCubic} explicitly leverages second-order information through diagonal Hessian approximations, such approximations may be insufficient to capture cross-parameter or block-level curvature interactions fully. This likely explains why \textsc{AdaCubic} remains highly competitive but does not consistently outperform finely tuned baselines on Transformer-based tasks. Similar conclusions hold for the LM benchmark, where \textsc{AdaCubic} consistently achieves either the best or second-best performance across all datasets.

Overall, it should be noted that \textsc{AdaCubic} exhibits the best or second-best performance with a pre-fixed universal set of parameters, while \textsc{SGD} and \textsc{AdaHessian} are fine-tuned w.r.t. the initial LR.

\noindent\textbf{Camera Model Identification (CMI).} The publicly available \texttt{VISION} dataset~\citep{shullani2017vision} is utilized for camera model identification. \texttt{VISION} includes 648 \texttt{Native} videos, which remain unaltered post-capture by the camera. These \texttt{Native} videos were disseminated via social media platforms such as \texttt{YouTube} and \texttt{WhatsApp}, with corresponding versions included in the dataset. Of the 684 \texttt{Native} videos, 644 were shared via \texttt{YouTube} and 622 via \texttt{WhatsApp}. Additional details on \texttt{VISION} can be found in~\citep{shullani2017vision}. Taking into account the \texttt{VISION} dataset naming conventions outlined in \citet{shullani2017vision}, videos captured by devices D04, D12, D17, and D22 are excluded due to issues encountered during frame extraction or audio track retrieval. 

\begin{table}[!ht]
    \centering
    \caption{CMI accuracy (\%) using \texttt{ResNet18}.}
    \label{tab:AudioVisionUnimodal}
    \begin{tabular}{c*{6}{c}}
        \toprule
        & \multicolumn{3}{c}{\textsc{AdaCubic}} & \multicolumn{3}{c}{\textsc{Adam}} \\
        \cmidrule(lr){2-4} \cmidrule(lr){5-7}
        & \texttt{Native} & \texttt{WhatsApp} & \texttt{YouTube} & \texttt{Native} & \texttt{WhatsApp} & \texttt{YouTube} \\
        \midrule 
        Fold 0  & 97.40 & 96.10 & 94.59 & 96.10 & 93.50 & 91.9  \\
        Fold 1  & 93.51 & 93.51 & 93.24 & 94.80 & 90.90 & 93.24 \\
        Fold 2  & 94.81 & 92.22 & 94.59 & 90.90 & 88.31 & 95.94 \\
        Fold 3  & 93.42 & 93.43 & 91.89 & 93.42 & 94.73 & 82.43 \\
        Fold 4  & 94.73 & 88.16 & 93.24 & 94.73 & 88.15 & 95.94 \\ \midrule
\makecell{Mean \\ $\pm$ std}  & \makecell{94.77 \\ $\pm$ 1.43} & \makecell{93.68 \\ $\pm$ 2.59} & \makecell{93.51 \\ $\pm$ 1.01} & \makecell{93.99 \\ $\pm$ 1.76} & \makecell{91.11 \\ $\pm$ 2.66} & \makecell{91.89 \\ $\pm$ 4.98} \\
        \bottomrule
    \end{tabular}
\end{table}

The videos are partitioned into training, testing, and validation sets to conduct a typical five-fold stratified cross-validation. The audio content from each video is extracted, and the log-Mel spectrogram for each extracted audio clip is computed using three distinct window sizes and hop sizes. This results in 3-channel log-Mel spectrograms that capture various frequency details of the audio content. The 3-channel log-Mel spectrograms are then fed into \texttt{ResNet18} to perform CMI. Furthermore, for \textsc{Adam}, $\beta_1$ and $\beta_2$ are set to 0.9 and 0.999, respectively. The LR is decayed by a factor of 10 at epochs 80 and 120 with an initial value $10^{-4}$.

Table~\ref{tab:AudioVisionUnimodal} summarizes the results when \textsc{AdaCubic} and \textsc{Adam} optimizers are used. The mean accuracy achieved using \textsc{AdaCubic} in the \texttt{Native}, \texttt{WhatsApp}, and \texttt{YouTube} benchmarks is $94.77\%$, $93.68\%$, and $93.51\%$, respectively. In comparison, the mean accuracy with \textsc{Adam} is $93.99\%$ for \texttt{Native}, $91.11\%$ for \texttt{WhatsApp}, and $91.89\%$ for \texttt{YouTube}. This indicates that \textsc{AdaCubic} is more accurate than \textsc{Adam} by $0.78\%$, $2.57\%$, and $1.62\%$ in the \texttt{Native}, \texttt{WhatsApp}, and \texttt{YouTube} benchmarks, respectively. In terms of std, \textsc{AdaCubic} demonstrates greater consistency than \textsc{Adam} by achieving lower std values of $0.33$, $0.07$, and $3.97$ in the \texttt{Native}, \texttt{WhatsApp}, and \texttt{YouTube} benchmarks, respectively. Implementation details for the audio CMI task can be found in~\citep{tsingalis2024camera}.

\begin{figure}[!ht]
  \centering{\rule[-.5cm]{0cm}{4cm}\includegraphics[trim={0.3cm 0.3cm 0cm 0cm},clip, width=0.6\linewidth]{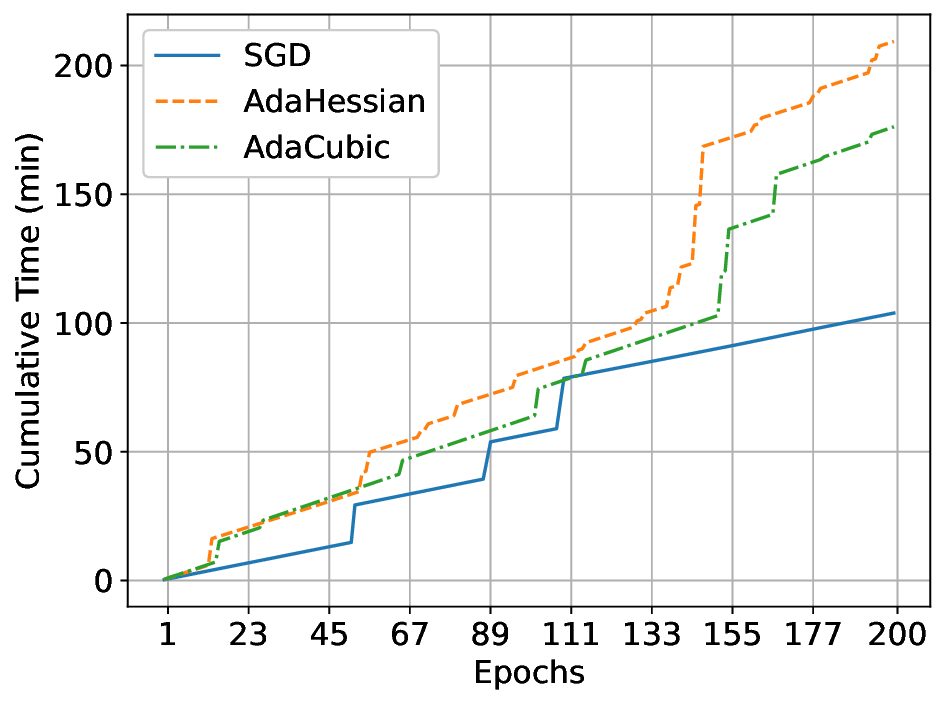}\rule[-.5cm]{0cm}{4cm}} 
  \caption{Cumulative time vs. epochs for \textsc{SGD}, \textsc{AdaHessian}, and \textsc{AdaCubic} for \texttt{ResNet20} on \texttt{CIFAR-10}.}
  \label{fig:timeComplex}
\end{figure} 

\begin{figure}[!ht]
\centering
\rule[-.5cm]{0cm}{4cm}
\subfloat[\label{fig:timevslossComplex}Cumulative time vs. loss]{%
    \includegraphics[width=0.47\linewidth]{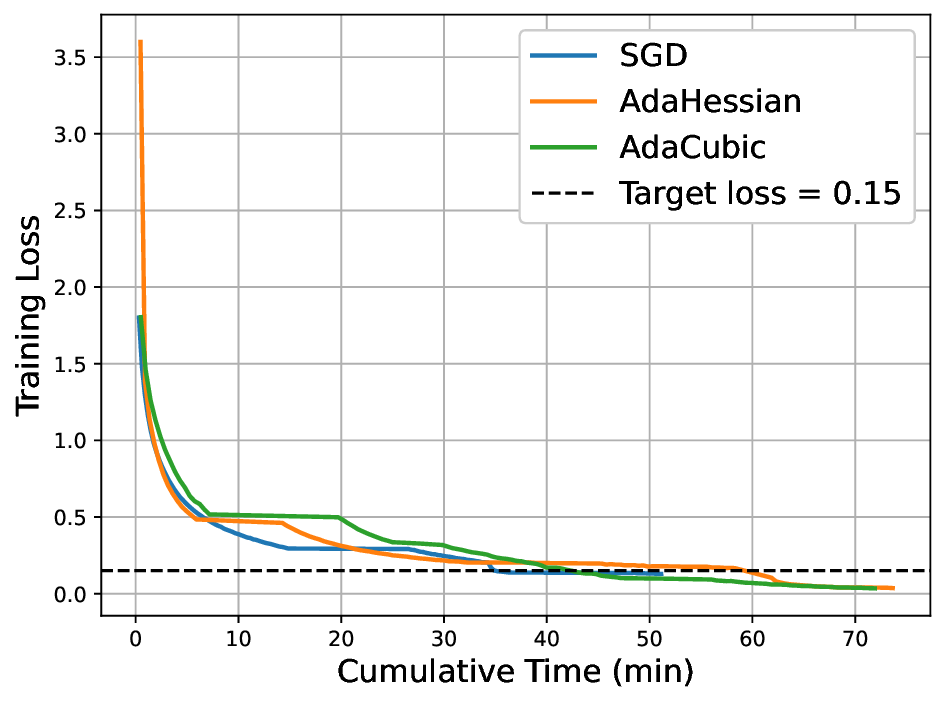}
}
\hfill
\rule[-.5cm]{0cm}{4cm}
\subfloat[\label{fig:lossvsepoch}Training loss vs. epochs]{%
    \includegraphics[width=0.47\linewidth]{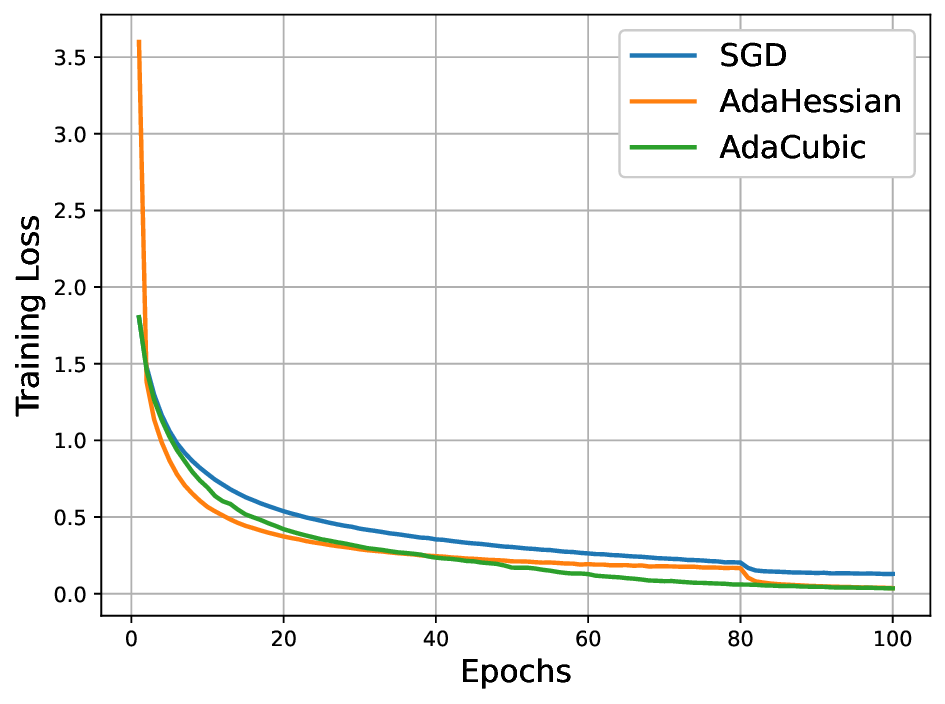}
}
\rule[-.5cm]{0cm}{4cm}
\caption{Comparison of \textsc{SGD}, \textsc{AdaHessian}, and \textsc{AdaCubic} on \texttt{ResNet20} and \texttt{CIFAR-10}. Training loss vs. cumulative time over epochs (Left). Training loss vs. epochs (Right).}
\label{fig:time_loss_compare}
\end{figure}

\section{Computational Complexity and Discussion}
\label{sec:complexity}

The performance and time complexity of the second-order methods depend on the approximation of the second-order information captured by the Hessian matrix. Similarly to \textsc{AdaHessian}, \textsc{AdaCubic} leverages the Hutchinson method~\citep{bekas2007} to approximate the diagonal of the Hessian matrix. Figure~\ref{fig:timeComplex} depicts the time complexity of \textsc{SGD}, \textsc{AdaHessian}, and \textsc{AdaCubic} when they are used to train \texttt{ResNet20} on \texttt{CIFAR-10}. As can be seen, the time complexity of the first-order optimizer \textsc{SGD} is smaller than that of the two second-order optimizers, with \textsc{AdaCubic} having less time complexity than \textsc{AdaHessian}. 

Figure~\ref{fig:timevslossComplex} shows the training loss vs. cumulative time for \textsc{SGD}, \textsc{AdaHessian}, and \textsc{AdaCubic}. Figure~\ref{fig:lossvsepoch} shows the training loss vs. epochs for \textsc{SGD}, \textsc{AdaHessian}, and \textsc{AdaCubic}. The training loss in Figure~\ref{fig:lossvsepoch} corresponds to that in Figure~\ref{fig:timevslossComplex}. The horizontal dashed line in Figure~\ref{fig:timevslossComplex} marks the target loss threshold of $0.15$. \textsc{AdaCubic} reaches this threshold after 55 epochs and 42.40 minutes. In comparison, \textsc{SGD} and \textsc{AdaHessian} require 83 and 81 epochs, corresponding to 35.16 and 61.85 minutes. Table~\ref{tab:wallclock_threshold} summarizes the latter results. Although \textsc{AdaCubic} needs more time than \textsc{SGD} due to the computation of the second-order information, \textsc{AdaCubic} reaches the desired loss in fewer epochs without any LR tuning. This highlights \textsc{AdaCubic} as an efficient trade-off between computational cost and convergence quality.

\begin{table}[!ht]
    \centering
    \caption{Execution time in minutes required to reach a target loss threshold when ResNet20 is trained on \texttt{CIFAR-10}.}
    \label{tab:wallclock_threshold}
    \begin{tabular}{c*{2}{c} *{2}{c} *{2}{c}}
        \toprule
        & \multicolumn{2}{c}{\textsc{SGD}}
        & \multicolumn{2}{c}{\textsc{AdaHessian}}
        & \multicolumn{2}{c}{\textsc{AdaCubic}} \\
        \cmidrule(lr){2-3}\cmidrule(lr){4-5}\cmidrule(lr){6-7}
        & Epoch & Time & Epoch & Time & Epoch & Time \\
        \midrule
        Result & 83 & 35.16 & 81 & 61.85 & 55 & 42.40 \\
        \bottomrule
    \end{tabular}
\end{table}

Additionally, storing the Hessian matrix increases the memory consumption of any second-order optimizer. Using Hutchinson’s method for approximating the diagonal of the Hessian, the second-order information is represented by the diagonal approximation of the Hessian matrix, which leads to a $\pazocal{O}(d)$ memory complexity~\citep{bekas2007}. This additional memory cost is incurred by \textsc{AdaCubic} relative to first-order methods such as \textsc{SGD}.

Furthermore, when utilizing~\citet{bekas2007}, the approximation of the diagonal of the Hessian demands an additional gradient back-propagation. The additional gradient back-propagation step is also needed in \textsc{AdaHessian}. When comparing \textsc{AdaCubic} with \textsc{Adam}, the latter shares similar memory consumption due to the requirement of the gradient momentum term, but it does not necessitate an additional gradient back-propagation. 

\begin{table}[!ht]
\centering
\caption{Comparison of optimization methods used in the experimental evaluation. Recall that $d$ denotes the number of model parameters and $\pazocal{S}$ the number of random vectors used in the diagonal Hessian approximation.}
\label{tab:optimizer_comparison}
\begin{tabular}{lccccc}
\toprule
\textbf{Optimizer} 
& \textbf{Order} 
& \textbf{Sensitivity} 
& \textbf{Extra Backward Pass} 
& \textbf{Time Cost} 
& \textbf{Memory Footprint} \\
\midrule
\textsc{SGD}
& First 
& High 
& No 
& $d$
& $d$ \\

\textsc{Adam} 
& First 
& High 
& No 
& $d$
& $3d$ \\

\textsc{AdaHessian} 
& Second 
& Medium 
& Yes 
& $\pazocal{S}\,d$
& $4d$ \\

\textsc{AdaCubic }
& Second 
& Low
& Yes 
& $\pazocal{S}\,d$
& $2d$ \\
\bottomrule
\end{tabular}
\end{table}

Table~\ref{tab:optimizer_comparison} summarizes the optimization methods used in the experimental evaluation, highlighting their optimization order, sensitivity to hyperparameters, and computational overhead. The ``Order'' column indicates whether an optimizer relies on first- or second-order information. The sensitivity of the optimizers w.r.t. the LR is summarized in the ``Sensitivity'' column. The sensitivity of \textsc{SGD}, \textsc{Adam}, and \textsc{AdaHessian} w.r.t. the LR is discussed thoroughly in~\citep{ya2021}. \textsc{AdaCubic} has low sensitivity because it achieves competitive performance using a universal set of hyperparameters. The ``Extra Backward Pass'' column indicates whether additional back-propagation steps are required per optimization iteration, which directly relates to the use of second-order information. The reported time cost is dominated by the back-propagation procedure and is expressed as a function of the number of model parameters $d$. Recall that $\pazocal{S}$ is the number of random vectors used in the approximation of the diagonal Hessian matrix. Using $\pazocal{S}$ random vectors requires $\pazocal{S}$ backpropagation steps, increasing the time cost linearly. The ``Memory Footprint'' refers to the memory needed to store the gradient, the moment terms, and the approximated diagonal Hessian. As can be seen, the memory footprint of \textsc{Adam} and \textsc{AdaHessian} is $3d$ and $4d$, respectively, as the gradient and moments need memory relative to the number of parameters $d$. \textsc{AdaHessian} needs an additional memory footprint of $d$ for the storage of the approximate diagonal Hessian. \textsc{AdaCubic} shows a $2d$ memory overhead relative to \textsc{SGD}, since the gradient must be retained to compute the Hutchinson-based approximation of the diagonal Hessian, which is subsequently used by Algorithm~\ref{alg:newton}.

However, according to Algorithm~\ref{alg:newton}, \textsc{AdaCubic} requires only the approximated diagonal Hessian for its updates, yielding a theoretical memory footprint of $\pazocal{O}(d)$. The gap between practical and theoretical memory costs comes from the design of modern deep-learning frameworks, such as PyTorch, which are optimized for first-order optimization methods. Thus, computing the diagonal Hessian approximation requires retaining intermediate gradient information. Developing a custom implementation that directly computes the diagonal Hessian without storing such intermediates is beyond the scope of this work.

\section{Conclusions}
\label{sec:conclusions}

\textsc{AdaCubic}, a novel adaptive cubic regularized second-order optimizer, has been proposed. \textsc{AdaCubic} leverages an approximate Hessian diagonal to reduce the computational cost induced by estimating curvature information. Although many cubically regularized methods have been proposed in the literature, none have been extensively tested in practical deep-learning applications. The effectiveness of the proposed optimizer has been demonstrated through experiments on computer vision, natural language processing, and signal processing tasks, all using deep neural networks trained on various datasets. With a pre-fixed universal selection of parameters, \textsc{AdaCubic} exhibits better or competitive performance when compared to other state-of-the-art fine-tuned optimizers. This fact makes \textsc{AdaCubic} an attractive solution for optimizing deep neural networks. 

\section*{Acknowledgments}

This work was supported by the Hellenic Foundation for Research and Innovation (HFRI) under the HFRI PhD Fellowship grant (Fellowship Number: 1376) and the ``2nd Call for HFRI Research Projects to support Faculty Members \& Researchers'' (Project Number: 3888). The results were obtained using the High-Performance Computing Infrastructure and Resources of Aristotle University of Thessaloniki (AUTh). The authors would like to acknowledge the support provided by the IT Center of AUTh throughout the progress of this research work. 

\bibliography{main}
\bibliographystyle{tmlr}

\newpage

\appendix

\section{Summary of Dependencies} \label{appendix:theorySummary}

\begin{figure}[!ht]
  \centering
  \subfloat{
    \begin{adjustbox}{width=0.48\textwidth}
      \begin{tikzpicture}[
    node distance = 6mm and 18mm,
    arr/.style = {-{Straight Barb[scale=0.8]}, rounded corners=1ex, semithick},
    N/.style = {draw, rounded corners, thick, fill=#1,
                text width=10em, align=center, inner ysep=2ex},
    N/.default = white
    ]

    \begin{scope}[start chain=going below, nodes={on chain}]
        \node[N=red!20] (thm1) {Theorem~\ref{thm:minLmaxL2}};
        \node[N=green!20] (cor1) {Corollary~\ref{cor:strongduality}};
        \node[N=red!20] (thm2) {Theorem~\ref{thm:probequiv}};
        \node[N=red!20] (thm3) {Theorem~\ref{thm:diagmodelmin}};
    \end{scope}

    \node[N=blue!20, right=of thm1] (lemma2) {Lemma~\ref{lemma:minLmaxL}};
    \node[N=blue!20, below=of lemma2] (lem1) {Lemma~\ref{lemma:globalSol}};
    \node[N=blue!20, right=of thm2] (lemma3)  {Lemma~\ref{lemma:aproxmodelmin}};
    \node[N=blue!20, right=of lemma3] (lemma4)  {Lemma~\ref{lemma:localConvCond}};
    \node[N=blue!20, above=of lemma4] (lem56) {Lemmata~\ref{lemma:gradientDeviationBound},~\ref{lemma:hessianDeviationBound}\\Corollary~\ref{cor:gradienthessianSampling}};
    \node[N=cyan!20, above=of lem56] (Alg2)  {Algorithm~\ref{alg:newton}};
    \node[N=blue!20, above=of Alg2] (lem7)  {Lemmata~\ref{lemma:nursolution},~\ref{lemma:newtonphi},~\ref{lemma:newtonconv},~\ref{lemma:terminationNewtonnu}};
    
    \draw[arr] (thm1) -- (cor1);
    \draw[arr] (cor1) -- (thm2);
    \draw[arr] (thm2) -- (thm3);
    \draw[arr] ([yshift=0.5ex]lem1.east) -- ++(1.,0) |- ([yshift=-1.2ex]Alg2.west);
    \draw[arr] ([yshift=0.5ex]lem1.west) -- ++(-1.,0) |- ([yshift=-1.2ex]thm1.east);
    \draw[arr] ([yshift=-1.ex]lem1.west) -- ++(-1.,0) |- ([yshift=1.ex]thm2.east);
    
    \draw[arr] (lemma2.west) -- (thm1.east);
    \draw[arr] (thm2.east) -- (lemma3.west);
    \draw[arr] (lemma3.east) -- (lemma4.west);
    \draw[arr] (lemma4.north) -- (lem56.south);
    \draw[arr] (lem7.south) -- (Alg2.north);
\end{tikzpicture}
    \end{adjustbox}}
    \rule[-.5cm]{0cm}{4cm}
    \subfloat{
    \begin{adjustbox}{width=0.48\textwidth}
      \begin{tikzpicture}[
        node distance = 5mm and 20mm,
        start chain = going below,
        arr/.style = {{Straight Barb[scale=0.8]}-, rounded corners=1ex, semithick},
        N/.style = {draw, rounded corners, thick, fill=#1,
                    text width=9em, align=center, inner ysep=2ex},
        N/.default = white,
        every edge/.style = {draw, arr}
        ]   
    
        \begin{scope}[nodes={on chain, join=by arr}]
        \node[N=red!20, inner xsep=1em, inner ysep=3ex] (t3) {Theorem~\ref{thm:diagmodelmin}};
        \end{scope}
        
        \node (l141516) [N=blue!20, right=of t3] {Lemmata~\ref{lemma:lemma2Nesterov2006},~\ref{lemma:lemma3Nesterov2006},~\ref{lemma:lemma4Nesterov2006}};
        
        \node (l17) [N=blue!20, below=of l141516] {Lemma~\ref{lemma:localOptimalityCriterion}};
        \node (l18) [N=blue!20, below=of l17] {Lemma~\ref{lemma:lemma5Nesterov2006}};
        \node (t2)  [N=red!20, below=of l18]  {Theorem~\ref{thm:probequiv}};
        \node (cor3) [N=green!20, above=of l141516]  {Corollary~\ref{cor:globalSolDiag}};
        
        \draw[arr] ([yshift=+2.2ex]t3.east) -- ++(1.5,0) |- (l141516.west);
        \draw[arr] ([yshift=+0.7ex]t3.east) -- ++(1,0)   |- (l17.west);
        \draw[arr] ([yshift=-0.7ex]t3.east) -- ++(1.2,0) |- (l18.west);
        \draw[arr] ([yshift=-2.2ex]t3.east) -- ++(1.5,0) |- (t2.west);

        \node (l3) [N=blue!20, below=of t2, yshift=-0mm] {Lemma~\ref{lemma:aproxmodelmin}};
        \draw[arr] (l3) -- (t2);
        \draw[arr] (l18) -- (l17);
        
        \node (l9)  [N=blue!20, right=10mm of l17] {Lemma ~\ref{lemma:diagHessian}};
        \node (l10) [N=blue!20, above=of l9] {Lemma~\ref{lemma:lemma1Nesterov2006}};
        \node (l8)  [N=blue!20, below=of l9] {Lemma \ref{lemma:diagSpec}};
        
        \draw[arr] (l10) -- (l9);
        \draw[arr] (l9) -- (l8);
        
        \draw[arr] (l141516) -- (l10);
        \draw[arr] (l141516) -- (cor3);
        \draw[arr] (l17) -- (l9);
    \end{tikzpicture}
    \end{adjustbox}}
    
    \subfloat{
    \begin{adjustbox}{width=0.58\textwidth}
    \begin{tikzpicture}[
    node distance = 6mm and 18mm,
    arr/.style = {-{Straight Barb[scale=0.8]}, rounded corners=1ex, semithick},
    N/.style = {draw, rounded corners, thick, fill=#1,
                text width=10em, align=center, inner ysep=2ex},
    N/.default = white
    ]

    \begin{scope}[start chain=going below, nodes={on chain}]
        \node[N=blue!20] (lem3) {Lemma~\ref{lemma:aproxmodelmin}};
        \node[N=blue!20] (lem4) {Lemma~\ref{lemma:localConvCond}};
        \node[N=blue!20] (lem22) {Lemma~\ref{lemma:matrixBernsteinTwoSources}};
        \node[N=blue!20] (lem20) {Lemmata~\ref{lemma:basedDefOnTwoIndRandVars},~\ref{lem:5.4.10.alt}};
    \end{scope}

    \node[N=blue!20, right=of lem3] (lem5) {Lemma~\ref{lemma:gradientDeviationBound}};
    \node[N=green!20, below=of lem5] (cor2) {Corollary~\ref{cor:gradienthessianSampling}};
    \node[N=blue!20, below=of cor2] (lem6) {Lemma~\ref{lemma:hessianDeviationBound}};
    \node[N=blue!20, right=of lem5] (lem19) {Lemma~\ref{lemma:vectorBernstein}};

    \draw[arr] (lem3) -- (lem4);
    \draw[arr] ([yshift=1.5ex]lem4.east) -- ++(1.2,0) |- (lem5.west);
    \draw[arr] ([yshift=.15ex]lem4.east) -- ([yshift=.4ex]cor2.west);
    \draw[arr] ([yshift=-1.0ex]lem4.east) -- ++(1.2,0) |- ([yshift=1.0ex]lem6.west);

    \draw[arr] (lem20) -- (lem22);

    \draw[arr] ([yshift=-1.0ex]lem22.east) -- ([yshift=-.5ex]lem6.west);

    \draw[arr] (lem19.west) -- (lem5.east);
    \draw[arr] (lem6.north) -- (cor2.south);
    \draw[arr] (lem5.south) -- (cor2.north);
    
\end{tikzpicture} 
    \end{adjustbox}}
    
    \subfloat{
    \begin{adjustbox}{width=1\textwidth}
   \begin{tikzpicture}[
    node distance = 6mm and 18mm,
    arr/.style = {-{Straight Barb[scale=0.8]}, rounded corners=1ex, semithick},
    N/.style = {draw, rounded corners, thick, fill=#1,
                text width=12em, align=center, inner ysep=2ex},
    N/.default = white
    ]

    \node[N=yellow!30] (ass1) {Assumption~\ref{assum:continuityAssum}};

    \node[N=red!20, left=of ass1] (thm3) {Theorem~\ref{thm:diagmodelmin}};

    \node[N=blue!20, right=of ass1] (lemGroup) {
        Lemmata~\ref{lemma:localConvCond},~\ref{lemma:gradientDeviationBound},~\ref{lemma:hessianDeviationBound},~\ref{lemma:diagHessian}
    };

    \draw[arr] (ass1) -- (thm3);
    \draw[arr] (ass1) -- (lemGroup);

\end{tikzpicture}

\begin{tikzpicture}[
    node distance = 6mm and 18mm,
    arr/.style = {-{Straight Barb[scale=0.8]}, rounded corners=1ex, semithick},
    N/.style = {draw, rounded corners, thick, fill=#1,
                text width=12em, align=center, inner ysep=2ex},
    N/.default = white
    ]

    \node[N=yellow!30] (ass3) {Assumption~\ref{assum:hessAgreementAssum}};

    \node[N=blue!20, left=of ass3] (lem5) {Lemma~\ref{lemma:hessianDeviationBound}};

    \node[N=green!20, right=of ass3] (cor2) {Corollary~\ref{cor:gradienthessianSampling}};

    \draw[arr] (ass3) -- (lem5);
    \draw[arr] (ass3) -- (cor2);

\end{tikzpicture}
    \end{adjustbox}}
\caption{Logical connection between key lemmata, theorems, and corollaries throughout~\Cref{sec:proposed,sec:covAnalysis,sec:probSolution}~and~Appendices~\labelcref{appendix:globalSol,appendix:minLmaxL,appendix:minLmaxL2,appendix:probequiv,supplement:gradientDeviationBound,supplement:hessianDeviationBound,appendix:nursolution,supplement:diagHessian,supplement:diagHessianANDlemma1Nesterov2006,supplement:algDetails,supplement:diagmodelminRelatedLemmata,supplement:diagmodelmin,supplement:vectormatrixBernstein}.}
  \label{fig:outlineTheory}
\end{figure}

\section{Supporting Proofs}

\subsection{Proof of Lemma~\ref{lemma:globalSol}} \label{appendix:globalSol}

Here, we follow the guidelines in~\citep[Theorem 7.2.1]{conn2000}. Let us assume that $\mathbf{s}^*$ is a minimizer of $\hat{m}(\mathbf{s})$ subject to $\norm{\mathbf{s}^*}_2^3 \leq \xi$. Then, there is a Lagrange multiplier $\nu^*$ such that
\begin{equation}\label{eq:cscond}
\nu^*\:(\norm{\mathbf{s}^*}_2^3 - \xi ) = 0 \Leftrightarrow 
\begin{cases} \nu^* = 0, & \text{inactive constraint}  \\
\norm{\mathbf{s}^*}_2^3 = \xi, & \text{active constraint}. 
\end{cases}
\end{equation} 
$(\ref{eq:cscond})$ is the unfolded \textit{Complementary Slackness (CS)} condition~\citet{bertsekas2017} for the constrained optimization problem~(\ref{prob:cubicOursEquiv}). The active case occurs when $\mathbf{s}^*$ lies on the boundary of $\Omega$, i.e., $g_{\xi} (\mathbf{s}^*) = 0 \Leftrightarrow \norm{\mathbf{s}^*}_2^3 = \xi $ and the inactive case occurs when $\mathbf{s}^*$ lies in the interior of $\Omega$, i.e. $g_{\xi} (\mathbf{s}^*) < 0 \Leftrightarrow \norm{\mathbf{s}^*}_2^3 < \xi$. 

\begin{itemize}[leftmargin=*]

\item[$\rhd$] \textbf{Active constraint case.}
We assume that $\mathbf{s}^*$ is a minimizer of $\hat{m}(\mathbf{s})$ subject to $\norm{\mathbf{s}^*}_2^3 = \xi$. From the first-order optimality conditions~\citet{bertsekas2017}, there exists a Lagrange multiplier $\nu^*$, such that  
{\allowdisplaybreaks
\begin{multline}\label{eq:possDef}
\nabla_{\mathbf{s}}\pazocal{L}_{\xi}(\mathbf{s}^*, \nu^*) = \bm{0} \Leftrightarrow \nabla_{\mathbf{s}} \hat{m} (\mathbf{s}^*) + \nu^*\:\nabla_{\mathbf{s}}  g_{\xi} (\mathbf{s}^*) = \bm{0} \\ \Leftrightarrow \underbrace{\nabla^2 f(\mathbf{x}_k) \: \mathbf{s}^* + \nabla f(\mathbf{x}_k)}_{\nabla_{\mathbf{s}} \hat{m} (\mathbf{s}^*)} + \frac{\nu^*}{2}\norm{\mathbf{s}^*}_2 \mathbf{s}^* = \bm{0}  \Leftrightarrow  \\ \left(\nabla^2 f(\mathbf{x}_k) + \frac{\nu^*}{2}\norm{\mathbf{s}^*}_2~\mathbf{I}\right)\: \mathbf{s}^* = -\nabla f(\mathbf{x}_k),
\end{multline}} 
\noindent where the identity $\nabla_\mathbf{s}\norm{\mathbf{s}}_2^3 = 3\norm{\mathbf{s}}_2 \mathbf{s}$ was used for some $\mathbf{s}$. Let $\mathbf{s}$ be a feasible point on the boundary of $\Omega$, i.e., $\norm{\mathbf{s}}_2^3 = \xi$. The Taylor expansion of $\hat{m}(\mathbf{s})$ around the minimizer $\mathbf{s}^*$ is 
\begin{equation}\label{eq:taylor}
\hat{m} (\mathbf{s})= \hat{m}(\mathbf{s}^*) + (\mathbf{s} - \mathbf{s}^*)^T \: \nabla_{\mathbf{s}} \hat{m}(\mathbf{s}^*) +  \frac{1}{2} \: (\mathbf{s} - \mathbf{s}^*)^T \: \nabla_{\mathbf{s}}^2\hat{m}(\mathbf{s}^*)  \: (\mathbf{s} - \mathbf{s}^*).
\end{equation} 
From the second line in~(\ref{eq:possDef}), we also have
\begin{equation}\label{eq:gradmhat}
\nabla_{\mathbf{s}} \hat{m} (\mathbf{s}^*) = -\frac{\nu}{2}\norm{\mathbf{s}^*}_2~ \mathbf{s}^*.
\end{equation} 
Given~(\ref{eq:gradmhat}) and the fact that $\mathbf{s}$ and $\mathbf{s}^*$ are feasible points on the boundary of $\Omega$, i.e., $\norm{\mathbf{s}^*}_2^3 = \xi = \norm{\mathbf{s}}_2^3$, we have
{\allowdisplaybreaks
\begin{multline}
(\mathbf{s} - \mathbf{s}^*)^T \: \nabla_{\mathbf{s}} \hat{m} (\mathbf{s}^*) =  - \frac{\nu^*}{2}\norm{\mathbf{s}^*}_2~(\mathbf{s} - \mathbf{s}^*)^T\mathbf{s}^*  =  
\frac{\nu^*}{2} \norm{ \mathbf{s}^*}_2~(\norm{\mathbf{s}^*}_2^2 - \mathbf{s}^T\mathbf{s}^*) \\ = \frac{\nu^*}{2} \norm{\mathbf{s}^*}_2~\left[\frac{1}{2}\:\left(\xi^{2/3} + \xi^{2/3} \right) - \mathbf{s}^T\mathbf{s}^*\right] =   \frac{\nu^*}{2} \norm{\mathbf{s}^*}_2~\left[\frac{1}{2}\:\left(\norm{ \mathbf{s}^*}_2^2 + \norm{\mathbf{s}}_2^2 \right) - \mathbf{s}^T\mathbf{s}^*\right],
\end{multline}} 
which implies
\begin{equation}\label{eq:gradmhatproduct}
   (\mathbf{s} - \mathbf{s}^*)^T \: \nabla_{\mathbf{s}} \hat{m} (\mathbf{s}^*) = \frac{\nu^*}{4} \norm{\mathbf{s}^*}_2~ (\mathbf{s} - \mathbf{s}^*)^T(\mathbf{s} - \mathbf{s}^*).
\end{equation}

Combining~(\ref{eq:taylor}),~(\ref{eq:gradmhatproduct}), and $\nabla^2_{\mathbf{s}}  \hat{m} (\mathbf{s}^*) = \nabla^2 f(\mathbf{x}_k)$ gives
{\allowdisplaybreaks
\begin{multline}\label{eq:taylor1}
\hat{m} (\mathbf{s}) = \hat{m} (\mathbf{s}^*) + \frac{1}{4}\nu^* \norm{\mathbf{s}^*}_2~ (\mathbf{s} - \mathbf{s}^*)^T(\mathbf{s} - \mathbf{s}^*) +  \frac{1}{2}\:(\mathbf{s} - \mathbf{s}^*)^T \nabla^2 f(\mathbf{x}_k) (\mathbf{s} - \mathbf{s}^*) \\ =  \hat{m} (\mathbf{s}^*)  + \frac{1}{2}\:(\mathbf{s} - \mathbf{s}^*)^T \left(\nabla^2 f(\mathbf{x}_k) + \frac{\nu^*}{2} \norm{\mathbf{s}^*}\:\mathbf{I}\right) (\mathbf{s} - \mathbf{s}^*).
\end{multline}} 
The second-order optimality condition~\citet[Proposition 4.3.1]{bertsekas2017} for $\mathbf{z}\in\mathbb{R}^d$ yields 
\begin{equation}\label{eq:secOrdmhat1}
\mathbf{z}^T \Bigl(\nabla_{\mathbf{s}}^2 \hat{m} (\mathbf{s}^*) + \nu^*\: \nabla_{\mathbf{s}}^2 g_{\xi} (\mathbf{s}^*)\Bigr)\mathbf{z} \geq 0, 
\end{equation} 
where 
\begin{equation}\label{eq:secOrdmhat2}
\underbrace{\nabla_{\mathbf{s}}^2 \hat{m} (\mathbf{s}^*)}_{\nabla^2 f(\mathbf{x}_k)} + \nu^*\: \nabla_{\mathbf{s}}^2 g_{\xi} (\mathbf{s}^*) =  \left(\nabla^2 f(\mathbf{x}_k) + \frac{\nu^*}{2}\norm{\mathbf{s}^*}_2~\mathbf{I}\right) +  \frac{\nu^*}{2}\:\frac{\mathbf{s}^* (\mathbf{s}^*)^T}{\norm{\mathbf{s}^*}_2}
\end{equation} 
such that $\mathbf{z}^T \nabla_{\mathbf{s}} g_{\xi} (\mathbf{s}^*) = \frac{1}{2}\norm{\mathbf{s}^*}_2 \:\mathbf{z}^T \mathbf{s}^* = 0 \Leftrightarrow \mathbf{z}^T \mathbf{s}^* = 0$. Since $\mathbf{s}^* \neq \bm{0}$, we have 
{\allowdisplaybreaks
\begin{equation}\label{eq:possDef1}
\begin{aligned}
\mathbf{z}^T\left\{ \left(\nabla^2 f(\mathbf{x}_k) + \frac{\nu^*}{2}\norm{\mathbf{s}^*}_2~\mathbf{I}\right) + \frac{\nu^*}{2}\:\frac{\mathbf{s}^* (\mathbf{s}^*)^T}{\norm{\mathbf{s}^*}_2}\right\}\mathbf{z} \geq 0  & \Leftrightarrow  \\ 
\mathbf{z}^T \left(\nabla^2 f(\mathbf{x}_k) + \frac{\nu^*}{2}\norm{\mathbf{s}^*}_2~\mathbf{I}\right)\mathbf{z} + \frac{\nu^*}{2} \:\frac{(\mathbf{z}^T \mathbf{s}^*)^2 }{\norm{\mathbf{s}^*}_2} \geq 0 &.
\end{aligned}
\end{equation}} 
Using $\mathbf{z}^T \mathbf{s}^* = 0$ in~(\ref{eq:possDef1}) we get 
\begin{equation}\label{eq:possDef2}
\mathbf{z}^T \: \left(\nabla^2 f(\mathbf{x}_k) + \frac{\nu^*}{2}\norm{\mathbf{s}^*}_2~\mathbf{I}\right) \: \mathbf{z} \geq 0.
\end{equation} This indicates that $\nabla^2 f(\mathbf{x}_k) + \frac{\nu^*}{2}\norm{\mathbf{s}^*}_2~\mathbf{I}$ is positive semi-definite for vectors in the direction of the null-space of $\nabla_{\mathbf{s}} g_{\xi} (\mathbf{s}^*)$, i.e., perpendicular to $\nabla_{\mathbf{s}} g_{\xi} (\mathbf{s}^*)$.

It remains to consider vectors $\mathbf{w}\in \mathbb{R}^d$ that do not belong to the null-space of $\nabla_{\mathbf{s}} g_{\xi} (\mathbf{s}^*)$, i.e., $\mathbf{w}^T\nabla_{\mathbf{s}} g_{\xi} (\mathbf{s}^*) \neq 0$, and prove that $\nabla^2 f(\mathbf{x}_k) + \frac{\nu^*}{2}\norm{\mathbf{s}^*}_2~\mathbf{I}$ is also positive semi-definite. To this end, define the line $\mathbf{s}=\mathbf{s}^* + \alpha~\mathbf{w}$ as a function of $\alpha$. Because we are interested in $\mathbf{w}$, such that $\mathbf{w}^T\nabla_{\mathbf{s}} g_{\xi}  (\mathbf{s}^*) \neq 0$, the line intersects the constraint $g_{\xi} (\mathbf{s}) = 0 \Leftrightarrow \norm{\mathbf{s}}_2^3 = \xi$ in two values of $\alpha$. For $\alpha= 0$ we have $\mathbf{s}=\mathbf{s}^*$ and the aforementioned discussion holds. For $ \alpha \neq 0$, $\mathbf{s}$ satisfies $\norm{\mathbf{s}}_2^3 = \xi$. In the latter case, we may write $\mathbf{s} - \mathbf{s}^* = \alpha~\mathbf{w}$. From~(\ref{eq:taylor1}), we arrive at
\begin{equation}\label{eq:notNullSpace}
\hat{m}(\mathbf{s}) = \hat{m} (\mathbf{s}^*) + \frac{\alpha^2}{2}\: \mathbf{w}^T \left(\nabla^2 f(\mathbf{x}_k) + \frac{\nu^*}{2} \norm{\mathbf{s}^*}_2~\mathbf{I}\right)\mathbf{w},
\end{equation} 
with $\alpha \neq 0$. Given the assumption that $\mathbf{s}^*$ is a minimizer, i.e., $\hat{m}(\mathbf{s}^*) \leq \hat{m} (\mathbf{s})$,~(\ref{eq:notNullSpace}) implies that $\nabla^2 f(\mathbf{x}_k) + \frac{\nu^*}{2}\norm{\mathbf{s}^*}_2~\mathbf{I}$ is positive semi-definite. So far, we have shown that if 
$\mathbf{s}^*$ is a minimizer subject to $\norm{\mathbf{s}^*}_2^3 = \xi$, then $\nabla^2 f(\mathbf{x}_k) + \frac{\nu^*}{2}\norm{\mathbf{s}^*}_2~\mathbf{I}$ is positive semi-definite either in the direction of the null-space of $\nabla_{\mathbf{s}} g_{\xi} (\mathbf{s}^*)$ or not. Conversely, if $\nabla^2 f(\mathbf{x}_k) + \frac{\nu^*}{2}\norm{\mathbf{s}^*}_2~\mathbf{I}$ is positive semi-definite, from~(\ref{eq:taylor1}) and~(\ref{eq:notNullSpace}), we arrive at $\hat{m}(\mathbf{s}^*) \leq \hat{m} (\mathbf{s})$, i.e., $\mathbf{s}^*$ is a minimizer subject to $\norm{\mathbf{s}^*}_2^3 = \xi$.

Regarding the uniqueness of the solution, when $\nabla^2 f(\mathbf{x}_k) + \frac{\nu^*}{2}\norm{\mathbf{s}^*}_2~\mathbf{I}$ is positive definite, from~(\ref{eq:taylor1}) and~(\ref{eq:notNullSpace}) we have that $\hat{m}(\mathbf{s}^*) < \hat{m} (\mathbf{s})$, which indicates that $\mathbf{s}^*$ is a unique minimizer subject to $\norm{\mathbf{s}^*}_2^3 = \xi$.

\item[$\rhd$] \textbf{Inactive constraint case.}

In this case, we assume that $\mathbf{s}^*$ is a minimizer of $\hat{m}(\mathbf{s})$ subject to $\norm{\mathbf{s}^*}_2^3 < \xi$ when $\nu^* = 0$. From~(\ref{eq:possDef}) we obtain
\begin{equation}\label{eq:inactiveSystEq}
\nabla^2 f(\mathbf{x}_k) \mathbf{s}^* = -\nabla f(\mathbf{x}_k).
\end{equation} 
From the second-order optimality condition~\citet[Proposition 4.3.1]{bertsekas2017}, it is implied that $\nabla_{\mathbf{s}\mathbf{s}}^2 \pazocal{L}_{\xi}(\mathbf{s}^*, \nu^*)$ is positive semi-definite. Using the latter fact, along with the fact that $\nabla_{\mathbf{s}\mathbf{s}}^2 \pazocal{L}_{\xi}(\mathbf{s}^*, \nu^*) = \nabla^2 f(\mathbf{x}_k)$ when $\nu^*=0$, we get that $\nabla^2 f(\mathbf{x}_k)$ is positive semi-definite. This, in turn, implies that we are dealing with a convex problem. 

Conversely, when $\nabla^2 f(\mathbf{x}_k)$ is positive semi-definite and $\nu^* = 0$, we can use the Taylor expansion of $\hat{m} (\mathbf{s})$ in~(\ref{eq:taylor}) along with the fact that 
\begin{equation}
\begin{aligned}
\nabla_{\mathbf{s}}\pazocal{L}_{\xi}(\mathbf{s}^*, \nu^*) = \bm{0} & \Leftrightarrow \nabla_{\mathbf{s}} \hat{m} (\mathbf{s}^*) + 
\underbrace{\cancelto{0}{\nu^*\:\nabla_{\mathbf{s}}  g_{\xi} (\mathbf{s}^*) }}_{\substack{\text{as $\nu^*=0$}}} = \bm{0} \\ & \Leftrightarrow  \nabla_{\mathbf{s}} \hat{m} (\mathbf{s}^*) = \bm{0}
\end{aligned}
\end{equation} 
to show that $\hat{m}(\mathbf{s}^*) \leq \hat{m} (\mathbf{s})$. This implies that $\mathbf{s}^*$ is a minimizer subject to $\norm{\mathbf{s}^*}_2^3 < \xi$. Regarding the uniqueness of the solution, when $\nabla^2 f(\mathbf{x}_k)$ positive definite and $\nu^*=0$, we can solve~(\ref{eq:taylor1}) w.r.t. $\mathbf{s}^* = - \nabla^2 f(\mathbf{x}_k)^{-1}\nabla f(\mathbf{x}_k)$, which indicates that $\mathbf{s}^*$ is a unique minimizer subject to $\norm{\mathbf{s}^*}_2^3 < \xi$.
\end{itemize} 

Given that no assumption has been made on the structure of $\nabla f(\mathbf{x})$, we can repeat the aforementioned proof using $\Diag(\nabla^2 f(\mathbf{x}))$ instead of $\nabla f(\mathbf{x})$ to arrive at Corollary~\ref{cor:globalSolDiag}.

\begin{restatable}{cor}{globalSolDiag}
A vector $\mathbf{s}^*$ is a minimizer of $\hat{m}(\mathbf{s})$ subject to $\norm{\mathbf{s}^*}_2^3 \leq \xi$ if and only if satisfies
\begin{equation}\label{eq:firstOrderCondDiag}
\left(\Diag(\nabla^2 f(\mathbf{x}_k)) + \frac{\nu^*}{2} \norm{\mathbf{s}^*}_2~ \mathbf{I} \right)\: \mathbf{s}^* = - \nabla f(\mathbf{x}_k),
\end{equation} 
\begin{equation} \label{eq:globalSolDiag}
\Diag(\nabla^2 f(\mathbf{x}_k)) + \frac{\nu^*}{2} \norm{\mathbf{s}^*}_2~ \mathbf{I} \succeq 0,
\end{equation} 
and $\nu^* \:(\norm{\mathbf{s}^*}_2^3 - \xi) = 0$, where $\nu^* \geq 0$. If $\nabla^2 f(\mathbf{x}_k) + \frac{\nu^*}{2} \norm{\mathbf{s}^*}_2~ \mathbf{I} \succ 0 $, then the minimizer $\mathbf{s}^*$ is unique.
\label{cor:globalSolDiag}
\end{restatable}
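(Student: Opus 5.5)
The plan is to rerun the proof of Lemma~\ref{lemma:globalSol} verbatim, with the quadratic model now built on the diagonal matrix. That is, $\hat{m}(\mathbf{s}) = f(\mathbf{x}_k) + \nabla f(\mathbf{x}_k)^T\mathbf{s} + \frac12 \mathbf{s}^T \mathbf{D}\,\mathbf{s}$ with $\mathbf{D} \stackrel{\Delta}{=} \Diag(\nabla^2 f(\mathbf{x}_k))$. The only properties of $\nabla^2 f(\mathbf{x}_k)$ used in the appendix proof are that it is a symmetric $d\times d$ matrix and that it equals $\nabla^2_{\mathbf{s}}\hat{m}(\mathbf{s})$. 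The matrix $\mathbf{D}$ has both properties, so every identity transfers. I will split into the active and inactive cases via the complementary slackness condition~(\ref{eq:cscond}), exactly as before.

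\textbf{Active case} ($\norm{\mathbf{s}^*}_2^3=\xi$). Stationarity of $\pazocal{L}_\xi$ with $\nabla_{\mathbf{s}}\norm{\mathbf{s}}_2^3 = 3\norm{\mathbf{s}}_2\mathbf{s}$ gives~(\ref{eq:firstOrderCondDiag}) directly. Because $\hat{m}$ is quadratic, its Taylor expansion~(\ref{eq:taylor}) is exact. For any boundary point $\mathbf{s}$, the identity $\norm{\mathbf{s}^*}_2^2-\mathbf{s}^T\mathbf{s}^* = \frac12\norm{\mathbf{s}-\mathbf{s}^*}_2^2$ (valid since $\norm{\mathbf{s}}_2=\norm{\mathbf{s}^*}_2$) then yields
\begin{equation*}
\hat{m}(\mathbf{s}) = \hat{m}(\mathbf{s}^*) + \tfrac12 (\mathbf{s}-\mathbf{s}^*)^T\Bigl(\mathbf{D} + \tfrac{\nu^*}{2}\norm{\mathbf{s}^*}_2\mathbf{I}\Bigr)(\mathbf{s}-\mathbf{s}^*).
\end{equation*}
Necessity of~(\ref{eq:globalSolDiag}) follows in two steps. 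For directions $\mathbf{z}\perp\mathbf{s}^*$ I use the second-order condition, where the rank-one term $\frac{\nu^*}{2}\mathbf{s}^*(\mathbf{s}^*)^T/\norm{\mathbf{s}^*}_2$ vanishes. For directions $\mathbf{w}$ with $\mathbf{w}^T\mathbf{s}^*\neq 0$, the line $\mathbf{s}^*+\alpha\mathbf{w}$ meets the sphere at a second point with $\alpha\neq0$, and plugging this point into the displayed identity together with minimality gives $\mathbf{w}^T(\cdot)\mathbf{w}\ge 0$. Sufficiency reads the same identity backwards.

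One caveat: the identity only covers competitors on the sphere. To handle interior feasible points, I would add the usual trust-region argument from~\citep[Theorem 7.2.1]{conn2000}. Write $\hat{m}(\mathbf{s}) = \hat{m}(\mathbf{s}^*) + \frac12(\mathbf{s}-\mathbf{s}^*)^T(\mathbf{D}+\frac{\nu^*}{2}\norm{\mathbf{s}^*}_2\mathbf{I})(\mathbf{s}-\mathbf{s}^*) + \frac{\nu^*}{4}\norm{\mathbf{s}^*}_2(\norm{\mathbf{s}^*}_2^2-\norm{\mathbf{s}}_2^2)$. The last term is nonnegative for feasible $\mathbf{s}$, since $\nu^*\ge0$.

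\textbf{Inactive case} ($\nu^*=0$). Stationarity gives $\mathbf{D}\mathbf{s}^* = -\nabla f(\mathbf{x}_k)$, and the second-order condition gives $\mathbf{D}\succeq0$, i.e.\ all diagonal Hessian entries are nonnegative. Conversely, if these hold, convexity of $\hat{m}$ makes $\mathbf{s}^*$ a global minimizer. Uniqueness follows in both cases from strict positivity in the quadratic identity. I would note that the uniqueness hypothesis should read with $\mathbf{D}$ in place of $\nabla^2 f(\mathbf{x}_k)$, which is what the argument actually uses.

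The main obstacle is the same as in the original lemma: justifying that a Lagrange multiplier $\nu^*\ge0$ exists, i.e.\ a constraint qualification. The function $g_\xi$ has gradient $\frac12\norm{\mathbf{s}}_2\mathbf{s}$, which is nonzero on the boundary whenever $\xi>0$, so LICQ holds there. For $\xi=0$ the feasible set is $\{\mathbf{0}\}$, and I would treat it as a trivial separate case. Everything else is mechanical substitution of $\mathbf{D}$.
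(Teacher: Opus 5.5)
Your proposal follows the paper's proof, which obtains the corollary by rerunning the argument of Lemma~\ref{lemma:globalSol} with $\Diag(\nabla^2 f(\mathbf{x}_k))$ in place of $\nabla^2 f(\mathbf{x}_k)$. Your extra term $\frac{\nu^*}{4}\norm{\mathbf{s}^*}_2\bigl(\norm{\mathbf{s}^*}_2^2-\norm{\mathbf{s}}_2^2\bigr)\geq 0$ also fills a step the paper skips, since its sufficiency argument only compares $\mathbf{s}^*$ with points on the sphere $\norm{\mathbf{s}}_2^3=\xi$ and not with interior feasible points. Two small corrections: $\xi=0$ is not a trivial case but a genuine exception to the ``only if'' direction (there $\mathbf{s}^*=\mathbf{0}$ is the minimizer, yet (\ref{eq:firstOrderCondDiag}) would force $\nabla f(\mathbf{x}_k)=\mathbf{0}$), so you need $\xi>0$, which is exactly where your LICQ argument applies; and the uniqueness clause as printed needs no change, because $\nabla^2 f(\mathbf{x}_k)+c\,\mathbf{I}\succ 0$ with $c=\frac{\nu^*}{2}\norm{\mathbf{s}^*}_2$ makes every $[\nabla^2 f(\mathbf{x}_k)]_{ii}+c$ positive and hence gives $\mathbf{D}+c\,\mathbf{I}\succ 0$.
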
  
Corollary~\ref{cor:globalSolDiag} will be used in the proof of Theorem~\ref{thm:diagmodelmin}.

\subsection{Proof of Lemma~\ref{lemma:minLmaxL}}\label{appendix:minLmaxL}

Starting from the primal optimization problem
\allowdisplaybreaks
\begin{equation}
\min_{\mathbf{s}\in \mathbb{R}^d} \pazocal{L}_{\xi} (\mathbf{s}, \nu) \stackrel{(\ref{eq:lagrangeCubicOursEquiv})}{=} \\ \min_{\substack{\mathbf{s} \in \mathbb{R}^d\   \norm{\mathbf{s}}_2^2=\tau }} \nabla f(\mathbf{x}_k)^T \mathbf{s} + \frac{1}{2}\: \mathbf{s}^T \nabla^2 f(\mathbf{x}_k) \mathbf{s} + \frac{\nu}{6}\: \Bigl(\tau^{3/2} - \xi \Bigr),
\label{eq:baseInequality}
\end{equation} 
where $\nu$ is the Lagrange multiplier, the optimal value of the primal problem can be expressed as 
\begin{equation}
\min_{\mathbf{s}\in \mathbb{R}^d} \pazocal{L}_{\xi} (\mathbf{s}, \nu) \stackrel{(\ref{eq:lagrangeCubicOursEquiv})}{=} \min_{\substack{\mathbf{s} \in \mathbb{R}^d\  \tau \geq 0 }} \max_{r\in \pazocal{D}_\nu} \Biggl\{ \nabla f(\mathbf{x}_k)^T \: \mathbf{s} + \frac{1}{2}\: \mathbf{s}^T \: \nabla^2 f(\mathbf{x}_k) \: \mathbf{s} + \frac{\nu}{6}\: \Bigl(\tau^{3/2} - \xi\Bigr) + \frac{r\nu}{4}\left(\norm{\mathbf{s}}_2^2-\tau \right)\Biggr\},
\label{eq:baseInequality11}
\end{equation}
where $r$ is the Lagrange multiplier associated to the constraint $\norm{\mathbf{s}}_2^2=\tau$~\citep[Section 5.4]{boyd2004}. It is essential to highlight that the optimality conditions outlined in~\citet[Proposition 4.2.1]{bertsekas2017} explicitly require $r$ to belong to $\mathbb{R}$. However, $r$ is restricted to $\pazocal{D}_{\nu}$ for reasons that become apparent as the proof unfolds. If the weak duality property is applied to the right-hand side (RHS) of~(\ref{eq:baseInequality11}), we arrive at 
\allowdisplaybreaks
\begin{equation}\label{eq:baseInequality12}
\min_{\mathbf{s}\in \mathbb{R}^d} \pazocal{L}_{\xi} (\mathbf{s}, \nu) \geq  \max_{r\in \pazocal{D}_\nu} \min_{\substack{\mathbf{s} \in \mathbb{R}^d\  \tau \geq 0 }} \Biggl\{\nabla f(\mathbf{x}_k)^T \:\mathbf{s} + \frac{1}{2}\: \mathbf{s}^T \: \nabla^2 f(\mathbf{x}_k) \: \mathbf{s} + \\ \frac{\nu}{6}\: \Bigl(\tau^{3/2} - \xi\Bigr) + \frac{r\nu}{4}\left(\norm{\mathbf{s}}_2^2-\tau \right)\Biggl\}. 
\end{equation} 
From the first-order optimality condition~\citet{bertsekas2017}, the optimal value in the Left Hand Side (LHS) of~(\ref{eq:baseInequality}) w.r.t. $\mathbf{s}$ is attained by $\mathbf{s}$ that satisfies $\nabla_{\mathbf{s}} \pazocal{L}_{\xi} (\mathbf{s}, {\nu})  = \bm{0}$, i.e.,
\begin{equation}\label{eq:baseInequalityLHS}
\left(\nabla^2 f(\mathbf{x}_k) + \frac{\nu}{2} \norm{\mathbf{s}}_2 \:\mathbf{I}\right) \: \mathbf{s} = -\nabla f(\mathbf{x}_k), \quad \nu \geq 0.
\end{equation} 
At this point, we note that~(\ref{eq:baseInequalityLHS}) differs from~(\ref{eq:firstOrderCond}), because the stationarity of $\pazocal{L}_{\xi} (\mathbf{s}, {\nu})$ is studied w.r.t. $\mathbf{s}$ only. Denote the RHS of (\ref{eq:baseInequality12}) as
\allowdisplaybreaks
\begin{multline}\label{eq:Lagrangescr} 
\mathscr{L}_{\xi} (\mathbf{s}, \nu, r, \tau) =  \:\nabla f(\mathbf{x}_k)^T \: \mathbf{s} + \frac{1}{2}\: \mathbf{s}^T \: \nabla^2 f(\mathbf{x}_k) \: \mathbf{s} +  \frac{\nu}{6}\: \Bigl(\tau^{3/2} - \xi\Bigr) + \frac{r\nu}{4} \Bigl(\norm{\mathbf{s}}_2^2-\tau \Bigr) \\ = \: \nabla f(\mathbf{x}_k)^T \mathbf{s} + \frac{1}{2}\: \mathbf{s}^T \left(\nabla^2 f(\mathbf{x}_k) + \frac{\nu \: r}{2} \: \mathbf{I}\right) \mathbf{s} + \frac{\nu}{6}\: \Bigl(\tau^{3/2} - \xi\Bigr) - \frac{r\nu}{4} \tau.
\end{multline} 
We start with the case $\nu > 0$. Solving $\partial_{\tau}\mathscr{L}_{\xi} (\mathbf{s}, \nu, r, \tau) = 0 $ w.r.t. $\tau$, we get 
\begin{equation}\label{eq:tau}
\tau^* = r^2,
\end{equation} 
where $r \in \pazocal{D}_{\nu}$. Restricting $r$ in $\pazocal{D}_\nu$, implies that $r > 0$, which in turn implies $\tau^* > 0$, as $\norm{\mathbf{s}} = \tau^*$. If $\tau^* = 0$, we have $\norm{\mathbf{s}} = 0$, which leads to the trivial solution, i.e., the zero vector. Solving $\nabla_{\mathbf{s}}\mathscr{L}_{\xi} (\mathbf{s}, \nu, r, \tau) = \bm {0} $ w.r.t. $\mathbf{s}$, we arrive at 
\begin{equation} \label{eq:systemeqL1}
\nabla f(\mathbf{x}_k) =- \left(\nabla^2 f(\mathbf{x}_k) + \frac{\nu \: r}{2}~\mathbf{I} \right) \mathbf{s}.
\end{equation} 
For $r \in \pazocal{D}_\nu$, we get from~(\ref{eq:systemeqL1})
\begin{equation}\label{eq:hx1}
\mathbf{s} (\nu, r) = - \left(\nabla^2 f(\mathbf{x}_k) + \frac{\nu \: r}{2}~\mathbf{I}\right)^{-1} \nabla f(\mathbf{x}_k), \quad \nu > 0,
\end{equation} 
which implies the dependence of $\mathbf{s}$ on the variables $\nu$ and $r$. Restricting $r$ in $\pazocal{D}_\nu$, we achieve the invertibility in~(\ref{eq:hx1}) when $\nu > 0$. Substituting~(\ref{eq:tau}) and (\ref{eq:systemeqL1}) in~(\ref{eq:Lagrangescr}), we get 
\begin{equation}\label{eq:hLscr}
\mathscr{L}_{\xi}(\mathbf{s} (\nu, r), \nu, r) =  -\frac{1}{2}\:\mathbf{s}(\nu, r)^T\: \left(\nabla^2 f(\mathbf{x}_k) + \frac{\nu \: r}{2} \:\mathbf{I}\right) \: \mathbf{s}(\nu, r) - \frac{\nu}{6}\xi - \frac{\nu}{12} r^3.
\end{equation} 
Combining~(\ref{eq:baseInequality12}) and~(\ref{eq:hLscr}) we get for $\nu > 0$
\begin{equation}\label{eq:baseInequality2}
\min_{\mathbf{s}\in \mathbb{R}^d} \pazocal{L}_{\xi} (\mathbf{s}, \nu) \geq \max_{r \in \pazocal{D}_\nu}\: {\mathscr{L}}_{\xi}(\mathbf{s} (\nu, r), \nu, r).
\end{equation} 
The derivative of~(\ref{eq:hLscr}) w.r.t. $r$ is
\begin{equation}\label{eq:partialLagrangrhoxi}
\partial_{r} \mathscr{L}_{\xi} (\mathbf{s}(\nu, r), \nu, r)  = \frac{\nu}{4} \:\Bigl( \norm{\mathbf{s}(\nu, r)}_2^2 -r^2\Bigr).
\end{equation} 
Thus, for any $\nu > 0$, the optimal value in the RHS of~(\ref{eq:baseInequality2}) is attained for $r^* \in \pazocal{D}_\nu$ that solves
\begin{equation}\label{eq:optwrtr}
\left(\frac{\partial{\mathscr{L}_{\xi}(\mathbf{s} (\nu, r), \nu, r)}}{\partial r}\right)_{r = r^*} = 0.
\end{equation} 
Using~(\ref{eq:partialLagrangrhoxi}) in~(\ref{eq:optwrtr}), we have
\begin{equation} \label{eq:rho}
r^* = \norm{\mathbf{s}(\nu, r^*)}_2\: \text{for any}\: \nu > 0. 
\end{equation}  
Restricting $r^*$ in $\pazocal{D}_\nu$, we avoid the trivial solution $\mathbf{s}(\nu, r^*) = \bm{0}$ for any $\nu > 0$. This restriction on $r$ in (\ref{eq:baseInequality11}) is precisely due to its inclusion in~$\pazocal{D}_{\nu}$. 

Using~(\ref{eq:systemeqL1}) in~(\ref{eq:lagrangeCubicOursEquiv}) we attain
\begin{equation}
\pazocal{L}_{\xi} (\mathbf{s}(\nu, r), \nu) = 
- \mathbf{s} (\nu, r)^T \: \left(\nabla^2 f(\mathbf{x}_k)+ \frac{\nu \: r}{2}~\mathbf{I}\right)\: \mathbf{s}(\nu, r) +  \frac{1}{2}\:\mathbf{s}(\nu, r)^T \: \nabla^2 f(\mathbf{x}_k) \: \mathbf{s}(\nu, r)  + \frac{\nu}{6}\:\norm{\mathbf{s}(\nu, r)}_2^3 - \frac{\nu}{6}\xi. 
\end{equation} 
Adding and subtracting the terms $\frac{\nu}{12}r^3$ and $\frac{\nu \: r}{4}\: \norm{\mathbf{s}(\nu, r)}_2^2$, we get
{\allowdisplaybreaks
\begin{multline}
\pazocal{L}_{\xi} (\mathbf{s}(\nu, r), \nu) = 
- \mathbf{s} (\nu, r)^T \: \left(\nabla^2 f(\mathbf{x}_k)+ \frac{\nu \: r}{2}~\mathbf{I}\right)\: \mathbf{s}(\nu, r) +  \frac{1}{2}\:\mathbf{s}(\nu, r)^T \: \nabla^2 f(\mathbf{x}_k) \: \mathbf{s}(\nu, r) + \frac{\nu}{6}\:\norm{\mathbf{s}(\nu, r)}_2^3 \\ - \frac{\nu}{6}\xi  + \underbrace{\left(\frac{\nu}{12}r^3-\frac{\nu}{12}r^3 \right)}_{0} + \underbrace{\left(\frac{\nu \: r}{4}\: \norm{\mathbf{s}(\nu, r)}_2^2 - \frac{\nu \: r}{4}\: \norm{\mathbf{s}(\nu, r)}_2^2 \right)}_{0}.
\end{multline}} 
Next, using~(\ref{eq:hLscr}), with appropriate rearrangements we arrive at
{\allowdisplaybreaks
\begin{multline}\label{eq:Lagrangianrelation1}
\pazocal{L}_{\xi} (\mathbf{s}(\nu, r), \nu) =  \mathscr{L}_{\xi}(\mathbf{s}(\nu, r), \nu, r) + \frac{\nu}{12}r^3 + \frac{\nu}{6}\:\norm{\mathbf{s}(\nu, r)}_2^3 - \frac{\nu \: r}{4}\: \norm{\mathbf{s}(\nu, r)}_2^2 \\ = \mathscr{L}_{\xi}(\mathbf{s}(\nu, r), \nu, r) + \frac{\nu}{12}\:\Bigl( r^3 + 2~\norm{\mathbf{s}(\nu, r)}_2^3 - 3r\: \norm{\mathbf{s}(\nu, r)}_2^2 \Bigr) \\ = \mathscr{L}_{\xi}(\mathbf{s}(\nu, r), \nu, r) +  \frac{\nu}{12}\:\left(\norm{\mathbf{s}(\nu, r)}_2-r\right)^2 \left(r+2~\norm{\mathbf{s}(\nu, r)}_2 \right).
\end{multline}}
Then, using~(\ref{eq:partialLagrangrhoxi}) for $r \in \pazocal{D}_\nu$ and $\nu > 0$ we obtain
\begin{equation}\label{eq:Lagrangianrelation6}
\pazocal{L}_{\xi} (\mathbf{s}(\nu, r), \nu) =  \mathscr{L}_{\xi}(\mathbf{s}(\nu, r), \nu, r) + \frac{4}{3\nu}\: \frac{\left(r+2~\norm{\mathbf{s}(\nu, r)}_2 \right)}{\left(r+\norm{\mathbf{s}(\nu, r)}_2 \right)^2} \Bigl(\partial_{r} \mathscr{L}_{\xi}(\mathbf{s}(\nu, r), \nu, r) \Bigr)^2.
\end{equation} 
When~(\ref{eq:optwrtr}) is satisfied for some $\nu > 0$, $\mathscr{L}_{\xi}(\mathbf{s}(\nu, r), \nu, r)$ given by~(\ref{eq:Lagrangianrelation6}) is maximized w.r.t. $r \in \pazocal{D}_\nu$. From~(\ref{eq:Lagrangianrelation6}) we have
\begin{equation}\label{eq:Lagrangianrelation4}
\pazocal{L}_{\xi} (\mathbf{s}(\nu, r^*), \nu) = \max_{r \in \pazocal{D}_\nu}\: \mathscr{L}_{\xi}(\mathbf{s}(\nu, r), \nu, r).
\end{equation} 
In order to obtain~(\ref{eq:minLmaxL}), we need to show
\begin{equation}\label{eq:Lagrangianrelation5}
\pazocal{L}_{\xi} (\mathbf{s}(\nu, r^*), \nu) = \min_{\mathbf{s}\in \mathbb{R}^d}\:\pazocal{L}_{\xi} (\mathbf{s}, \nu), \quad \nu >0.
\end{equation} 
When $r^* \in \pazocal{D}_\nu$ in~(\ref{eq:hx1}) and using $r^* = \norm{\mathbf{s}(\nu, r^*)}_2$ for some $\nu >0$ in~(\ref{eq:rho}), we get
\begin{equation}\label{eq:hx2}
\left(\nabla^2 f(\mathbf{x}_k) + \frac{\nu}{2} \norm{\mathbf{s} (\nu, r^*)}\:\mathbf{I}\right)\:\mathbf{s} (\nu, r^*) = - \nabla f(\mathbf{x}_k)
\end{equation} 
which implies that $\mathbf{s}(\nu, r^*)$ minimizes $\pazocal{L}_{\xi} (\mathbf{s}, \nu)$.

We conclude with the case $\nu = 0$. In this case, we observe that~(\ref{eq:minLmaxL}) is easily attained by applying~(\ref{eq:baseInequality12}) when equality holds, which concludes the proof.

\subsection{Proof of Theorem~\ref{thm:minLmaxL2}}\label{appendix:minLmaxL2}

From the \textit{weak duality} in~(\ref{prob:cubicOursEquiv}) and~(\ref{eq:minLmaxL}) we have
\begin{equation}\label{eq:baseInequality1}
\min_{\mathbf{s}\in \mathbb{R}^d} \max_{\nu \geq 0} \pazocal{L}_{\xi} (\mathbf{s}, \nu) \geq  \max_{\nu \geq 0} \min_{\mathbf{s}\in \mathbb{R}^d} \pazocal{L}_{\xi} (\mathbf{s}, \nu)  \stackrel{(\ref{eq:minLmaxL})}{=}  \max_{\nu \geq 0} \max_{r \in \pazocal{D}_\nu} \mathscr{L}_{\xi}(\mathbf{s}(\nu, r), \nu, r)   =  \max_{r \in \pazocal{D}_\nu,\nu \geq 0} \mathscr{L}_{\xi}(\mathbf{s}(\nu, r), \nu, r),
\end{equation} 
where the last term in~(\ref{eq:baseInequality1}) refers to a joint optimization problem. We start with the case $\nu > 0$. The derivative of~(\ref{eq:hLscr}) w.r.t. $\nu$ is
\begin{equation}\label{eq:partialLagrangnu}
\partial_{\nu} \mathscr{L}_{\xi}\bigl(\mathbf{s}(\nu, r), \nu, r\bigr) = \frac{r}{4}\:\norm{\mathbf{s} (\nu, r)}_2^2 - \frac{\xi}{6} - \frac{r^3}{12},
\end{equation} 
where $r \in \pazocal{D}_\nu$.

To prove that~(\ref{eq:baseInequality1}) holds with equality and subsequently prove~(\ref{eq:minLmaxL2}), we study the optimality conditions that maximize the RHS of~(\ref{eq:baseInequality1}). The optimal value in the RHS of~(\ref{eq:baseInequality1}) w.r.t $\nu > 0$ is achieved by some $\nu^* > 0$ that solves 
\begin{equation} \label{eq:optwrtnu}
\left(\frac{\partial \mathscr{L}_{\xi}\bigl(\mathbf{s}(\nu, r), \nu, r \bigr)}{\partial \nu}\right)_{\nu = \nu^*}=0, 
\end{equation} 
for $r \in \pazocal{D}_\nu$. In addition, the optimal value in the RHS of~(\ref{eq:baseInequality1}) w.r.t $r \in \pazocal{D}_\nu$ is achieved if~(\ref{eq:optwrtr}) or equivalently~(\ref{eq:rho}) holds. Given that~(\ref{eq:rho}) holds for any $\nu > 0$, without loss of generality, we assume that~(\ref{eq:rho}) also holds for $\nu^* > 0$, i.e.,
\begin{equation} \label{eq:rho1}
r^* = \norm{\mathbf{s}(\nu^*, r^*)}_2\: \text{for any}\: \nu^* > 0. 
\end{equation} 
When~(\ref{eq:rho1}) holds, from~(\ref{eq:optwrtnu}) we get
\begin{equation}\label{eq:xi}
\xi = \frac{3r^*}{2}\:\norm{\mathbf{s} (\nu^*, r^*)}^2_2 - \frac{{r^*}^3}{2}. 
\end{equation} 
Solving~(\ref{eq:hx}) w.r.t. $\nabla f(\mathbf{x}_k)$ and substituting in~(\ref{eq:lagrangeCubicOursEquiv}) we get 
\allowdisplaybreaks
\begin{multline}\label{eq:lagrangeNablafSubstitute}
\pazocal{L}_{\xi} (\mathbf{s}(\nu, r), \nu) =  - \mathbf{s} (\nu, r)^T \: \left(\nabla^2 f(\mathbf{x}_k) + \frac{\nu \: r}{2}~\mathbf{I} \right) \: \mathbf{s} (\nu, r) + \\ \frac{1}{2}\:\mathbf{s} (\nu, r)^T \: \nabla^2 f(\mathbf{x}_k) \: \mathbf{s} (\nu, r) +  \frac{\nu}{6}\: \left( \norm{\mathbf{s} (\nu, r)}_2^3 -\xi \right).
\end{multline} 
Then, applying~(\ref{eq:xi}) for $r \in \pazocal{D}_\nu$, we have
\allowdisplaybreaks
\begin{multline}
\pazocal{L}_{\xi}  (\mathbf{s}(\nu, r), \nu) =  - \mathbf{s} (\nu, r)^T \: \left(\nabla^2 f(\mathbf{x}_k) + \frac{\nu \: r}{2}~\mathbf{I}\right) \: \mathbf{s} (\nu, r)  +  \frac{1}{2}\:\mathbf{s} (\nu, r)^T \: \nabla^2 f(\mathbf{x}_k) \: \mathbf{s} (\nu, r)  + \hspace{3cm}\\ \frac{\nu}{6}\: \norm{\mathbf{s} (\nu, r)}_2^3 -  \frac{\nu \: r}{4} \:\norm{\mathbf{s} (\nu, r)}_2^2 + \frac{\nu \: r^3}{12}.
\end{multline} 
Adding and subtracting $\frac{\nu \: r}{4} \:\norm{\mathbf{s} (\nu, r)}_2^2$ we obtain
\allowdisplaybreaks
\begin{multline}
\pazocal{L}_{\xi}  (\mathbf{s}(\nu, r), \nu) =  -\frac{1}{2}\:\mathbf{s} (\nu, r)^T \: \left(\nabla^2 f(\mathbf{x}_k) + \frac{\nu \: r}{2}~\mathbf{I}\right) \: \mathbf{s} (\nu, r) - \frac{\nu \: r}{4} \:\norm{\mathbf{s} (\nu, r)}_2^2  + \frac{\nu}{6}\: \norm{\mathbf{s} (\nu, r)}_2^3  - \\ \frac{\nu \: r}{4} \:\norm{\mathbf{s} (\nu, r)}_2^2 + \frac{\nu \: r^3}{12}.
\end{multline} 
Similarly adding and subtracting $\frac{\nu \xi}{6}$ and $\frac{\nu \: r^3}{12}$ reveals the term $\mathscr{L}_{\xi} (\mathbf{s} (\nu, r), \nu, r)$ yielding
\begin{equation}\label{eq:Lagrangianrelation2}
\pazocal{L}_{\xi}  (\mathbf{s}(\nu, r), \nu) \stackrel{(\ref{eq:lagrangianscr})}{=}  \mathscr{L}_{\xi} (\mathbf{s} (\nu, r), \nu, r) + \nu\left(\frac{r^3}{12} + \frac{\xi}{6} -\frac{r}{4}~\norm{\mathbf{s} (\nu, r)}_2^2\right) +  \hfill \frac{\nu}{12}\left( r^3 + 2\norm{\mathbf{s} (\nu, r)}_2^3 - 3r \:\norm{\mathbf{s} (\nu, r)}_2^2\right).
\end{equation}
The terms inside the first bracket of~(\ref{eq:Lagrangianrelation2}) are identified as $-\partial_{\nu} \mathscr{L}_{\xi}\bigl(\mathbf{s}(\nu, r), \nu, r\bigr)$, yielding
{\allowdisplaybreaks
\begin{multline}
\pazocal{L}_{\xi} (\mathbf{s}(\nu, r), \nu) =   \mathscr{L}_{\xi} (\mathbf{s} (\nu, r), \nu, r) - \nu~\partial_{\nu}\mathscr{L}_{\xi}\bigl(\mathbf{s}(\nu, r), \nu, r\bigr) +   \frac{\nu}{12}\:\left(\norm{\mathbf{s}(\nu, r)}_2-r\right)^2 \left(r+2~\norm{\mathbf{s}(\nu, r)}_2 \right)    \\\overset{(\ref{eq:partialLagrangnu})}{=}  \mathscr{L}_{\xi}\bigl(\mathbf{s}(\nu, r), \nu, r\bigr) +\frac{4}{3\nu} \frac{\left(r+2~\norm{\mathbf{s}(\nu, r)}_2 \right)}{\left(r+\norm{\mathbf{s}(\nu, r)}_2 \right)^2} \Bigl(\partial_{r} \mathscr{L}_{\xi}(\mathbf{s}(\nu, r), \nu, r) \Bigr)^2  -  \nu~\partial_{\nu} \mathscr{L}_{\xi}\bigl(\mathbf{s}(\nu, r), \nu, r\bigr)
\end{multline}} 
and by rearranging terms, we arrive at
\begin{equation}\label{eq:Lagrangianrelation3}
\pazocal{L}_{\xi} (\mathbf{s}(\nu, r), \nu) =  \mathscr{L}_{\xi}\bigl(\mathbf{s}(\nu, r), \nu, r\bigr) - \nu~\partial_{\nu} \mathscr{L}_{\xi}\bigl(\mathbf{s}(\nu, r), \nu, r\bigr) +   \frac{4}{3\nu} \frac{\left(r+2~\norm{\mathbf{s}(\nu, r)}_2 \right)}{\left(r+\norm{\mathbf{s}(\nu, r)}_2 \right)^2} \Bigl(\partial_{r} \mathscr{L}_{\xi}(\mathbf{s}(\nu, r), \nu, r) \Bigr)^2.
\end{equation} 
When~(\ref{eq:optwrtr}) and~(\ref{eq:optwrtnu}) hold, $\mathscr{L}_{\xi}\bigl(\mathbf{s}(\nu, r), \nu, r\bigr)$ is maximized and from~(\ref{eq:Lagrangianrelation3}), we have 
\begin{equation}\label{eq:implystrongduality}
\pazocal{L}_{\xi}\bigl(\mathbf{s}(\nu^*, r^*), \nu^*, r^*\bigr) = \max_{\nu \geq 0, r \in \pazocal{D}_\nu}\: \mathscr{L}_{\xi}(\mathbf{s}(\nu, r), \nu, r),
\end{equation} 
where $r^*$ and $\nu^*$ optimize the RHS of~(\ref{eq:baseInequality1}). Given $r^*$, $\nu^*$, and~(\ref{eq:implystrongduality}), to show~(\ref{eq:minLmaxL2}), we need to prove
\begin{equation}\label{eq:Lagrangianrelation55}
\pazocal{L}_{\xi} \bigl(\mathbf{s}(\nu^*, r^*), \nu^*, r^*\bigr) = \min_{\mathbf{s}\in \mathbb{R}^d}\max_{\nu \geq 0}\: \pazocal{L}_{\xi}(\mathbf{s},\nu).
\end{equation} 
To do so, we need to show that the optimal $\mathbf{s}$ in the RHS of~(\ref{eq:Lagrangianrelation55}) equals $\mathbf{s}(\nu^*, r^*)$ in the LHS of~(\ref{eq:Lagrangianrelation55}). The optimal $\mathbf{s}$ in the RHS of~(\ref{eq:Lagrangianrelation55}) satisfies Lemma~\ref{lemma:globalSol}. Thus, by Lemma~\ref{lemma:globalSol}, if $\mathbf{s}(\nu^*, r^*)$ satisfies the CS condition
\begin{equation}\label{eq:csCond11}
\nu^*\:(\norm{\mathbf{s}(\nu^*, r^*)}_2^3- \xi)=0
\end{equation} 
and the system of equations
\begin{equation}\label{eq:sytemeqOpt}
\left(\nabla^2 f(\mathbf{x}_k) + \frac{\nu^*}{2} \norm{\mathbf{s} (\nu^*, r^*)}\:\mathbf{I}\right)\:\mathbf{s} (\nu^*, r^*) = - \nabla f(\mathbf{x}_k),
\end{equation} 
then (\ref{eq:Lagrangianrelation55}) holds. (\ref{eq:csCond11}) implies $\norm{\mathbf{s}(\nu^*, r^*)}_2^3 =  \xi$ for $\nu^* > 0$, which is true because of~(\ref{eq:rho1}). To prove~(\ref{eq:sytemeqOpt}), we apply~(\ref{eq:hx2}), where without loss of generality we replace $\nu > 0$ with $\nu^* > 0$, and the proof is complete. Corollary~\ref{cor:strongduality}, in the paper's main body, summarizes this proof's main result.

\subsection{Proof of Theorem~\ref{thm:probequiv}}\label{appendix:probequiv}

The proof has two parts. The first part deals with the RHS of~(\ref{eq:equivalence}), while the second part deals with the LHS of~(\ref{eq:equivalence}). A similar procedure is followed to that in~\citet[Proposition 1]{Kloft2009} to prove Theorem~\ref{thm:probequiv}.

\begin{itemize}[leftmargin=*]

\item[$\rhd$] \textbf{First part}.

Let $\mathbf{s}^*$ be the minimizer of~(\ref{prob:cubicOursEquiv}) which satisfies the feasibility condition $g_{\xi} (\mathbf{s}^*) \leq 0$. We want to show that when $M = \nu^*$, $\mathbf{s}^*$ is also a minimizer of~(\ref{prob:cubicNesterov2}). From Lemma~\ref{lemma:globalSol}, we recall that $\nu^* \: (\norm{\mathbf{s}^*}_2^3 - \xi) = 0$. Consequently,
\begin{equation}\label{eq:strongDuality1}
\min_{\mathbf{s} \in \mathbb{R}^d} \max_{\nu \geq 0} \pazocal{L}_{\xi}(\mathbf{s}, \nu) =  \pazocal{L}_{\xi} (\mathbf{s}^*, \nu^*) = \hat{m} (\mathbf{s}^*) + \underbrace{\cancelto{0}{\frac{\nu^*}{6}\: \Bigl(\norm{\mathbf{s}^*}_2^3 - \xi \Bigr)}}_{\substack{\text{0 from CS condition}}}  =  \hat{m} (\mathbf{s}^*).
\end{equation} 
From Corollary~\ref{cor:strongduality}, we have 
\allowdisplaybreaks
\begin{equation}\label{eq:strongDuality2}
\min_{\mathbf{s}\in \mathbb{R}^d} \max_{\nu \geq 0} \pazocal{L}_{\xi}(\mathbf{s}, \nu) = \max_{\nu \geq 0} \underbrace{\min_{\mathbf{s}\in \mathbb{R}^d} \pazocal{L}_{\xi} (\mathbf{s}, \nu)}_{\psi(\nu)} =  \max_{\nu \geq 0}\:\psi(\nu) = \psi (\nu^*), 
\end{equation} 
where $\psi (\nu)$ is the dual function of the constrained optimization problem~(\ref{prob:cubicOursEquiv}). From~(\ref{eq:strongDuality1}) and~(\ref{eq:strongDuality2}) we have
\allowdisplaybreaks
\begin{multline}\label{eq:strongDuality3}
\hat{m}(\mathbf{s}^*) = \psi (\nu^*) 
=  \min_{\mathbf{s} \in \mathbb{R}^d}\: \pazocal{L}_{\xi} (\mathbf{s}, \nu^*) 
=  \min_{\mathbf{s} \in \mathbb{R}^d}\:\Bigl\{ \hat{m}(\mathbf{s}) + \frac{\nu^*}{6}\:\overbrace{\Bigl(\norm{\mathbf{s}}_2^3 - \xi \Bigr)}^{\leq 0, \:\text{by feasibility}} \Bigr\} \\ \leq  \min_{\mathbf{s} \in \mathbb{R}^d}\:\hat{m} (\mathbf{s}) = \hat{m} (\mathbf{s}^*) + \frac{1}{6}\underbrace{\cancelto{0}{\nu^*\:\Bigl(\norm{\mathbf{s}^*}_2^3 - \xi \Bigr)}}_{\text{0 from CS condition}} =  \hat{m} (\mathbf{s}^*).
\end{multline} 
Since the first and the last term in~(\ref{eq:strongDuality3}) are equal, due to the CS condition, the in-between inequalities hold with equality, i.e.,  
\allowdisplaybreaks
\begin{equation} \label{eq:strongDuality4}
\min_{\mathbf{s} \in \mathbb{R}^d}\:\Bigl\{ \hat{m}(\mathbf{s}) + \frac{\nu^*}{6}\:\left(\norm{\mathbf{s}}_2^3 - \xi \right) \Bigr\}
=  \hat{m} (\mathbf{s}^*) + \frac{\nu^*}{6}\:\Bigl(\norm{\mathbf{s}^*}_2^3 - \xi \Bigr). 
\end{equation} 
Removing the constant term $-\frac{\nu^*}{6}\xi$ from both sides of~(\ref{eq:strongDuality4}), we obtain
{\allowdisplaybreaks
\begin{equation} \label{eq:strongDuality5}
\min_{\mathbf{s}\in \mathbb{R}^d}\: m_{\nu^*}(\mathbf{s})
= \hat{m} (\mathbf{s}^*) + \frac{\nu^*}{6}\:\norm{\mathbf{s}^*}_2^3 \stackrel{(\ref{prob:cubicOursEquiv})}{=}  f(\mathbf{x}_k) + \nabla f(\mathbf{x}_k)^T\mathbf{s}_k^* + \frac{1}{2}{\mathbf{s}^*}^T \nabla^2 f(\mathbf{x}_k) \mathbf{s}^* + \\ \frac{\nu^*}{6}\:\norm{\mathbf{s}^*}_2^3 \stackrel{(\ref{eq:basicCubicProblem})}{=}  m_{\nu^*}(\mathbf{s}^*),
\end{equation}} 
which implies that $\mathbf{s}^*$ is also a minimizer of~(\ref{prob:cubicNesterov2}) with $M=\nu^*$ and the first part of the proof is complete.

\item[$\rhd$] \textbf{Second part}. 

Let $\mathbf{s}^*$ be a minimizer of~(\ref{prob:cubicNesterov2}). We should prove that $\mathbf{s}^*$ is also a minimizer of~(\ref{prob:cubicOursEquiv}) when $\xi = \norm{\mathbf{s}^*}_2^3$. For such $\xi$, $g_{\xi} (\mathbf{s}^*) = 0$. We prove the second part by contradiction. Suppose, $\mathbf{s}^*$ is not optimal in~(\ref{prob:cubicOursEquiv}), i.e., there is a feasible point $\mathbf{s}$ such that $\hat{m} (\mathbf{s}) \leq \hat{m} (\mathbf{s}^*)$. For this feasible point we also have $g_{\xi} ({\mathbf{s}}) \leq 0$ and $g_{\xi} ({\mathbf{s}}) \leq g_{\xi} (\mathbf{s}^*)$. Then, we get, 
\begin{multline}\label{eq:strongDuality6}
\hat{m} (\mathbf{s}) \leq \hat{m} (\mathbf{s}^*) \Leftrightarrow  \hat{m} (\mathbf{s}) + g_{\xi} (\mathbf{s}) \leq \hat{m} (\mathbf{s}^*) + g_{\xi} (\mathbf{s}^*)  \\ \Leftrightarrow 
\hat{m} (\mathbf{s}) + \frac{\nu}{6}\:\Bigl(\norm{\mathbf{s}}_2^3-\xi \Bigr) \leq \hat{m} (\mathbf{s}^*) + \frac{\nu}{6}\:\Bigl(\norm{\mathbf{s}^*}_2^3 -\xi\Bigr).
\end{multline} 
Adding $\frac{\nu}{6}\xi$ in both sides of the last inequality in~(\ref{eq:strongDuality6}), using the definition of $m_{M} (\mathbf{s})$~(\ref{eq:basicCubicProblem}) with $M=\nu$, and applying the definition of $\hat{m} (\mathbf{s})$~(\ref{prob:cubicOursEquiv}), we get
\begin{equation}\label{eq:strongDuality7}
m_{\nu} (\mathbf{s}) \leq m_{\nu} (\mathbf{s}^*).
\end{equation} 
This is a contradiction, because $\mathbf{s}^*$ is a minimizer of~(\ref{prob:cubicNesterov2}). Hence, 
$\mathbf{s}^*$ is also a minimizer of~(\ref{prob:cubicOursEquiv}), when $g_{\xi} (\mathbf{s}^*) = 0 \Leftrightarrow \xi = \norm{\mathbf{s}^*}_2^3$, which concludes the second part of the proof.
\end{itemize}

\subsection{Proof of Lemma~\ref{lemma:gradientDeviationBound}} \label{supplement:gradientDeviationBound}

We follow similar lines to the proof of~\citep[Lemma 6 and Theorem 7]{kohler2017}. Note that $\pazocal{B}_k^g$ is used instead of $\pazocal{B}_k$ to emphasize that the deviation bound in~(\ref{eq:gradientDeviationBound}) and the sampling scheme in~(\ref{eq:gradientSampling}) are specifically derived using information associated with $\mathbf{g}_k$.

The proof resorts to Vector Bernstein's inequality in Lemma~\ref{lemma:vectorBernstein} (discussed in Appendix~\ref{supplement:vectormatrixBernstein}). Let us define the centered gradient
\begin{equation}
    \mathbf{z}_{i,k}^{s} = \nabla f_i(\mathbf{x}_k) - \nabla f (\mathbf{x}_k),
\end{equation} 
where $i=1,\dots,n=|\pazocal{B}_k^g|$. First, we show
\begin{equation}
    \norm{\mathbf{z}_{i,k}^{s}}_2 \leq \norm{\nabla f_i(\mathbf{x}_k)} + \norm{\nabla f (\mathbf{x}_k)}_2 \leq 2 L_f,
      \label{Eq.5-21}
\end{equation}
which implies $\norm{\mathbf{z}_{i,k}^{s}}_2^2 \leq 4 L_f^2$. Accordingly, $\sigma^2 \stackrel{\Delta}{=} 4 L_f^2$ in Lemma~\ref{lemma:vectorBernstein}. In (\ref{Eq.5-21}), we have used
\begin{equation}
    \norm{\nabla f (\mathbf{x}_k)}_2 \leq \frac{1}{n}\sum_{i=1}^{n} \norm{\nabla f_i (\mathbf{x})}_2 \leq \frac{1}{n}\sum_{i=1}^{n} L_f = L_f,
\end{equation} 
where the triangle inequality and Assumption~\ref{assum:continuityAssum} have been applied. Then, we have
\begin{equation}\label{eq:zVectorBernStein}
    \mathbf{z}_{k} = \frac{1}{|\pazocal{B}_k^g|} \sum_{i=1}^n \mathbf{z}_{i,k}^{s} = \mathbf{g}_k - \nabla f(\mathbf{x}_k).
\end{equation} 
Using~(\ref{eq:zVectorBernStein}) in Lemma~\ref{lemma:vectorBernstein} for $n=|\pazocal{B}_k^g|$ and $ \sigma^2  = 4 L_f^2$ yields
\begin{equation}
    \Pr (\norm{\mathbf{g}_k - \nabla f(\mathbf{x}_k)}_2 \geq \epsilon) \leq \exp \left(-|\pazocal{B}_k^g| \frac{\epsilon^2}{32 L_f^2} + \frac{1}{4}\right).
\end{equation} 
Next, we require that the probability of the gradient deviation $\Pr (\norm{\mathbf{g}_k - \nabla f(\mathbf{x}_k)}_2 \geq \epsilon)$ is less than some $\delta \in (0, 1]$, i.e.,
\begin{equation}\label{eq:epsilonGradLowerBound}
    \exp \left(-|\pazocal{B}_k^g| \frac{\epsilon^2}{32 L_f^2} + \frac{1}{4}\right) \leq \delta \Leftrightarrow \epsilon \geq 4 \sqrt{2}L_f \sqrt{\frac{\ln \frac{1}{\delta} + \frac{1}{4}}{|\pazocal{B}_k^g|}}.
\end{equation} To derive~(\ref{eq:gradientDeviationBound}), we use~(\ref{eq:epsilonGradLowerBound}) in $\norm{\mathbf{g}_k - \nabla f(\mathbf{x}_k)}_2 \geq \epsilon$, along with Assumption~\ref{assum:gradAgreementAssum} to get
\allowdisplaybreaks
\begin{equation}\label{eq:gradientDeviationBound11}
\epsilon \leq \norm{\mathbf{g}_k  - \nabla f(\mathbf{x}_k)}_2 \geq C_g \norm{\mathbf{s}_k}_2^2 \Leftrightarrow  4 \sqrt{2}L_f \sqrt{\frac{\ln \frac{1}{\delta} + \frac{1}{4}}{|\pazocal{B}_k^g|}} \leq  C_g \norm{\mathbf{s}_k}_2^2,
\end{equation} which yields~(\ref{eq:gradientSampling}). Using the complementary probability 
\begin{equation*}
\end{equation*} with $\delta \in (0, 1]$, it is implied that 
\begin{equation*}
    \norm{\mathbf{g}_k - \nabla f(\mathbf{x}_k)}_2 \leq \epsilon   
\end{equation*} is fulfilled with high probability $1-\delta$ when~(\ref{eq:epsilonGradLowerBound}) holds. The latter derives~(\ref{eq:gradientDeviationBound}), and the proof is complete.

\subsection{Proof of Lemma~\ref{lemma:hessianDeviationBound}} \label{supplement:hessianDeviationBound}

Following similar lines to~\citet[Lemma 8 and Theorem 9]{kohler2017} and using $\pazocal{B}_k^H$ instead of $\pazocal{B}_k$ to emphasize that  the deviation bound in~(\ref{eq:hessianDeviationBound}) and the sampling scheme in~(\ref{eq:hessianSampling}) are obtained using information related to $\mathbf{B}_k$, we get
\allowdisplaybreaks
\begin{multline}\label{eq:approxHessianDiag1}
\mathbf{B}_{k} \stackrel{(\ref{eq:approxHessianDiag})}{=}  \frac{1}{\pazocal{S}} \sum_{s=1}^\pazocal{S} \Diag \left( \mathbf{H}_{k} \mathbf{v}_s \odot \mathbf{v}_s \right)  =   \frac{1}{\pazocal{S}} \sum_{s=1}^\pazocal{S} \Diag \left( \left( \frac{1}{|\pazocal{B}_k^g|} \sum_{i \in \pazocal{B}_k^g} \nabla^2 f_i(\mathbf{x}_k)\right)\mathbf{v}_s \odot \mathbf{v}_s  \right) \\ = 
\frac{1}{\pazocal{S}} \sum_{s=1}^\pazocal{S} \frac{1}{|\pazocal{B}_k^g|} \sum_{i \in \pazocal{B}_k^g}   \Diag \left( \nabla^2 f_i(\mathbf{x}_k) \mathbf{v}_s \odot \mathbf{v}_s \right).
\end{multline}
Let 
\begin{equation}\label{eq:BikDef}
    \mathbf{B}_{i,k}^{s} = \Diag \left(  \nabla^2 f_i(\mathbf{x}_k) \mathbf{v}_s \odot \mathbf{v}_s  \right).
\end{equation} 
For  $\mathbf{A}\in \mathbb{R}^{d \times d}$ it is known that $ \norm{\mathbf{A}}_2 \leq \norm{\mathbf{A}}_F$~\citep{golub2012}. Accordingly, for $\mathbf{B}_{i,k}^{s}$ we obtain
\allowdisplaybreaks
\begin{multline}
    \norm{\mathbf{B}_{i,k}^{s}}_2 =  \norm{\Diag \left(  \nabla^2 f_i(\mathbf{x}_k) \mathbf{v}_s \odot \mathbf{v}_s  \right)}_2 \leq \norm{\Diag \left(\nabla^2 f_i(\mathbf{x}_k) \mathbf{v}_s \odot \mathbf{v}_s \right)}_F \\ = \sqrt{\sum_{j=1}^d \left([\nabla^2 f_i(\mathbf{x}_k) \mathbf{v}_s]_{j} [\mathbf{v}_s]_{j}\right)^2}  
    = \sqrt{\sum_{j=1}^d \left([\nabla^2 f_i(\mathbf{x}_k) \mathbf{v}_s]_{j} \right)^2},  
\end{multline} 
where $[\mathbf{v}_s]_{j} = \pm 1$, yielding 
\begin{equation}\label{eq:BikInequality}
    \norm{\mathbf{B}_{i,k}^{s}}_2  \leq \norm{\nabla^2 f_i(\mathbf{x}_k) \mathbf{v}_s}_2 \leq \norm{\nabla^2 f_i(\mathbf{x}_k)}_2\norm{\mathbf{v}_s}_2.
\end{equation} 
For $\mathbf{v}_s \in \mathbb{R}^d$, $\norm{\mathbf{v}_s}_2 \leq \sqrt{d}\norm{\mathbf{v}_s}_\infty$~\citet{gould1999},  
where $\norm{\mathbf{v}_s}_\infty = \max_{ 1 \leq i \leq d} |[\mathbf{v}_s]_i|=1$. This allows us to rewrite~(\ref{eq:BikInequality}) as
\begin{equation}
    \norm{\mathbf{B}_{i,k}^{s}}_2  \leq \sqrt{d}\norm{\nabla^2 f_i(\mathbf{x}_k)}_2\norm{\mathbf{v}_s}_\infty \leq \sqrt{d} \norm{\nabla^2 f_i(\mathbf{x}_k)}_2.
\end{equation} 
As a result 
\begin{equation}\label{eq:BikUpperBound}
    \norm{\mathbf{B}_{i,k}^{s}}_2  \leq \sqrt{d} L_g,
\end{equation} 
because $\norm{\nabla^2 f_i(\mathbf{x}_k)}_2 \leq L_g$ due to Assumption~\ref{assum:continuityAssum}.

To apply the Matrix Bernstein's inequality in Lemma~\ref{lemma:matrixBernsteinTwoSources},  define the centered Hessian matrix
\begin{equation}\label{eq:ZikDef}
    \mathbf{Z}_{i,k}^{s} = \mathbf{B}_{i,k}^{s} - \Diag(\nabla^2 f (\mathbf{x}_k)),
\end{equation} 
where $i=1,\dots,|\pazocal{B}_k^H|$. 

Let $n'=|\pazocal{B}_k^H|$. From Lemma~\ref{lemma:diagSpec}, using Assumption~\ref{assum:continuityAssum}, and applying the triangle inequality we have 
\begin{equation}\label{eq:diagHessUpperLips}
    \norm{\Diag(\nabla^2 f(\mathbf{x}))}_2 \leq \norm{\nabla^2 f(\mathbf{x})}_2 \leq   \frac{1}{n'}\sum_{i=1}^{n'} \norm{\nabla^2 f_i(\mathbf{x})}_2\leq \frac{1}{n'}\sum_{i=1}^{n'} L_g \leq L_g,
\end{equation} 
Using~(\ref{eq:BikUpperBound}),~(\ref{eq:diagHessUpperLips}), and applying triangle inequality yields
\begin{equation}
    \norm{\mathbf{Z}_{i,k}^{s}}_2 = \norm{\mathbf{B}_{i,k}^{s}  - \Diag(\nabla^2 f(\mathbf{x}))}_2 \leq  \norm{\mathbf{B}_{i,k}^{s} }_2 + \norm{\Diag(\nabla^2 f(\mathbf{x}))}_2 \leq (\sqrt{d}+1) L_g,
\end{equation} 
which implies
\begin{equation}\label{eq:ZikUpperBound}
   \norm{\mathbf{Z}_{i,k}^{s}}_2 \leq  \sqrt{d} L_g,
\end{equation} 
as $d \gg 1$. Let
\begin{equation}\label{eq:zMatrixBernStein}
    \mathbf{Z}_{k} \stackrel{\Delta}{=} \frac{1}{\pazocal{S}} \sum_{s=1}^\pazocal{S}  \frac{1}{|\pazocal{B}_k^H|} \sum_{i \in \pazocal{B}_{k}^H}  \mathbf{Z}_{i,k}^{s} 
    \stackequal{(\ref{eq:ZikDef})}{(\ref{eq:BikDef})}
    \mathbf{B}_k - \Diag(\nabla^2 f(\mathbf{x}_k)).
\end{equation} 
Let also
\begin{multline}
     \sigma^2 \stackrel{\Delta}{=}\left\| \sum_{s=1}^\pazocal{S}  \sum_{i \in \pazocal{B}_{k}^H} \mathbb{E} \left[ {\mathbf{Z}_{i,k}^s}^2 \right] \right\|_2 \leq  \sum_{s=1}^\pazocal{S}  \sum_{i \in \pazocal{B}_{k}^H} \left\| \mathbb{E} \left[ {\mathbf{Z}_{i,k}^s}^2 \right] \right\|_2  \leq  \sum_{s=1}^\pazocal{S}  \sum_{i \in \pazocal{B}_{k}^H} \mathbb{E} \left[ \left\| {\mathbf{Z}_{i,k}^s}^2 \right\|_2 \right] \\ \leq  \sum_{s=1}^\pazocal{S}  \sum_{i \in \pazocal{B}_{k}^H}  \mathbb{E} \left[ \left\| {\mathbf{Z}_{i,k}^s} \right\|_2 \left\| {\mathbf{Z}_{i,k}^s} \right\|_2 \right]  \leq {d} \: \pazocal{S} \: |\pazocal{B}_k^H| \:L_g^2,
\end{multline} 
which implies $\sigma^2 \leq {d} \:\pazocal{S} \:|\pazocal{B}_k^H| \: L_g^2$. Also let $K \stackrel{\Delta}{=} \sqrt{d}\: L_g$. 
Using the latter in Lemma~\ref{lemma:matrixBernsteinTwoSources} implies
\begin{equation}\label{eq:matrixBernsteinCases}
    P \left(\left\| \sum_{s=1}^\pazocal{S}  \sum_{i \in \pazocal{B}_{k}^H} \mathbf{Z}_{i,k}^s \right\|_2 \geq t \right) \leq 
    \begin{cases}
        2d \exp \left(\frac{3}{8} \frac{-t^2}{d \pazocal{S} |\pazocal{B}_k^H| L_g^2} \right), \quad t \leq \frac{\sigma^2}{\sqrt{d}L_g} \\
        2d \exp \left(\frac{3}{8} \frac{-t}{\sqrt{d}L_g} \right), \quad t > \frac{\sigma^2}{\sqrt{d}L_g}.
    \end{cases}
\end{equation} 
Using~(\ref{eq:zMatrixBernStein}) in~(\ref{eq:matrixBernsteinCases}) for $t \leq \frac{\sigma^2}{\sqrt{d}L_g}$ and setting $\epsilon =  \frac{t}{|\pazocal{B}_k^H| \pazocal{S}}$ yields
\begin{equation}
    \Pr (\norm{\mathbf{B}_k - \Diag(\nabla^2 f(\mathbf{x}_k))}_2 \geq \epsilon) \leq   2 d\exp \left(-\frac{3}{8} \pazocal{S} |\pazocal{B}_k^H|  \left(\frac{\epsilon}{\sqrt{d} L_g}\right)^2\right).
\end{equation} 
For some $\delta \in (0, 1]$, we are interested in the upper bound 
\begin{equation}
    \Pr (\norm{\mathbf{B}_k - \Diag(\nabla^2 f(\mathbf{x}_k))}_2 \geq \epsilon) \leq \delta,
\end{equation} 
which implies
\begin{equation} \label{eq:epsilonHessianLowerBound}
   2 d\exp \left(-\frac{3}{8} \pazocal{S} |\pazocal{B}_k^H|  \frac{\epsilon^2}{d L_g^2}\right) \leq \delta \Leftrightarrow \epsilon \geq \sqrt{d} L_g \sqrt{ \frac{\ln \frac{2d}{\delta}}{\pazocal{S}|\pazocal{B}_k^H|}}.
\end{equation} 
Similarly, using~(\ref{eq:zMatrixBernStein}) in~(\ref{eq:matrixBernsteinCases}) for $t > \frac{\sigma^2}{\sqrt{d}L_g}$ and setting $\epsilon =  \frac{t}{|\pazocal{B}_k^H| \pazocal{S}}$ yields
\begin{equation}\label{eq:probUpperBound1}
    \Pr (\norm{\mathbf{B}_k - \Diag(\nabla^2 f(\mathbf{x}_k))}_2 \geq \epsilon) \leq   2 d\exp \left(-\frac{3}{8} \pazocal{S} |\pazocal{B}_k^H|  \frac{\epsilon}{\sqrt{d} L_g}\right).
\end{equation} 
Again, we are interested in the probability $\Pr (\norm{\mathbf{B}_k - \Diag(\nabla^2 f(\mathbf{x}_k))}_2) \geq \epsilon)$ is less than some $\delta \in (0, 1]$, i.e., 
\begin{equation} \label{eq:epsilonHessianLowerBound1}
    2 d\exp \left(-\frac{3}{8} \pazocal{S} |\pazocal{B}_k^H|  \frac{\epsilon}{\sqrt{d} L_g}\right) \leq \delta \Leftrightarrow \epsilon \geq \sqrt{d} L_g \frac{\ln \frac{2d}{\delta}}{\pazocal{S}|\pazocal{B}_k^H|}.
\end{equation}
We have that for $x \leq 1$ that $e^{-x} \leq e^{-x^2}$. Thus, for $d \gg 1$ we have $\frac{\epsilon}{\sqrt{d} L_g} < 1$ and the tightest upper bound of $\Pr (\norm{\mathbf{B}_k - \Diag(\nabla^2 f(\mathbf{x}_k))}_2 \geq \epsilon)$ is~(\ref{eq:probUpperBound1}). (\ref{eq:epsilonHessianLowerBound1}) indicates how large \( \epsilon \) must be for the probability of a deviation in~(\ref{eq:probUpperBound1}) to be at most \( \delta \), depending on the number of Hutchinson samples \( \pazocal{S} \), the mini-batch size \( |\pazocal{B}_k^H| \), the parameter dimension \( d \), and the smoothness constant \( L_g \). Next, using~(\ref{eq:epsilonHessianLowerBound1}) in Assumption~\ref{assum:hessAgreementAssum}, we get
\allowdisplaybreaks
\begin{equation}\label{eq:hessianDeviationBound11}
    \epsilon \leq \norm{\mathbf{B}_k  - \Diag(\nabla^2 f(\mathbf{x}_k))}_2 \leq C_B \: \norm{\mathbf{s}_k}_2 \Leftrightarrow  \sqrt{d} L_g \frac{\ln \frac{2d}{\delta}}{\pazocal{S}|\pazocal{B}_k^H|} \leq  C_B \: \norm{\mathbf{s}_k}_2 \Leftrightarrow  |\pazocal{B}_k^H| \geq \sqrt{d} L_g \frac{\ln \frac{2d}{\delta}}{\pazocal{S}\norm{\mathbf{s}_k}_2 C_B},
\end{equation} which yields~(\ref{eq:hessianSampling}). 

Using the complementary bound 
\begin{equation*}
\Pr (\norm{\mathbf{B}_k - \Diag(\nabla^2 f(\mathbf{x}_k))}_2 \leq \epsilon) \geq 1 - \delta,
\end{equation*} 
it is implied that 
\begin{equation*}
    \norm{\mathbf{B}_k - \Diag(\nabla^2 f(\mathbf{x}_k))}_2 \leq \epsilon   
\end{equation*} 
is fulfilled with high probability $1-\delta$ when~(\ref{eq:epsilonHessianLowerBound1}) holds. The latter is~(\ref{eq:hessianDeviationBound}) and the proof is complete. 

\subsection{Proof of Lemma~\ref{lemma:nursolution}}\label{appendix:nursolution}

In the following, $\mathbf{g}_k$ and $\mathbf{B}_k$ are utilized instead of $\nabla f(\mathbf{x}_k)$ and $\nabla^2 f(\mathbf{x}_k)$. This substitution is performed because $\mathbf{g}_k$ and $\mathbf{B}_k$ are directly used in Algorithms~\ref{alg:mainAlg} and~\ref{alg:newton}, both of which operate on data batches. In this manner, $\mathbf{s}(\nu, r)$ is now defined in terms of $\mathbf{B}_k$ and $\mathbf{g}_k$, rather than $\Diag(\nabla^2 f(\mathbf{x}_k))$ and $\nabla f(\mathbf{x}_k)$, respectively.

To obtain the minimizer $\mathbf{s}(\nu, r)$ in~(\ref{eq:solutionhopt1}), we need to solve w.r.t. $\nu$ and $r$ the system of equations
\begin{equation}\label{eq:systemnur}
\partial_{r} \mathscr{L}_{\xi} (\nu, r) =0\quad \text{and} \quad \partial_{\nu} \mathscr{L}_{\xi} (\nu, r) = 0.
\end{equation} 
The solution $(\nu, r)$ in~(\ref{eq:systemnur}) can also be computed sequentially. We start with the case $\nu > 0$. In this case, we can
first solve $\allowbreak \partial_{r} \mathscr{L}_{\xi} (\nu, r) = 0$ w.r.t. $r \in \pazocal{D}_\nu$. Then, the optimal $r$, can be used to solve $\partial_{\nu} \mathscr{L}_{\xi} (\nu, r) = 0$ w.r.t. $\nu$ to obtain the optimal $\nu > 0$. Solving $\partial_{r} \mathscr{L}_{\xi} (\nu, r) = 0$ w.r.t. $r$, yields
\allowdisplaybreaks
\begin{equation}
\label{eq:rgather}
\frac{\nu}{4}\:\Biggl\{\left[-\mathbf{g}_k^T \left(\mathbf{B}_k + \frac{\nu \: r}{2} \: \mathbf{I}\right)^{-1}\right]  \left[-\left(\mathbf{B}_k + \frac{\nu \: r}{2} \: \mathbf{I}\right)^{-1}\mathbf{g}_k\right] \Biggr\} - \frac{\nu \: r^2}{4} =  0 \stackrel{(\ref{eq:solutionh})}{\Leftrightarrow} \\ 
\nu\:(\norm{\mathbf{s} (\nu, r)}_2^2 - r^2) = 0.
\end{equation} 
For $\mathbf{s} (\nu, r) \neq \bm 0$, $r \in \pazocal{D}_\nu$, and $\nu > 0$.  Fom~(\ref{eq:rgather}), we obtain the root 
\begin{equation}\label{eq:psi1}
r = \norm{\mathbf{s} (\nu, r)}_2.
\end{equation} 
Then, the optimal $\nu$ can be computed by solving $\partial_{\nu} \mathscr{L}_{\xi} (\nu, r) \stackrel{(\ref{eq:lagrangianscr})}{=} 0$ w.r.t. $\nu$, i.e., 
\begin{equation}\label{eq:nugather}
\frac{r}{4}\:\Biggl\{\left[-\mathbf{g}_k^T \left(\mathbf{B}_k + \frac{\nu \: r}{2} \: \mathbf{I}\right)^{-1}\right] \left[-\left(\mathbf{B}_k + \frac{\nu \: r}{2} \: \mathbf{I}\right)^{-1}\mathbf{g}_k\right] \Biggr\} - \frac{\xi}{6} - \frac{r^3}{12} =  0,  
\end{equation} 
which is rewritten as
\begin{equation}\label{eq:nugather1}
\frac{r}{4}\:\norm{\mathbf{s} (\nu, r)}_2^2 - \frac{\xi}{6} - \frac{r^3}{12} = 0 
\end{equation} 
for some $\xi>0$. Substituting~(\ref{eq:psi1}) in~(\ref{eq:nugather1}), the optimal $r$ is given by
\begin{equation}\label{eq:optr}
r = \sqrt[3]{\xi}.
\end{equation} 
Using~(\ref{eq:optr}) in~(\ref{eq:nugather1}), the optimal $\nu$ can be computed by solving 
\begin{equation}\label{eq:psi2}
\omega (\nu, r) = \norm{\mathbf{s} (\nu, r)}_2 - \sqrt[3]{\xi} = 0,
\end{equation} w.r.t. $\nu$ for $r$ fixed. It is shown in~\citet[Section 7.3.3]{conn2000} that instead of solving~(\ref{eq:psi2}), it is more preferable to solve
\begin{equation}
\phi (\nu, r) = \frac{1}{\norm{\mathbf{s} (\nu, r)}_2} - \frac{1}{\sqrt[3]{\xi}} = 0.
\end{equation} 
We conclude with the case $\nu = 0$. In this case, the minimizer $\mathbf{s}(\nu, r)$ in~(\ref{eq:solutionhopt1}) is handled by Algorithm~\ref{alg:newton}, which concludes the proof.

\subsection{Preliminaries for Lemmata~\ref{lemma:hessianDeviationBound} and~\ref{lemma:diagHessian}}\label{supplement:diagHessian}

\begin{restatable}{lem}{diagSpec} 
For $\mathbf{M}\in \mathbb{R}^{d \times d}$ and  $\operatorname{Diag}(\mathbf{M})$ we have 
\begin{equation}\label{eq:specIneq1}
    \norm{\mathbf{M}}_2 \geq \norm{\operatorname{Diag}(\mathbf{M})}_2.
\end{equation}
\begin{proof}
First, we prove
\begin{equation}\label{eq:diagNormSpec}
\norm{\operatorname{Diag}(\mathbf{M})}_2 = \max_{k} |\operatorname{Diag}(\mathbf{M})_{kk}|, 
\end{equation} 
Let $d^* = \max_k\operatorname{Diag}(\mathbf{M})_{kk}$. Then, from the definition of the spectral norm, we have
\begin{equation}
\norm{\operatorname{Diag}(\mathbf{M})}_2 = \max_{\norm{\mathbf{x}}_2=1} \norm{\operatorname{Diag}(\mathbf{M})\mathbf{x}}_2 \leq \max_{\norm{\mathbf{x}}_2=1} \sqrt{\sum_k (\operatorname{Diag}(\mathbf{M})_{kk} \mathbf{x}_k)^2} \leq 
|d^*| \max_{\norm{\mathbf{x}}_2=1} \sqrt{\sum_{k} \mathbf{x}_k^2},
\end{equation} 
which leads to 
\begin{equation}\label{eq:diagLower}
\norm{\operatorname{Diag}(\mathbf{M})}_2 \leq |d^*|.
\end{equation} 
Let $\mathbf{e}_m$ be the vector of all zeros except a 1 in the $m$th position, where $m = \argmax_i \operatorname{Diag}(\mathbf{M})_{ii}$. Then,  
\begin{equation}
\norm{\operatorname{Diag}(\mathbf{M})}_2 = \max_{\norm{\mathbf{x}}_2=1} \norm{\operatorname{Diag}(\mathbf{M})\mathbf{x}}_2 \geq  \norm{\operatorname{Diag}(\mathbf{M})\mathbf{e}_m}_2,
\end{equation} which leads to 
\begin{equation}\label{eq:diagUpper}
\norm{\operatorname{Diag}(\mathbf{M})}_2 \geq |d^*|.
\end{equation} 
(\ref{eq:diagLower}) and~(\ref{eq:diagUpper}) imply~(\ref{eq:diagNormSpec}). From the definition of the spectral norm, we have
{\allowdisplaybreaks
\begin{equation}\label{eq:diadNormUpper}
\norm{\mathbf{M}}_2 = \max_{\norm{\mathbf{x}}_2=1} \sqrt{\frac{\mathbf{x}^T\mathbf{M}^T\mathbf{M}\mathbf{x}}{\norm{\mathbf{x}}_2^2}} = \max_{\norm{\mathbf{x}}_2=1} \frac{\norm{\mathbf{M}\mathbf{x}}_2}{\norm{\mathbf{x}}_2} =  \max_{\mathbf{x},\mathbf{y} \neq \bm{0}} \frac{|\mathbf{x}^T\mathbf{M}\mathbf{y}|}{\norm{\mathbf{x}}_2\norm{\mathbf{y}}_2} \geq |\mathbf{e}_j^T\mathbf{M}\mathbf{e}_i| = |\mathbf{M}_{ij}|.
\end{equation}} 
Restricting~(\ref{eq:diadNormUpper}) in the diagonal elements of $\mathbf{M}$ gives 
\begin{equation}
\norm{\mathbf{M}}_2 \geq \max_{i} |\operatorname{Diag}(\mathbf{M})_{ii}|,
\end{equation} 
which using~(\ref{eq:diagNormSpec}) leads to
\begin{equation} \label{eq:specIneq}
\norm{\mathbf{M}}_2 \geq \norm{\operatorname{Diag}(\mathbf{M})}_2,
\end{equation} which is~(\ref{eq:specIneq1}). This inequality becomes an equality when $\mathbf{M}$ is a diagonal matrix. Therefore,~(\ref{eq:specIneq1}) provides the tightest possible bound in this case.
\end{proof}
\label{lemma:diagSpec}
\end{restatable}

\subsection{Lemmata~\ref{lemma:diagHessian} and~\ref{lemma:lemma1Nesterov2006} }\label{supplement:diagHessianANDlemma1Nesterov2006}

Given the Lipschitz continuity assumption of $\nabla^2 f(\mathbf{x})$, Lemma~\ref{lemma:diagHessian} introduces the Lipschitz continuity of $\Diag (\nabla^2 f(\mathbf{x}))$. Lemma~\ref{lemma:diagHessian} is used in Lemmata~\ref{lemma:lemma1Nesterov2006} and~\ref{lemma:localOptimalityCriterion}. Lemma~\ref{lemma:lemma1Nesterov2006} is an adaptation of \citet[Lemma 1]{nesterov2006} tailored to fit the context of this analysis. Lemma~\ref{lemma:lemma1Nesterov2006} is used in Lemmata~\ref{lemma:lemma2Nesterov2006},~\ref{lemma:lemma3Nesterov2006}, and~\ref{lemma:lemma4Nesterov2006}.

\begin{restatable}{lem}{diagHessian}
If $\nabla^2 f(\mathbf{x})$ is Lipschitz continuous in $\pazocal{F}$,  
$\Diag (\nabla^2 f(\mathbf{x}))$ is also Lipschitz continuous, i.e.,
\begin{equation}\label{eq:diagHessian}
\norm{\operatorname{Diag} (\nabla^2 f(\mathbf{x})) - \operatorname{Diag} (\nabla^2 f(\mathbf{y}))}_2 \leq L_H \norm{\mathbf{x} - \mathbf{y}}_2.
\end{equation}
\begin{proof}
The proof of the lemma is easily obtained by combining the Lipschitz continuity of the Hessian matrix in Assumption~\ref{assum:continuityAssum} (see also Remark~\ref{rmrk:continuityAssum1}) with Lemma~\ref{lemma:diagSpec} in Appendix~\ref{supplement:diagHessian}. In Lemma~\ref{lemma:diagSpec}, we use $\mathbf{M} = \nabla^2 f(\mathbf{x}) - \nabla^2 f(\mathbf{y})$.
\end{proof}
\label{lemma:diagHessian}
\end{restatable} 

\begin{restatable}{lem}{lemma1Nesterov2006} 
For any $\mathbf{x}$ and $\mathbf{y}$ in $\pazocal{F}$, we have
\begin{equation}\label{eq:lemma1Nesterov20061}
    \norm{\nabla f(\mathbf{y}) - \nabla f(\mathbf{x}) - \Diag(\nabla^2 f(\mathbf{x})) (\mathbf{y}-\mathbf{x})}_2  \leq \frac{L_H}{2} \norm{\mathbf{y}-\mathbf{x}}_2
\end{equation} and
\begin{equation}\label{eq:lemma1Nesterov20062}
    \Bigl|f(\mathbf{y}) - f(\mathbf{x}) - \nabla f(\mathbf{x})^T (\mathbf{y}-\mathbf{x}) -  \frac{1}{2} (\mathbf{y} - \mathbf{x})^T\Diag (\nabla^2 f(\mathbf{x}))(\mathbf{y} - \mathbf{x}) \Bigr| \leq \frac{L_H}{6} \norm{\mathbf{y} - \mathbf{x}}_2^3. 
\end{equation}
\begin{proof}
    \citet[Lemma 1]{nesterov2006} does not make any assumption on the structure of $\nabla^2 f(\mathbf{x})$. The only assumption to derive \citet[Lemma 1]{nesterov2006} is the Lipschitz continuity of $\nabla^2 f(\mathbf{x})$. Thus, given Lemma~\ref{lemma:diagHessian} and following the proof guidelines in~\citet[Lemma 1]{nesterov2006},~(\ref{eq:lemma1Nesterov20061}) and~(\ref{eq:lemma1Nesterov20062}) are easily derived, which concludes the proof.
\end{proof}
\label{lemma:lemma1Nesterov2006}
\end{restatable}

\subsection{Details for Algorithm~\ref{alg:newton}}\label{supplement:algDetails}

Here, Algorithm~\ref{alg:newton} is discussed when $\mathbf{B}_k$ and $\mathbf{g}_k$ are utilized instead of $\nabla f(\mathbf{x}_k)$ and $\nabla^2 f(\mathbf{x}_k)$, respectively. This is done because $\mathbf{B}_k$ and $\mathbf{g}_k$ are directly involved in the application of Algorithm~\ref{alg:mainAlg}, which operates on data batches. In this manner, $\mathbf{s}(\nu, r)$ and $\phi(\nu, r)$ are now defined with respect to $\mathbf{B}_k$ and $\mathbf{g}_k$, rather than $\Diag(\nabla^2 f(\mathbf{x}_k))$ and $\nabla f(\mathbf{x}_k)$, respectively.

Lemma~\ref{lemma:newtonphi} provides some useful properties of $\phi(\nu, r)$ exploited in line~\ref{lineAlgNewton:newtonUpdate} of Algorithm~\ref{alg:newton}. Lemma~\ref{lemma:newtonphi} is used in Lemma~\ref{lemma:newtonconv} which shows that for some $r \in \pazocal{D}_\nu$, Newton-Raphson updates in line~\ref{lineAlgNewton:newtonUpdate} of Algorithm~\ref{alg:newton}  converge to the roots of $\phi (\nu, r)=0$ w.r.t. $\nu > 0$. 
However, the Newton-Raphson method may diverge on its own, and appropriate safeguards are necessary to prevent this. These safeguards are adopted from~\citep[Algorithm 7.3.6]{conn2000} in Algorithm~\ref{alg:newton}. Lemma~\ref{lemma:terminationNewtonnu} stems from~\citet[Lemma 7.3.5]{conn2000} adapted to the analysis here. Lemma~\ref{lemma:terminationNewtonnu} provides the termination rule used in Algorithm~\ref{alg:newton}.

\begin{restatable}{lem}{newtonphi} 
Let
\begin{equation}\label{eq:Hxnur}
\tilde{\mathbf{H}}_k (\nu, r) \stackrel{\Delta}{=} \mathbf{B}_k + \frac{\nu \: r}{2} \: \mathbf{I}.
\end{equation} 
Suppose $\mathbf{g}_k \neq \bm{0}$ and $\nu \: r > \max \bigl\{0, \allowbreak  -2\:\lambda_d \bigl(\mathbf{B}_k \bigr) \bigr\}$ for some $\nu>0$ and $r > 0$. Then, the function $\phi (\nu, r)$ is strictly increasing and concave w.r.t. $\nu$ and fixed $r$. The first- and second-order partial derivatives of $\phi(\nu, r)$ w.r.t. $\nu$ are
\begin{equation}\label{eq:firstpartialphi2nuOrg}
\partial_{\nu} \phi (\nu, r) = -\frac{\partial_{\nu} \mathbf{s}(\nu, r)^T \: \mathbf{s}(\nu, r)}{\norm{\mathbf{s}(\nu, r)}_2^3} > 0
\end{equation} 
and
\begin{equation}\label{eq:secondpartialnuphi2Org}
\partial_{\nu}^2 \phi (\nu, r) = 3 \Bigg\{\frac{\bigl(\partial_{\nu} \mathbf{s}(\nu, r)^T\: \mathbf{s}(\nu, r)\bigr)^2}{\norm{\mathbf{s}(\nu, r)}_2^5} -  \frac{\norm{ \partial_{\nu} \mathbf{s}(\nu, r)}_2^2 \norm{\mathbf{s}(\nu, r)}_2^2}{\norm{\mathbf{s}(\nu, r)}_2^5}  \Bigg\} \leq 0,
\end{equation} 
respectively, with
\begin{equation}\label{eq:firstpartialnusOrg}
\partial_{\nu} \mathbf{s}(\nu, r) = -\frac{r}{2}\: \tilde{\mathbf{H}}^{-1}_k (\nu, r)  \: \mathbf{s}(\nu, r).
\end{equation}
\begin{proof} 
\noindent
Following similar lines to~\citet[Lemma 7.3.1]{conn2000}, the first-order partial derivative of $\phi(\nu, r)$  (\ref{eq:systemnur2}) w.r.t. $\nu$ is
\allowdisplaybreaks
\begin{multline}\label{eq:firstpartialphiwrtnu}
\partial_{\nu} \phi (\nu, r) =  \partial_{\nu} \left(\mathbf{s}(\nu, r)^T \mathbf{s}(\nu, r)\right)^{-\frac{1}{2}} - \cancelto{0}{\partial_{\nu} \frac{1}{\sqrt[3]{\xi}}} 
\\ =  -\frac{1}{2}\norm{\mathbf{s}(\nu, r)}_2^{-3} \sum_{\ell=1}^{d} 2\: \partial_{\nu} (\mathbf{s}(\nu, r))_\ell \: (\mathbf{s}(\nu, r))_\ell \stepcounter{equation}\tag{\theequation}  = -\frac{\partial_{\nu} \mathbf{s}(\nu, r)^T \: \mathbf{s}(\nu, r)}{\norm{\mathbf{s}(\nu, r)}_2^3},
\end{multline}
which is~(\ref{eq:firstpartialphi2nuOrg}). 
The second-order partial derivative of $\phi(\nu, r)$ w.r.t. $\nu$ reads 
{\allowdisplaybreaks
\begin{multline} \label{eq:secondpartialnuphi}
\partial_{\nu}^2 \phi(\nu, r) = - \partial_{\nu} \Biggl\{\left(\mathbf{s}(\nu, r)^T \: \mathbf{s}(\nu, r)\right)^{-\frac{3}{2}}  \left(\partial_{\nu} \mathbf{s}(\nu, r)^T \: \mathbf{s}(\nu, r)\right) \Biggr\} \\ = \Biggl\{3\frac{\bigl(\partial_{\nu} \mathbf{s}(\nu, r)^T \: \mathbf{s}(\nu, r)\bigr)^2}{\norm{\mathbf{s}(\nu, r)}_2^5} - \frac{\partial_{\nu}^2 \mathbf{s}(\nu, r)^T\:\mathbf{s}(\nu, r) + \norm{\partial_{\nu}\mathbf{s}(\nu, r)}_2^2 }{\norm{\mathbf{s}(\nu, r)}_2^3} \Biggr\}.
\end{multline}}
The first-order partial derivative of
\begin{equation}\label{eq:sx}
\mathbf{s} (\nu, r) \stackrel{(\ref{eq:solutionh})}{=} - \tilde{\mathbf{H}}^{-1}_k (\nu, r)  \: \mathbf{g}_k
\end{equation} 
w.r.t. $\nu$ is 
\begin{equation} \label{eq:firstpartialnus}
\partial_{\nu} \mathbf{s} (\nu, r) = \frac{r}{2}\: \tilde{\mathbf{H}}^{-2}_k (\nu, r) \:\mathbf{g}_k  \stackrel{(\ref{eq:sx})}{=} -\frac{r}{2}\: \tilde{\mathbf{H}}^{-1}_k (\nu, r)  \:\mathbf{s} (\nu, r),
\end{equation} 
which is~(\ref{eq:firstpartialnusOrg}). From~(\ref{eq:firstpartialnus}) and the assumptions about $\nu$ and $r$, i.e., $ \nu \: r > \max \{0 \allowbreak,  -2\:\lambda_d \bigl(\mathbf{B}_k\bigr) \}$ with $\nu>0$ and $r >0$, we obtain
\begin{equation}\label{eq:firstpartialnusTs}
\partial_{\nu} \mathbf{s} (\nu, r)^T \: \mathbf{s} (\nu, r) =  - \frac{r}{2}~ \mathbf{s} (\nu, r)^T \: \tilde{\mathbf{H}}^{-1}_k (\nu, r)  \: \mathbf{s} (\nu, r) < 0.
\end{equation} 
Using~(\ref{eq:firstpartialnusTs}) in~(\ref{eq:firstpartialphiwrtnu}), we infer that $\phi(\nu, r)$ is strictly increasing w.r.t. $\nu$. The second derivative of~(\ref{eq:sx}) w.r.t. $\nu$ is given by
\begin{equation} \label{eq:secondpartialnus}
\partial_{\nu}^2 \mathbf{s} (\nu, r) = \frac{r^2}{2}~ \tilde{\mathbf{H}}^{-2}_k (\nu, r)  \: \mathbf{s} (\nu, r).
\end{equation} 
From~(\ref{eq:sx}) and (\ref{eq:secondpartialnus}) we get
\begin{equation}\label{eq:secondpartialnusTs}
\partial_{\nu}^2 \mathbf{s} (\nu, r)^T \: \mathbf{s} (\nu, r) = 2~ \norm{\partial_{\nu} \mathbf{s} (\nu, r)}_2^2.
\end{equation} 
The substitution of~(\ref{eq:secondpartialnusTs}) in~(\ref{eq:secondpartialnuphi}) yields
\begin{equation}\label{eq:secondpartialnuphi1}
\partial_{\nu}^2 \phi(\nu, r) =  3 \Bigg\{\frac{\bigl(\partial_{\nu} \mathbf{s}(\nu, r)^T \: \mathbf{s}(\nu, r)\bigr)^2}{\norm{\mathbf{s}(\nu, r)}_2^5} -  
\frac{\norm{ \partial_{\nu} \mathbf{s}(\nu, r)}_2^2 \norm{\mathbf{s}(\nu, r)}_2^2}{\norm{\mathbf{s}(\nu, r)}_2^5}\Bigg\},
\end{equation} which is (\ref{eq:secondpartialnuphi2Org}). The concavity of $\phi(\nu, r)$, w.r.t. $\nu$, i.e., $\partial_{\nu}^2 \phi(\nu, r)\leq 0$, follows by applying the Cauchy-Schwartz inequality, i.e.,$\bigl(\partial_{\nu} \mathbf{s}(\nu, r)^T \: \mathbf{s}(\nu, r)\bigr)^2 \leq \norm{\partial_{\nu} \mathbf{s}(\nu, r)}_2^2 \norm{\mathbf{s}(\nu, r)}_2^2$,
in~(\ref{eq:secondpartialnuphi1}), 
which completes the proof. 
\end{proof}
\label{lemma:newtonphi}
\end{restatable} 

\begin{restatable}{lem}{newtonconv} 
Let $\phi (\nu,r)$ satisfy Lemma~\ref{lemma:newtonphi}. Suppose that for some $\nu>0$ and $r> 0$ we have $\nu \: r > \max \left\{0, -2\:\lambda_d \bigl(\mathbf{B}_k\bigr) \right\}$ and $\phi (\nu, r) < 0$. Then for a fixed $r$, the Newton iterates
\begin{equation}\label{eq:newtoniternur}
\nu^+ \gets \nu - \frac{\phi(\nu, r)}{\partial_{\nu} \phi(\nu, r)}, 
\end{equation} 
will still satisfy $\phi (\nu^+, r) < 0$ and convergence monotonically toward the root $\nu^*$ of $\phi (\nu, r) =0$ w.r.t. $\nu$. The convergence of the Newton iterations w.r.t. $\nu$ is at least linear and ultimately quadratic.
\begin{proof} 
Following similar lines to~\citet[Lemma 7.3.2]{conn2000}, we study the convergence of the Newton iterations in~(\ref{eq:newtoniternur}) w.r.t. $\nu$ when $r$ is fixed. Suppose that $\phi (\nu, r)$ satisfies Lemma~\ref{lemma:newtonphi}, which implies $\partial_{\nu} \phi (\nu, r) > 0$. Then, from the Newton iteration w.r.t. $\nu$ in~(\ref{eq:newtoniternur}), we have
\begin{equation}\label{eq:updatenulinearized}
\phi(\nu, r) + (\nu^{+} - \nu) \:\partial_{\nu} \phi(\nu, r) = 0.
\end{equation} 
According to Lemma~\ref{lemma:newtonphi}, $\phi(\nu, r)$ is concave, i.e., $\partial_{\nu}^2 \phi(\nu,r) \leq 0$. Combining the concavity of $\phi(\nu, r)$ with~(\ref{eq:updatenulinearized}) we get 
\begin{equation*}
\phi (\nu^{+}, r) <  \phi(\nu, r) + (\nu^+ -\nu)\:\partial_{\nu} \phi(\nu, r)  = 0
\end{equation*} 
which proves that  $ \phi(\nu,r) < 0$ is inherited by all Newton iterations w.r.t. $\nu$. Let $(\nu^*, r)$ be the root of $\phi(\nu, r)$. In addition, let $(\nu^I, r)$ be an intermediate point between points $(\nu, r)$ and $(\nu^*, r)$, i.e., $(\nu^I, r) = \alpha~(\nu, r) + (1-\alpha)\:(\nu^*, r)$ with $\alpha \in (0, 1)$. Then, the Taylor expansion about $\nu^*, r$ reads as 
\begin{equation}\label{eq:taylorg}
\phi(\nu, r) = \cancelto{0}{\phi(\nu^*, r)} + \partial_{\nu} \phi(\nu^I, r) (\nu - \nu^*) +  \frac{1}{2}\:\partial_{\nu}^2 \phi(\nu^I, r) (\nu - \nu^*)^2 + \pazocal{O}\left((\nu- \nu^*)^3\right).
\end{equation} 
We assume that $(\nu, r)$ is close to $(\nu^*, r)$. This proximity implies that the last term in equation (\ref{eq:taylorg}) becomes even closer to zero, and consequently, it is omitted from the subsequent analysis. The assumption that $(\nu, r)$ is in proximity to $(\nu^*, r)$ ensures the convergence of the Newton method~\citep{bertsekas2017}. This proximity is achieved by utilizing the safeguarded Newton Algorithm~\ref{alg:newton}. A comprehensive analysis of the safeguarded Newton methodology can be found  in~\citep{conn2000}.

Subtracting $\nu^*$ from both sides of~(\ref{eq:newtoniternur}), i.e., 
\begin{equation}\label{eq:nusubtrack}
\nu^{+} - \nu^* = (\nu - \nu^*) - \frac{\phi (\nu, r)}{\partial_{\nu} \phi (\nu, r)}
\end{equation}
and substituting~(\ref{eq:taylorg}) in~(\ref{eq:nusubtrack}), we arrive at
\begin{equation}\label{eq:taylorg1}
\nu^+ - \nu^* = \left(1-\frac{\partial_{\nu} \phi (\nu^I, r)}{\partial_{\nu} \phi (\nu, r)}\right) (\nu - \nu^*) - \frac{1}{2} \frac{\partial_{\nu}^2 \phi (\nu^I, r)}{\partial_{\nu} \phi (\nu, r)}(\nu - \nu^*)^2.
\end{equation} 
We examine the following cases:
\begin{enumerate}
\item If $\Bigl| 1 \allowbreak - \frac{ \partial_{\nu} \phi (\nu^I, r)}{\partial_{\nu} \phi (\nu, r)} \Bigr| > 1$,~(\ref{eq:taylorg1}) diverges.
\item If $\Bigl| 1 \allowbreak-\frac{\partial_{\nu} \phi (\nu^I, r)}{\partial_{\nu} \phi (\nu, r)} \Bigr| < 1$, we have at least linear convergence in~(\ref{eq:taylorg1}).
\item If $\Bigl| 1-\frac{\partial_{\nu} \phi (\nu^I, r)}{\partial_{\nu} \phi (\nu, r)} \Bigr| = 0$, we have quadratic convergence as the linear term vanishes in~(\ref{eq:taylorg1}). 
\end{enumerate} 
From the concavity of $\phi(\nu, r)$ w.r.t. $\nu$, we have that $\partial_{\nu} \phi(\nu, r)$ is decreasing, which implies that $\Bigl| 1-\frac{\partial_{\nu}\phi (\nu^I, r)}{\partial_{\nu} \phi (\nu, r)} \Bigr| < 1$. Thus, the Newton iterations w.r.t. $\nu$ in ~(\ref{eq:newtoniternur}) converge at least linearly and ultimately quadratically, which completes the proof. 
\end{proof}
\label{lemma:newtonconv}
\end{restatable}

\begin{restatable}{rmrk}{safequardr} 
Lemma~\ref{lemma:newtonphi} implies that $\partial_{\nu} \phi(\nu, r) > 0$. Suppose that for some $\nu$ and $r$ we have $\phi(\nu, r) < 0$, i.e., Lemma~\ref{lemma:newtonconv} holds. Then, by~(\ref{eq:newtoniternur}) we have that $\nu^+ > \nu$. Given $\nu^+ > \nu$ and the initial values of $r$ and $\nu$ in lines~\ref{lineAlgNewton:initr},~\ref{lineAlgNewton:initnu1}, and~\ref{lineAlgNewton:initnu2} of Algorithm~\ref{alg:newton}, the optimal $r$ and $\nu$ always satisfy $\mathbf{B}_k + \frac{\nu \: r }{2}\: \mathbf{I}\succ \bm{0}$.
\label{rmrk:safequardr}
\end{restatable}

\begin{restatable}{lem}{terminationNewtonnu} 
For some $\nu>0$ and $r>0$, suppose $\nu \: r > \max \{0, -2\:\lambda_d (\mathbf{B}_k) \}$ and
\begin{equation}
|\norm{\mathbf{s} (\nu, r)}_2 - \xi^{\frac{1}{3}}| \leq \kappa_{\text{easy}}~\xi^{\frac{1}{3}}
\end{equation} 
with $\kappa_{\text{easy}} \in (0, 1)$. Then 
\begin{equation}
\mathfrak{\hat{m}} (\mathbf{s} (\nu, r)) \leq (1 - \kappa_{\text{easy}})^2 ~\mathfrak{\hat{m}} (\mathbf{s}_{k}^*),
\end{equation} 
where $\mathbf{s}_{k}^*$ is the minimizer of~(\ref{prob:cubicOursEquiv}) that satisfies Lemma~\ref{lemma:globalSol}, and 
\begin{equation}\label{eq:basicCubicProblemApproxHat}
    \mathfrak{\hat{m}}(\mathbf{s}) \stackrel{\Delta}{=}  F(\mathbf{x}_k) +  \mathbf{g}_k^T \mathbf{s} + \frac{1}{2}\: \mathbf{s}^T \: \mathbf{B}_k \: \mathbf{s}.
\end{equation}
\label{lemma:terminationNewtonnu}
\begin{proof} A similar proof to that in~\citet[Lemma 7.3.5]{conn2000} can be devised for the constraint $\norm{\mathbf{s}}_2^3 \leq \xi$.
\end{proof}
\end{restatable}

\subsection{Preliminaries for Theorem~\ref{thm:diagmodelmin}}
\label{supplement:diagmodelminRelatedLemmata}

Here,~\citet[Lemma 1]{nesterov2006},~\citet[Lemma 2]{nesterov2006},~\citet[Lemma 3]{nesterov2006},~\citet[Lemma 4]{nesterov2006}, and~\citet[Lemma 5]{nesterov2006}, correspond to Lemmata~\ref{lemma:lemma1Nesterov2006},~\ref{lemma:lemma2Nesterov2006},~\ref{lemma:lemma3Nesterov2006},~\ref{lemma:lemma4Nesterov2006} and~\ref{lemma:lemma5Nesterov2006}, respectively, proven for $\Diag (\nabla^2 f(\mathbf{x}))$ in place of $\nabla^2 f(\mathbf{x})$. In addition, Lemma~\ref{lemma:localOptimalityCriterion} is proven providing details not included in~\citep[Section 2]{nesterov2006}.

Let the level set
\begin{equation}
    \mathfrak{L} (c) = \{\mathbf{x} \in \mathbb{R}^d \colon  f(\mathbf{x}) \leq c\}, 
\end{equation} and assume that $\pazocal{F} \subseteq \mathfrak{L} (f(\mathbf{x}_0))$. Let
\begin{equation}\label{eq:auxhatfDef} 
    \hat{f} (\mathbf{x}, \mathbf{y}) = f(\mathbf{x}) + \nabla f(\mathbf{x})^{T}(\mathbf{y} - \mathbf{x})  + \frac{1}{2} (\mathbf{y} - \mathbf{x})^T\Diag (\nabla^2 f(\mathbf{x}))(\mathbf{y} - \mathbf{x}) + \frac{M}{6} \norm{\mathbf{y} - \mathbf{x}}_2^3,
\end{equation} 
\begin{equation}\label{eq:TMx}
    T_M (\mathbf{x}) = \argmin_{\mathbf{y}} \hat{f} (\mathbf{x}, \mathbf{y}),
\end{equation} 
and 
\begin{equation}\label{eq:barfmin}
    \bar{f}_M (\mathbf{x}) = \min_{\mathbf{y}} \hat{f} (\mathbf{x}, \mathbf{y}).
\end{equation} 
That is, 
\begin{equation}\label{eq:auxhatTM}
    \bar{f}_M (\mathbf{x}) = \hat{f} (\mathbf{x}, T_M (\mathbf{x})).
\end{equation} 
To compute $T_M(\mathbf{x})$ in~(\ref{eq:TMx}), we solve $\nabla_{\mathbf{y}} \hat{f} (\mathbf{x}, \mathbf{y}) = \bm{0}$, i.e.,
\begin{equation}\label{eq:nablaybarf}
    \nabla f(\mathbf{x}) + \Diag (\nabla^2 f(\mathbf{x})) (\mathbf{y}-\mathbf{x}) + \frac{M}{2} \norm{\mathbf{y}-\mathbf{x}}_2 (\mathbf{y}-\mathbf{x})
     = \bm{0}.
\end{equation} 
Let $r_M(\mathbf{x}) = \norm{\mathbf{x} - T_M(\mathbf{x})}_2$. For $\mathbf{y} = T_M (\mathbf{x})$ in~(\ref{eq:nablaybarf}), if we multiply both sides of~(\ref{eq:nablaybarf}) by $T_M(\mathbf{x})-\mathbf{x}$ we arrive at 
\begin{equation}\label{eq:nablaybarf1}
    \nabla f(\mathbf{x})^T (T_M(\mathbf{x})-\mathbf{x}) +  (T_M(\mathbf{x})-\mathbf{x})^T\Diag(\nabla^2 f(\mathbf{x})) (T_M(\mathbf{x})-\mathbf{x}) +  \frac{M}{2}\norm{(T_M(\mathbf{x})-\mathbf{x})}_2^3 = 0.
\end{equation}

\begin{restatable}{lem}{lemma2Nesterov2006} 
For any $\mathbf{x} \in \pazocal{F}$ with $f(\mathbf{x}) \leq f(\mathbf{x}_0)$, 
we have 
\begin{equation}\label{eq:lemma2Nesterov}
     \nabla f(\mathbf{x})^T (\mathbf{x}-T_M(\mathbf{x})) \geq 0.
\end{equation} 
Moreover, if $M \geq \frac{2}{3}L_H$ and $\mathbf{x} \in \mathrm{int}\:\pazocal{F}$, then
\begin{equation}\label{eq:TML}
   T_M (\mathbf{x}) \in \mathfrak{L} (f(\mathbf{x})).
\end{equation}
\begin{proof}
Using Corollary~\ref{cor:globalSolDiag}, we obtain
\begin{equation} \label{eq:globalSolDiag1}
\Diag(\nabla^2 f(\mathbf{x})) + \frac{M}{2} \norm{\mathbf{x} - \mathbf{y}}_2~ \mathbf{I} \succeq 0,
\end{equation} 
which when pre-multiplied by  $(T_M(\mathbf{x}) - \mathbf{x})^T$ and post-multiplied by $(T_M(\mathbf{x}) - \mathbf{x})$ yields
\begin{equation}\label{eq:globalSolDiag2}
    (T_M (\mathbf{x}) - \mathbf{x})^T\Diag(\nabla^2 f(\mathbf{x}))(T_M (\mathbf{x}) -  \mathbf{x}) +  \frac{M}{2} \norm{T_M(\mathbf{x}) - \mathbf{x}}_2^3 \geq 0.
\end{equation} 
Then, combining~(\ref{eq:nablaybarf1}) with~(\ref{eq:globalSolDiag2}) we arrive at~(\ref{eq:lemma2Nesterov}). 

\noindent \textbf{Assumption and Contradiction.} We now show that $T_M(\mathbf{x}) \in \mathfrak{L}(f(\mathbf{x}))$. Following the approach of~\citet[Lemma 2]{nesterov2006}, we proceed by contradiction by assuming $T_M(\mathbf{x}) \notin \mathfrak{L}(f(\mathbf{x}))$ for $M \geq \frac{2}{3}L_H$. We then show that this assumption cannot hold, leading to a contradiction. Thus, we conclude that $T_M(\mathbf{x}) \in \mathfrak{L}(f(\mathbf{x}))$.

By assuming $T_M (\mathbf{x}) \notin \mathfrak{L} (f(\mathbf{x}))$, there exist
\begin{equation}\label{eq:ya}
    \mathbf{y}_{\alpha} = (1-\alpha)\: \mathbf{x} + \alpha \: T_M(\mathbf{x}),
\end{equation} 
with $\alpha \in [0, 1]$, such that 
\begin{equation}\label{eq:contradictionIneq}
    f(\mathbf{y}_{\alpha}) > f(\mathbf{x}).
\end{equation} Using the upper bound of~(\ref{eq:lemma1Nesterov20062}) with $\mathbf{y} = \mathbf{y}_{\alpha}$ we obtain
\allowdisplaybreaks
\begin{multline}
    f(\mathbf{y}_{\alpha}) \leq  f(\mathbf{x}) + \nabla f(\mathbf{x})^T(\mathbf{y}_{\alpha}-\mathbf{x}) +  \frac{1}{2} (\mathbf{y}_{\alpha}-\mathbf{x})^T\Diag(\nabla^2 f(\mathbf{x}))(\mathbf{y}_{\alpha}-\mathbf{x}) +  \frac{L_H}{6}\norm{\mathbf{y}_{\alpha}-\mathbf{x}}_2^3  \stackrel{(\ref{eq:ya})}{=} \\
    f(\mathbf{x}) + \alpha \nabla f(\mathbf{x})^T (T_M(\mathbf{x})-\mathbf{x}) +  \frac{\alpha^2}{2} (T_M(\mathbf{x})-\mathbf{x})^T\Diag(\nabla^2 f(\mathbf{x}))(T_M(\mathbf{x})-\mathbf{x}) + \frac{\alpha^3 L_H}{6}  \norm{T_M(\mathbf{x}) - \mathbf{x}}_2^3,
\end{multline} 
which using~(\ref{eq:nablaybarf1}) implies
\begin{equation}\label{eq:faf}
   f(\mathbf{y}_{\alpha}) - f(\mathbf{x}) \leq  
    - \underbrace{\left(\alpha - \frac{\alpha^2}{2}\right) \nabla f(\mathbf{x})^T (\mathbf{x} - T_M(\mathbf{x}))}_{\geq 0 \text{ as } \alpha \in [0, 1] \text{ and}~(\ref{eq:ffhatTMIneq1}) }  - \frac{\alpha^2}{2}\left(\frac{M}{2}-\frac{\alpha L_H}{3}\right)\norm{T_M(\mathbf{x}) - \mathbf{x}}_2^3.
\end{equation} 
For $\alpha \leq 1$, we have
\begin{equation}\label{eq:lowerboundM}
    \frac{M}{2} - \frac{\alpha L_H}{3} \geq \frac{M}{2} - \frac{L_H}{3}, 
\end{equation} which by using our assumption $M \geq \frac{2}{3} L_H$ it is implied that $f(\mathbf{y}_{\alpha}) \leq f(\mathbf{x})$ in~(\ref{eq:faf}). However, $f(\mathbf{y}_{\alpha}) \leq f(\mathbf{x})$ contradicts~(\ref{eq:contradictionIneq}) for $M\geq \frac{2}{3}L_H$, which in turn leads to~(\ref{eq:TML}), and the proof is complete.
\end{proof}
\label{lemma:lemma2Nesterov2006}
\end{restatable}

\begin{restatable}{lem}{lemma3Nesterov2006} 
If $T_M (\mathbf{x}) \in \pazocal{F}$ then
\begin{equation}\label{eq:lemma3Nesterov2006}
    \norm{\nabla f(T_M(\mathbf{x}))}_2 \leq \frac{L_H + M}{2} r_M^2 (\mathbf{x}).
\end{equation}
\begin{proof}
Setting $\mathbf{y} = T_M (\mathbf{x})$ in~(\ref{eq:lemma1Nesterov20061}) and~(\ref{eq:nablaybarf}) we get 
\allowdisplaybreaks
\begin{equation}\label{eq:lemma1Nesterov2006111}
    \norm{\nabla f(T_M (\mathbf{x})) - \nabla f(\mathbf{x}) - \Diag(\nabla^2 f(\mathbf{x})) (T_M (\mathbf{x})-\mathbf{x})}_2   \leq  \frac{L_H}{2} \norm{T_M (\mathbf{x})-\mathbf{x}}_2
\end{equation} 
and 
\allowdisplaybreaks
\begin{equation}\label{eq:lemma1Nesterov200611}
    \norm{\nabla f(\mathbf{x}) + \Diag (\nabla^2 f(\mathbf{x})) (T_M (\mathbf{x})-\mathbf{x})}_2 =  \frac{M}{2} \norm{T_M (\mathbf{x})-\mathbf{x}}_2^2 = \frac{M}{2} r_M^2(\mathbf{x}),
\end{equation} 
respectively. Then, combining~(\ref{eq:lemma1Nesterov2006111}) with~(\ref{eq:lemma1Nesterov200611}) and the definition of the reverse triangle inequality, 
we arrive at~(\ref{eq:lemma3Nesterov2006}) and the proof is complete.
\end{proof}
\label{lemma:lemma3Nesterov2006}
\end{restatable}

\begin{restatable}{lem}{lemma4Nesterov2006} 
For any $\mathbf{x} \in \pazocal{F}$ we have 
\begin{equation}\label{eq:ffhatTMIneq1}
    \bar{f}_M (\mathbf{x}) \leq \min_{\mathbf{y}} \left(\frac{M+L_H}{6}\norm{\mathbf{y} - \mathbf{x}}_2^3 + f(\mathbf{y})\right)
\end{equation} and
\begin{equation}\label{eq:lemma2Nesterov1}
    f(\mathbf{x}) -  \bar{f}_M (\mathbf{x}) \geq \frac{M}{12} r_M^3(\mathbf{x}).
\end{equation} Moreover, if $M \geq L_H$, then $T_M(\mathbf{x}) \in \pazocal{F}$ and
\begin{equation}\label{eq:lemma2Nesterov2}
    f(T_M(\mathbf{x})) \leq \bar{f}_M (\mathbf{x}).
\end{equation}
\begin{proof}
In the following, we have used the relaxed condition $M \geq L_H$ as $M \geq L_H > \frac{2}{3}L_H$. Note that the relaxed condition $M \geq L_H$ also satisfies Lemma~\ref{lemma:lemma2Nesterov2006}. From the lower and upper bound of~(\ref{eq:lemma1Nesterov20062}) we have
\begin{equation}
    \hat{f} (\mathbf{x}, \mathbf{y}) \leq \frac{M+L_H}{6}\norm{\mathbf{y} - \mathbf{x}}_2^3 + f(\mathbf{y})
\end{equation} 
and
\begin{equation}\label{eq:ffhatineq}
    f(\mathbf{y}) \leq \hat{f} (\mathbf{x}, \mathbf{y}),
\end{equation} 
respectively. Thus, we have
\begin{equation}\label{eq:fbarfIneq}
    f(\mathbf{y}) \leq \hat{f} (\mathbf{x}, \mathbf{y}) \leq \frac{M+L_H}{6}\norm{\mathbf{y} - \mathbf{x}}_2^3 + f(\mathbf{y}).
\end{equation} 
Minimizing~(\ref{eq:fbarfIneq}) all sides w.r.t. $\mathbf{y}$ yields
\allowdisplaybreaks
\begin{equation}
    \min_{\mathbf{y}} f(\mathbf{y}) \leq \min_{\mathbf{y}} \hat{f} (\mathbf{x}, \mathbf{y}) \leq  \min_{\mathbf{y}} \left(\frac{M+L_H}{6}\norm{\mathbf{y} - \mathbf{x}}_2^3 + f(\mathbf{y})\right).
\end{equation} 
which in turn using $\mathbf{y} = T_{M} (\mathbf{x})$ yields
\begin{equation}\label{eq:ffhatTMIneq}
    f(T_{M}(\mathbf{x})) \leq \hat{f} (\mathbf{x}, T_{M} (\mathbf{x})) \leq  \min_{\mathbf{y}} \left(\frac{M+L_H}{6}\norm{\mathbf{y} - \mathbf{x}}_2^3 + f(\mathbf{y})\right).
\end{equation} Additionally, using~(\ref{eq:auxhatTM}) in~(\ref{eq:ffhatTMIneq}) we obtain~(\ref{eq:ffhatTMIneq1}). Note that~(\ref{eq:ffhatTMIneq1}) aligns with the results presented in~\citet[Lemma 4]{nesterov2006}, with the distinction that $\Diag (\nabla^2 f(\mathbf{x}))$ is used in place of $\nabla^2 f(\mathbf{x})$. From the LHS of~(\ref{eq:ffhatTMIneq}) and~(\ref{eq:auxhatfDef}) we obtain
\allowdisplaybreaks
\begin{multline}\label{eq:ffTMineq}
    f(\mathbf{x}) - f(T_M(\mathbf{x})) \geq f(\mathbf{x}) - \hat{f}(\mathbf{x}, T_M(\mathbf{x}))\\ =  - \nabla f(\mathbf{x})^T (T_M(\mathbf{x})-\mathbf{x}) - \frac{1}{2} (T_M(\mathbf{x})-\mathbf{x})^T \Diag (\nabla^2 f(\mathbf{x})) (T_M(\mathbf{x})-\mathbf{x})  - \frac{M}{6} \norm{T_M(\mathbf{x})-\mathbf{x}}_2^3.
\end{multline} In addition, from~(\ref{eq:nablaybarf1}) we have
\allowdisplaybreaks
\begin{equation}\label{eq:nablaybarf3}
   -\frac{1}{2}(T_M(\mathbf{x})-\mathbf{x})^T\Diag(\nabla^2 f(\mathbf{x})) (T_M(\mathbf{x})-\mathbf{x}) =  \frac{1}{2}\nabla f(\mathbf{x})^T (T_M(\mathbf{x})-\mathbf{x}) + \frac{M}{4}\norm{T_M(\mathbf{x})-\mathbf{x}}_2^3,
\end{equation} which combined with~(\ref{eq:ffTMineq}) and~(\ref{eq:auxhatTM}) gives
\begin{equation}
    f(\mathbf{x}) - \bar{f}_M (\mathbf{x}) \geq -\frac{1}{2} \nabla f(\mathbf{x})^T (T_M(\mathbf{x})-\mathbf{x}) + \frac{M}{12} r_M^3(\mathbf{x}), 
\end{equation} which in turn combined with~(\ref{eq:lemma2Nesterov}) yields~(\ref{eq:lemma2Nesterov1}). To conclude the proof, setting $\mathbf{y} = T_M(\mathbf{x})$ in the LHS of~(\ref{eq:fbarfIneq}) and using~(\ref{eq:auxhatTM}) we obtain~(\ref{eq:lemma2Nesterov2}).
\end{proof}
\label{lemma:lemma4Nesterov2006}
\end{restatable}

\begin{restatable}{lem}{localOptimalityCriterion} 
If $\mathbf{x} \in \pazocal{F}$ then
\allowdisplaybreaks
\begin{equation}\label{eq:localOptimalityCriterion}
    \mu_{M_i} (\mathbf{x}_{i+1}) \stackrel{\Delta}{=} \max \Biggl\{ \sqrt{\frac{2}{L_H + M_i}\norm{\nabla f(\mathbf{x}_{i+1})}_2},  -\frac{2}{2L_H + M_i} \lambda_{\text{min}} (\Diag(\nabla^2 f(\mathbf{x}_{i+1}))) \Biggr\}.
\end{equation}
\begin{proof}
From~(\ref{eq:lemma3Nesterov2006}) we have
\begin{equation}\label{eq:lemma3Nesterov2006Iter}
    \sqrt{\frac{2}{L_H + M}  \norm{\nabla f(\mathbf{x}_{i+1})}_2} \leq \norm{\mathbf{x}_{i+1}- \mathbf{x}_{i}}_2.
\end{equation} 
(\ref{eq:globalSolDiag1}) implies
\allowdisplaybreaks
\begin{multline} \label{eq:globalSolDiag11}
    \Diag(\nabla^2 f(\mathbf{x})) -  L_H \norm{\mathbf{x} - \mathbf{y}}_2 \: \mathbf{I} + \frac{M}{2} \norm{\mathbf{x} - \mathbf{y}}_2\: \mathbf{I} \succeq  -  L_H \norm{\mathbf{x} - \mathbf{y}}_2\: \mathbf{I} \Leftrightarrow \\ 
    \lambda_{\text{min}} \left(\Diag(\nabla^2 f(\mathbf{x}))\right) -  L_H \norm{\mathbf{x} - \mathbf{y}}_2   \geq  - \left(\frac{M}{2}+  L_H\right) \norm{\mathbf{x} - \mathbf{y}}_2,
\end{multline} 
which by setting $\mathbf{y} = \mathbf{x}_{t+1}$ and $\mathbf{x} = \mathbf{x}_{t}$ yields
\allowdisplaybreaks
\begin{equation}\label{eq:globalSolDiag12}
    \lambda_{\text{min}} \left(\Diag(\nabla^2 f(\mathbf{x}_{t}))\right) -  L_H \norm{\mathbf{x}_t - \mathbf{x}_{t+1}}_2^2   \geq  - \left(\frac{M}{2}+  L_H\right) \norm{\mathbf{x}_t - \mathbf{x}_{t+1}}_2.
\end{equation} 
Combining~\citet[Corrolary 1.2.3]{nesterov2018} with Lemma~\ref{lemma:diagHessian} for $\mathbf{y} = \mathbf{x}_{t+1}$ and $\mathbf{x} = \mathbf{x}_{t}$ yields
\allowdisplaybreaks
\begin{equation}\label{eq:globalSolDiag13}
    \lambda_{\text{min}} \left(\Diag(\nabla^2 f(\mathbf{x}_{t+1}))\right) \geq  \lambda_{\text{min}} \left(\Diag(\nabla^2 f(\mathbf{x}_{t}))\right) -  L_H \norm{\mathbf{x}_t - \mathbf{x}_{t+1}}_2,
\end{equation} 
which when combined with~(\ref{eq:globalSolDiag12}) gives
\begin{equation}\label{eq:globalSolDiag14}
    \lambda_{\text{min}} \left(\Diag(\nabla^2 f(\mathbf{x}_{t+1}))\right)   \geq  - \left(\frac{M}{2}+  L_H\right) \norm{\mathbf{x}_t - \mathbf{x}_{t+1}}_2,
\end{equation} 
and, in turn, implies
\allowdisplaybreaks
\begin{equation}\label{eq:globalSolDiag15}
    - \frac{2}{2 L_H + M} \lambda_{\text{min}} \left(\Diag(\nabla^2 f(\mathbf{x}_{t+1}))\right)   \leq    \norm{\mathbf{x}_t - \mathbf{x}_{t+1}}_2,
\end{equation} 
In order to obtain an $(\epsilon_g, \epsilon_H)$-stationary point, we need $\norm{\mathbf{x}_t - \mathbf{x}_{t+1}}_2 \leq \epsilon_g$ and $\norm{\mathbf{x}_t - \mathbf{x}_{t+1}}_2 \leq \epsilon_H$ in~(\ref{eq:lemma3Nesterov2006Iter}) and~(\ref{eq:globalSolDiag15}), respectively, which yields~(\ref{eq:localOptimalityCriterion}), and the proof is complete. 
\end{proof}
\label{lemma:localOptimalityCriterion}
\end{restatable}

\begin{restatable}{lem}{lemma5Nesterov2006} 
For any $\mathbf{x} \in \pazocal{F}$ we have 
\begin{equation}\label{eq:mumleqrm}
    \mu_M (T_M(\mathbf{x})) \leq r_M (\mathbf{x}).
\end{equation}
\begin{proof} 
Adapting \citet[Corollary 1.2.2]{nesterov2018} in our context yields
\begin{equation}\label{eq:mumleqrm1}
    \Diag(\nabla^2 f(T_M(\mathbf{x}))) \succeq  \Diag(\nabla^2 f(\mathbf{x})) -  r_M(\mathbf{x}) L_H \mathbf{I}.
\end{equation} 
Combining~(\ref{eq:mumleqrm1}) and~(\ref{eq:globalSolDiag1}) gives
\begin{equation}
    \Diag(\nabla^2 f(T_M(\mathbf{x}))) \succeq  -\left(\frac{1}{2}M + L_H \right) r_M(\mathbf{x})\mathbf{I},
\end{equation} 
which when combined with Lemma~\ref{lemma:localOptimalityCriterion} yields~(\ref{eq:mumleqrm}) and the proof is complete.
\end{proof} 
\label{lemma:lemma5Nesterov2006}
\end{restatable}

\subsection{Proof of Theorem~\ref{thm:diagmodelmin}}\label{supplement:diagmodelmin}
Let $(\mathbf{s}_{i+1}, \nu_{i+1})$ be the output of Algorithm~\ref{alg:newton} for $\mathbf{B}= \Diag(\nabla^2 f (\mathbf{x}_i))$, $\mathbf{g}=\nabla f (\mathbf{x}_i)$. Recall that $(\mathbf{s}_{i+1}, \nu_{i+1})$ is a minimizer of~(\ref{prob:cubicOursEquiv}) and according to Theorem~\ref{thm:probequiv} it is also a minimizer of problem~(\ref{prob:cubicNesterov2}) where $M=\nu_{i+1}$. Recall also that $\mathbf{x}_{i+1} = \mathbf{x}_{i} + \mathbf{s}_{i+1}$ and let the sequence $\{ \mathbf{x}_i \}_{i \geq 1}$ be generated by Algorithm~\ref{alg:mainAlg}.

Next, suppose that Assumption~\ref{assum:continuityAssum} holds, i.e., the objective function $f(\mathbf{x})$ is bounded from bellow, $f(\mathbf{x}) \geq f^{\text{low}}$ for all $\mathbf{x}\in \pazocal{F}$. Then, we continue with the proof of the main result in Theorem~\ref{thm:diagmodelmin}. From~(\ref{eq:lemma2Nesterov1}), we have
\begingroup
\allowdisplaybreaks
\begin{equation}\label{eq:lemma4Nesterov}
\begin{gathered}
f(\mathbf{x}_0) - \bar{f}_{M_0} (\mathbf{x}_0) \geq \frac{M_0}{12} r_{M_0}^3 (\mathbf{x}_0) \\
f(\mathbf{x}_1) - \bar{f}_{M_1} (\mathbf{x}_1) \geq \frac{M_1}{12} r_{M_1}^3 (\mathbf{x}_1) \\
  \vdots  \\
f(\mathbf{x}_{k-1}) - \bar{f}_{M_{k-1}} (\mathbf{x}_{k-1}) \geq  \frac{M_{k-1}}{12} r_{M_{k-1}}^3 (\mathbf{x}_{k-1}), 
\end{gathered}
\end{equation}
\endgroup
where $r_{M_{i}} (\mathbf{x}_i) = \norm{\mathbf{x}_i - \mathbf{x}_{i+1}}_2$ and
\allowdisplaybreaks
\begin{equation}\label{prob:cubicNesterov3}
 \bar{f}_{M_i} (\mathbf{x}) \stackrel{\Delta}{=}  \underset{\mathbf{s} \in \mathbb{R}^d}{\min}~m_{M_i}(\mathbf{s}).
\end{equation} 
Summing over~(\ref{eq:lemma4Nesterov}) we get
\begin{equation}
\sum_{i=0}^{k-1} \left(f(\mathbf{x}_i) -  \bar{f}_{M_i} (\mathbf{x}_i) \right) \geq \sum_{i=0}^{k-1} \frac{M_i}{12} r^3_{M_i} (\mathbf{x}_i).
\end{equation} 
Then applying $f(\mathbf{x}_{i+1}) \leq \bar{f}_{M_i} (\mathbf{x}_i)$ (Lemma~\ref{lemma:lemma4Nesterov2006}), we get
\allowdisplaybreaks
\begin{equation}
\sum_{i=0}^{k-1} \left(f(\mathbf{x}_i) -  f(\mathbf{x}_{i+1}) \right) \geq \sum_{i=0}^{k-1} \frac{M_i}{12} r^3_{M_i} (\mathbf{x}_i),
\end{equation} 
which, by applying the telescoping sum, yields
\begin{equation}
    f(\mathbf{x}_0) -  f(\mathbf{x}_{k}) \geq \sum_{i=0}^{k-1} \frac{M_i}{12} r^3_{M_i} (\mathbf{x}_i).
\end{equation} 
Next, using the lower bound of $f(\mathbf{x}_k)$, i.e., $f^{\text{low}}$ by Assumption~\ref{assum:continuityAssum} and Remark~\ref{rmrk:continuityAssum1}, we obtain
\begin{equation}\label{eq:sumBound}
    f(\mathbf{x}_0)- f^{\text{low}} \geq \sum_{i=0}^{k-1} \frac{M_i}{12} r^3_{M_i} (\mathbf{x}_i),
\end{equation} which implies
\allowdisplaybreaks
\begin{equation}\label{eq:sumBound1}
    f(\mathbf{x}_0)- f^{\text{low}} \geq k \frac{L_0}{12} r^3_{L_0} (\mathbf{x}_i) \Leftrightarrow  
\mu_{M_i}(\mathbf{x}_{i+1}) \leq r_{M_i} (\mathbf{x}_i) \leq 12^{1/3} \left(\frac{f(\mathbf{x}_0)- f^{\text{low}}}{k\:M_i}\right)^{1/3},
\end{equation} where Lemma~\ref{lemma:lemma5Nesterov2006} (Appendix~\ref{supplement:diagmodelminRelatedLemmata}) is applied to get $\mu_{M_i}(\mathbf{x}_{i+1}) \leq r_{M_i} (\mathbf{x}_i)$. For $ L_H = M_i$ and applying the trick $12=3\cdot 4 \Leftrightarrow 12^{1/3}=3^{1/3} \cdot (8/2)^{1/3} \Leftrightarrow 12^{1/3} = (3/2)^{1/3}\cdot 8^{1/3} \Leftrightarrow 12^{1/3} = (3/2)^{1/3} \cdot2 \Leftrightarrow 12^{1/3} = (3/2)^{1/3}\cdot 8 / 4 \leq (3/2)^{1/3} \cdot 8/3$, we arrive at  
\begin{equation}\label{eq:sumBound4}
    \mu_{L_H} (\mathbf{x}_{i+1}) \leq \frac{8}{3} \left( \frac{3}{2}\frac{f(\mathbf{x}_0) - f^{\text{low}}}{L_H k \: } \right)^{1/3}.
\end{equation} As in~\citet[Theorem 3]{nesterov2006}, here it is assumed that $\nabla^2 f (\mathbf{x}_i)$ is positive definite for some $i \geq 0$. The latter assumption implies that $\Diag(\nabla^2 f (\mathbf{x}_i))$ is also positive definite. Then for some $i \geq 0$, from~(\ref{eq:localOptimalityCriterion}), we restrict our study to
\begin{equation}
    \mu_{ L_H} (\mathbf{x}_{i+1}) = \sqrt{\frac{1}{L_H}\norm{\nabla f(\mathbf{x}_{i+1})}_2},
\end{equation} 
which combined with~(\ref{eq:sumBound4}), yields
\begin{equation}\label{eq:gradNesterovupperBound}
    \min_{0\leq i \leq k-1} \norm{\nabla f(\mathbf{x}_{i+1})}_2 \leq  L_H^{1/3} \left(\frac{8}{3}\right)^2 \left( \frac{3}{2} \frac{f(\mathbf{x}_0) - f^{\text{low}}}{k \: } \right)^{2/3},
\end{equation} 
which implies~(\ref{eq:normgradfUpperExactDiag}). The convergence rate in~(\ref{eq:normgradfUpperExactDiag}) is used to establish the local convergence rate when the approximate $\mathbf{g}_i$ and $\mathbf{B}_i$ are used instead of $\nabla f(\mathbf{x}_i)$ and $\nabla^2 f(\mathbf{x}_i)$, respectively. The latter argument is strengthened by Corollary~\ref{cor:gradienthessianSampling}, and the proof is complete.

\subsection{Vector and Matrix Bernstein Inequalities}\label{supplement:vectormatrixBernstein}
For completeness, we restate \citet[Lemma 18]{kohler2017}, incorporating corrections for minor typographical errors. Lemma~\ref{lemma:vectorBernstein} is utilized by Lemma~\ref{lemma:gradientDeviationBound}. Next, Lemma~\ref{lemma:basedDefOnTwoIndRandVars} is introduced and utilized by Lemma~\ref{lemma:matrixBernsteinTwoSources}, which in turn is utilized by Lemma~\ref{lemma:hessianDeviationBound}. Lemma~\ref{lem:5.4.10.alt} is also used by Lemma~\ref{lemma:matrixBernsteinTwoSources}.
\begin{restatable}[Vector Bernstein Inequality]{lem}{vectorBernstein} 
Let $\mathbf{x}_1, \mathbf{x}_2, \dots \mathbf{x}_n$ be independent random vectors of common dimension $d$ and assume that each one is centered, uniformly bounded, and also the variance is bounded from above, i.e.,
\begin{equation}
    \mathbb{E}[\mathbf{x}_i] = \mathbf{0} \text{ and } \norm{\mathbf{x}_i}_2 \leq \vartheta \text{ as well as } \mathbb{E}[\norm{\mathbf{x}_i}_2^2] \leq \sigma^2.
\end{equation} 
Let $\mathbf{z} = \frac{1}{n}\sum_{i=1}^{n} \mathbf{x}_i$.
Then we have 
\begin{equation}\label{eq:vectorBernstein}
    \Pr (\norm{\mathbf{z}}_2 \geq \epsilon) \leq \exp \left(-n \frac{\epsilon^2}{8\sigma^2} + \frac{1}{4}\right),
\end{equation} 
with $0 <  \epsilon < \sigma^2 / \vartheta + \sigma$.
\label{lemma:vectorBernstein}
\begin{proof}
A proof can be found in~\citep[Lemma 18]{kohler2017}. However, some typographical errors were identified, leading us to reproduce the proof for clarity.

The Vector Bernstein inequality for independent, zero-mean random vectors ~\citet[Theorem 12]{gross2011} states
\begin{equation}\label{eq:vectorBernsteinGrosss}
    \Pr \left( \frac{1}{n}\norm{\sum_{i=1}^{n} \mathbf{x}_i}_2 \geq \frac{1}{n} (t + \sqrt{V}) \right) \leq \exp \left( -\frac{t^2}{4V}\right),
\end{equation} 
where $V = \sum_{i=1}^{n} \mathbb{E}[\norm{\mathbf{x}_i}_2^2]$ is the sum of the traces of the covariance matrices of the centered vectors $\mathbf{x}_i$. Using $\mathbb{E}[\norm{\mathbf{x}_i}_2^2] \leq \sigma^2$ yields $V \leq n \sigma^2$.

Note that in~(\ref{eq:vectorBernsteinGrosss}), the probability condition is scaled by a factor of $1/n$ to align with the subsequent analysis involving $\mathbf{z}$. Let $\epsilon = (t + \sqrt{V})/n \Leftrightarrow t = n \epsilon -\sqrt{V}$. Using~(\ref{eq:vectorBernsteinGrosss}) we get
\allowdisplaybreaks
\begin{equation}\label{eq:vectorBernsteinGrosss1}
    \Pr \left( \norm{\mathbf{z}}_2 \geq \epsilon \right) \leq \exp \left( -\frac{1}{4} \left(\frac{n\epsilon - \sqrt{V}}{\sqrt{V}} \right)^2\right) =  \exp \left( -\frac{1}{4} \left(\frac{n\epsilon}{\sqrt{V}} - 1 \right)^2\right).
\end{equation} 
We claim that
\allowdisplaybreaks
\begin{equation}\label{eq:vectorBernsteinGrosss2}
    \begin{aligned}
        & -\frac{1}{4}\left(\frac{n\epsilon}{\sqrt{V}}-1\right)^2 \leq -\frac{1}{4}\frac{n^2\epsilon^2}{2V} + \frac{1}{4}  
    \end{aligned}
\end{equation}
Indeed, if (\ref{eq:vectorBernsteinGrosss2}) holds we arrive at a valid inequality
\begin{equation}
 \begin{aligned}
         & -\frac{n^2\epsilon^2}{V} + 2 \frac{n\epsilon}{\sqrt{V}} - 1 \leq -\frac{1}{2}\frac{n^2\epsilon^2}{V} + 1 \\
        \Leftrightarrow & \left(\frac{n\epsilon}{\sqrt{2V}} - \sqrt{2}\right)^2 \geq 0.
    \end{aligned}
\end{equation} 
Using~(\ref{eq:vectorBernsteinGrosss2}) in~(\ref{eq:vectorBernsteinGrosss1}) gives
\begin{equation}\label{eq:vectorBernsteinGrosss3}
    \Pr \left( \norm{\mathbf{z}}_2 \geq \epsilon \right) \leq \exp \left(-n\frac{\epsilon^2}{8\sigma^2} + \frac{1}{4}\right),
\end{equation} 
where $V \leq n \sigma^2$ is used. 
According to~\citet[Theorem 12]{gross2011}, $t < V/\max_i \norm{\mathbf{x}_i}_2$. For $V \leq n \sigma^2$ and $\norm{\mathbf{x}_i}_2\leq \vartheta$ gives $t < n \sigma^2/ \vartheta$. Given $V \leq n \sigma^2$ and $t < n \sigma^2/\vartheta$, we arrive at
\begin{equation}\label{eq:vectorBernsteinGrosss4}
    n \epsilon = t + \sqrt{V} \leq \frac{n \sigma^2}{\vartheta} + \sqrt{n} \sigma \Leftrightarrow \epsilon \leq \frac{\sigma^2}{\vartheta} + \sigma,
\end{equation} 
where $\sqrt{x} < x$ with $x > 1$ is used. In addition, it can be shown that $\mathrm{Var} (\mathbf{z}) \leq \sigma^2/n$~\citet[Theorem 12]{gross2011} establishing~(\ref{eq:vectorBernstein}), which concluded the proof.
\end{proof}
\end{restatable} 

\begin{restatable}{lem}{basedDefOnTwoIndRandVars} 
Let \( \mathbf{u}_i: \Omega_{\mathbf{u}_i} \to \mathbb{R}^n \) and \( \mathbf{v}_j: \Omega_{\mathbf{v}_j} \to \mathbb{R}^m \) be independent random vectors for each \( i, j \). Let \( g: \mathbb{R}^n \times \mathbb{R}^m \to \mathbb{R}^{d \times d} \) be a function that produces random matrices. Then, for any indices \( (i, j) \neq (k, l) \), the matrices \( g(\mathbf{u}_i, \mathbf{v}_j) \) and \( g(\mathbf{u}_k, \mathbf{v}_l) \) are independent, regardless of whether \( \mathbf{u}_i \) and \( \mathbf{v}_j \) come from the same or different distributions.
\label{lemma:basedDefOnTwoIndRandVars}
\begin{proof}
Let two threshold matrices \( \mathbf{M} \) and \( \mathbf{M}' \) (which are symmetric \( d \times d \) matrices), and consider the probability
\begin{equation}\label{eq:prob1}
   P ( g(\mathbf{u}_i, \mathbf{v}_j) \ledot \mathbf{M} \cap g(\mathbf{u}_k, \mathbf{v}_l) \ledot \mathbf{M}' ),
\end{equation} 
using the element-wise comparison operator $\ledot$. Given that \( \mathbf{u}_i: \Omega_{\mathbf{u}_i} \to \mathbb{R}^n \) and \( \mathbf{v}_j: \Omega_{\mathbf{v}_j} \to \mathbb{R}^m \) are independent random vectors, and \( g: \mathbb{R}^n \times \mathbb{R}^m \to \mathbb{R}^{d \times d} \) is a function generating random matrices, we rewrite the event as
\begin{multline}\label{eq:equivA}
\{ g(\mathbf{u}_i, \mathbf{v}_j) \ledot \mathbf{M} \} 
\equiv  \{ (\omega_{\mathbf{u}_i}, \omega_{\mathbf{v}_j}) \in \Omega_{\mathbf{u}_i} \times \Omega_{\mathbf{v}_j} : g(\mathbf{u}_i(\omega_{\mathbf{u}_i}), \mathbf{v}_j(\omega_{\mathbf{v}_j})) \ledot \mathbf{M} \} \\ 
\equiv \{ (\mathbf{u}_i, \mathbf{v}_j) \in \mathbb{R}^{n} \times \mathbb{R}^{m} : g (\mathbf{u}_i, \mathbf{v}_j) \ledot \mathbf{M} \}.
\end{multline} 
Similarly, we have
\begin{multline} \label{eq:equivB}
\{ g(\mathbf{u}_k, \mathbf{v}_l) \ledot \mathbf{M}' \}  \equiv  \{ (\omega_{\mathbf{u}_k}, \omega_{\mathbf{v}_l}) \in \Omega_{\mathbf{u}_k} \times \Omega_{\mathbf{v}_l} : g(\mathbf{u}_k(\omega_{\mathbf{u}_k}), \mathbf{v}_l(\omega_{\mathbf{v}_l})) \ledot \mathbf{M}' \} \\
\equiv  \{ (\mathbf{u}_k, \mathbf{v}_l) \in \mathbb{R}^{n} \times \mathbb{R}^{m} : g (\mathbf{u}_k, \mathbf{v}_l) \ledot \mathbf{M}' \}.
\end{multline}
Let
\begin{equation}
    A = \{ (\mathbf{u}_i, \mathbf{v}_j) \in \mathbb{R}^{n} \times \mathbb{R}^{m} : g (\mathbf{u}_i, \mathbf{v}_j) \ledot \mathbf{M} \}
\end{equation} 
and
\begin{equation}
    B= \{ (\mathbf{u}_k, \mathbf{v}_l) \in \mathbb{R}^{n} \times \mathbb{R}^{m} : g (\mathbf{u}_k, \mathbf{v}_l) \ledot \mathbf{M}' \}.
\end{equation} 
Then using~(\ref{eq:equivA}) and~(\ref{eq:equivB}) in~(\ref{eq:prob1}), we write
\begin{equation}
    P( \{ g(\mathbf{u}_i, \mathbf{v}_j) \ledot  \mathbf{M} \} \cap \{ g(\mathbf{u}_k, \mathbf{v}_l) \ledot \mathbf{M}' \} ) =  P( (\mathbf{u}_i, \mathbf{v}_j) \in A \cap (\mathbf{u}_k, \mathbf{v}_l) \in B ).
\end{equation} 
Recall that the sequences \( \{ \mathbf{u}_i \} \) and \( \{ \mathbf{v}_j \} \) are independent families of random variables which implies that the pairs \( (\mathbf{u}_i, \mathbf{v}_j) \) are formed by drawing independently from these families. Thus, since \( \mathbf{u}_i \) and \( \mathbf{v}_j \) are independent for each \( (i, j) \), and \( (\mathbf{u}_k, \mathbf{v}_l) \) are also independent, we have
\begin{equation}
    P( \{ g(\mathbf{u}_i, \mathbf{v}_j) \ledot \mathbf{M} \} \cap \{ g(\mathbf{u}_k, \mathbf{v}_l) \ledot \mathbf{M}' \} ) =  P( (\mathbf{u}_i, \mathbf{v}_j) \in A ) P( (\mathbf{u}_k, \mathbf{v}_l) \in B ),
\end{equation} 
which implies
\begin{equation}
    P( \{ g(\mathbf{u}_i, \mathbf{v}_j) \ledot \mathbf{M} \} \cap \{ g(\mathbf{u}_k, \mathbf{v}_l) \ledot \mathbf{M}' \} ) =  P ( g(\mathbf{u}_i, \mathbf{v}_j) \ledot \mathbf{M} ) P (g(\mathbf{u}_k, \mathbf{v}_l) \ledot \mathbf{M}' ),
\end{equation} 
where~(\ref{eq:equivA}) and~(\ref{eq:equivB}) are used. Since the joint probability factorizes, this proves that \( g(\mathbf{u}_i, \mathbf{v}_j) \) and \( g(\mathbf{u}_k, \mathbf{v}_l) \) are independent whenever \( (i, j) \neq (k, l) \), regardless of whether \( \mathbf{u}_i \) and \( \mathbf{v}_j \) come from the same or different distributions.
\end{proof}
\end{restatable}

\begin{restatable}{lem}{vershynin2018high}\label{lem:5.4.10.alt}
Let \( \mathbf{X} \in \mathbb{R}^{d \times d} \) be a symmetric mean-zero matrix with $\norm{\mathbf{X}} \leq 1$ almost surely. Then,
\begin{equation}
    \mathbb{E} \left[ \exp(\lambda \mathbf{X} ) \right] \preceq \exp \left( g(\lambda) \mathbb{E} [\mathbf{X}^2] \right), 
\end{equation} 
where $g(\lambda) = e^{\lambda} - \lambda - 1$.
\begin{proof}
    We refer the reader to~\citep{vershynin2018high}.
\end{proof}
\end{restatable}

\begin{restatable}[Matrix Bernstein Inequality]{lem}{matrixBernsteinTwoSources} 
Let \( \mathbf{X}_{ij} \stackrel{\Delta}{=} g(\mathbf{u}_i, \mathbf{v}_j) \) be \( d \times d \) zero-mean random matrices with two independent sources of randomness, \( \mathbf{u}_i \) and \( \mathbf{v}_j \). Also, let \( \{ \mathbf{X}_{ij} \}_{i,j=1}^{N,M} \) be a set of independent random matrices of common dimension \( d \times d \), such that \( \norm{\mathbf{X}_{ij}}_2 \leq K \) almost surely for all \( i, j \). Then, for every \( t \geq 0 \), we have  
\allowdisplaybreaks
\begin{equation}
    \Pr \left( \norm{\sum_{i=1}^{N} \sum_{j=1}^{M} \mathbf{X}_{ij}}_2 \geq t \right) 
    \leq 2d \exp \left( -\frac{t^2 / 2}{\sigma^2 + Kt / 3} \right).  
\end{equation} Here, the matrix variance is given by  
\begin{equation}
    \sigma^2 = \norm{\sum_{i=1}^{N} \sum_{j=1}^{M} \mathbb{E} \left[ \mathbf{X}_{ij}^2 \right]}_2.
\end{equation} 
In particular, we can express this bound as a mixture of sub-Gaussian and sub-exponential tails, just like in the scalar Bernstein’s inequality:
\begin{equation}
    \Pr \left( \norm{\sum_{i=1}^{N} \sum_{j=1}^{M} \mathbf{X}_{ij} }_2 \geq t \right) 
    \leq 2d \exp \left( - \frac{3}{8} \min \left\{ \frac{t^2}{\sigma^2}, \frac{t}{K} \right\}\right).    
\end{equation}
\begin{proof} 
The following analysis is based on~\citep[Theorem 5.4.1]{vershynin2018high}.

\noindent \textbf{Reduction of MGF.} To bound the norm of the sum
\begin{equation}
    \mathbf{S} \stackrel{\Delta}{=} \sum_{i=1}^{N} \sum_{j=1}^{M}\mathbf{X}_{ij},    
\end{equation} 
we need to control the largest and smallest eigenvalues of \( \mathbf{S} \). We can do this separately. To put this formally, consider the largest eigenvalue
\begin{equation}
    \lambda_{\max}(\mathbf{S}) \stackrel{\Delta}{=} \max_i \lambda_i(\mathbf{S})    
\end{equation} 
and note that
\begin{equation}
    \norm{\mathbf{S}}_2 = \max |\lambda_i(\mathbf{S})| = \max \{\lambda_{\max}(\mathbf{S}), \lambda_{\max}(-\mathbf{S}) \}
\end{equation} 
and
\begin{equation}
    \Pr(|\lambda_{\max}(\mathbf{S})| \geq t) =  \Pr(\lambda_{\max}(\mathbf{S}) \geq t) +  \Pr(\lambda_{\max}(-\mathbf{S}) \geq t) -  \Pr(\lambda_{\max}(\mathbf{S}) \geq t \text{ and } \lambda_{\max}(-\mathbf{S}) \geq t),
\end{equation} 
which implies
\begin{equation}\label{eq:splitPrAbsMaxLambda}
   \Pr(|\lambda_{\max}(\mathbf{S})| \geq t) \leq  \Pr(\lambda_{\max}(\mathbf{S}) \geq t) + \Pr(\lambda_{\max}(-\mathbf{S}) \geq t).
\end{equation}

To bound \( \lambda_{\max}(\mathbf{S}) \), we proceed with computing the moment generating function. We fix \( \lambda \geq 0 \) and use Markov’s inequality to obtain
\begin{equation}\label{eq:UpperBoundmaxEigen}
    \Pr( \lambda_{\max} (\mathbf{S}) \geq t ) = \Pr( e^{\lambda \lambda_{\max} (\mathbf{S})} \geq e^{\lambda t} ) \leq e^{-\lambda t} \: \mathbb{E} [e^{\lambda \lambda_{\max} (\mathbf{S})} ].
\end{equation} 
Since by \citet[Definition 5.4.2]{vershynin2018high} the eigenvalues of \( e^{\lambda \mathbf{S}} \) are \( e^{\lambda \lambda_i(\mathbf{S})} \), we have
\begin{equation}
    {E} \stackrel{\Delta}{=} \mathbb{E} [ e^{\lambda  \lambda_{\max} (\mathbf{S})} ] = \mathbb{E} [ \lambda_{\max} (e^{\lambda \mathbf{S}}) ].
\end{equation} 
Since the eigenvalues of \( e^{\lambda \mathbf{S}} \) are all positive, the maximum eigenvalue of \( e^{\lambda \mathbf{S}} \) is bounded by the sum of all eigenvalues, the trace of \( e^{\lambda \mathbf{S}} \), which leads to
\begin{equation}
    {E} \leq \mathbb{E} [\Tr (e^{\lambda \mathbf{S}}) ].    
\end{equation}

\noindent \textbf{Application of Lieb's inequality.} First note that
\begin{equation}
\mathbf{S} = \sum_{i=1}^{N-1} \sum_{j=1}^{M-1} \mathbf{X}_{ij}  + \sum_{i=1}^{N-1} \mathbf{X}_{iM} + \sum_{j=1}^{M-1} \mathbf{X}_{Nj} + \mathbf{X}_{NM}.
\end{equation}

To prepare the application of Lieb’s inequality in~\citet[Lemma 5.4.9]{vershynin2018high}, let us separate the last term from the sum \( \mathbf{S} \)
\begin{equation}
    {E} \Biggl[ \leq \mathbb{E} \, \Tr \Biggl( \exp \Biggl(\sum_{i=1}^{N-1} \sum_{j=1}^{M-1} \lambda \mathbf{X}_{ij}  +  \sum_{i=1}^{N-1} \lambda \mathbf{X}_{iM} + \sum_{j=1}^{M-1} \lambda \mathbf{X}_{Nj} + \lambda \mathbf{X}_{NM} \Biggr)\Biggr)\Biggr] .
\end{equation} 
Conditioning on $\{\mathbf{X}_{ij}\}_{i,j=1}^{N-1,M-1}$ and applying~\citet[Lemma 5.4.9]{vershynin2018high} for the fixed matrix 
\begin{equation}
    \mathbf{H} \stackrel{\Delta}{=} \sum_{i=1}^{N-1} \sum_{j=1}^{M-1} \lambda \mathbf{X}_{ij}  + \sum_{i=1}^{N-1} \lambda \mathbf{X}_{iM} + \sum_{j=1}^{M-1} \lambda \mathbf{X}_{Nj}
\end{equation} 
and the random matrix \( \mathbf{Z} \stackrel{\Delta}{=} \lambda \mathbf{X}_{NM} \), we obtain
\begin{multline}
    {E} \leq  \mathbb{E}_{\{\mathbf{X}_{ij}\}_{i,j=1}^{N,M}}\Biggl[ \Tr \Biggl( \exp \Biggl( \sum_{i=1}^{N-1} \sum_{j=1}^{M-1} \lambda \mathbf{X}_{ij}  +  
    \sum_{i=1}^{N-1} \lambda \mathbf{X}_{iM} + \sum_{j=1}^{M-1} \lambda \mathbf{X}_{Nj}  + \lambda \mathbf{X}_{NM} \Biggr)\Biggr) \Biggr] \\
    \leq  \mathbb{E}_{\{\mathbf{X}_{ij}\}_{i,j=1}^{N-1,M-1}}\Biggl[ \mathbb{E}_{\mathbf{X}_{NM}}\Biggl[ \Tr \Biggl( \exp \Biggl( \sum_{i=1}^{N-1} \sum_{j=1}^{M-1} \lambda \mathbf{X}_{ij}  +  \sum_{i=1}^{N-1} \lambda \mathbf{X}_{iM} + \sum_{j=1}^{M-1} \lambda \mathbf{X}_{Nj}  + \lambda \mathbf{X}_{NM} \Biggr) \Biggr) \Biggr] \Biggr]  \\ 
    \leq  \mathbb{E}_{\{\mathbf{X}_{ij}\}_{i,j=1}^{N-1,M-1}} \Biggl[  \Tr\Biggl(\exp \Biggl(\sum_{i=1}^{N-1} \sum_{j=1}^{M-1} \lambda \mathbf{X}_{ij}  + \sum_{i=1}^{N-1} \lambda \mathbf{X}_{iM} +  \sum_{j=1}^{M-1} \lambda \mathbf{X}_{Nj} + \log \mathbb{E}_{\mathbf{X}_{NM}}  e^{\lambda \mathbf{X}_{NM}}\Biggr)\Biggr)\Biggr] .
\end{multline} 
We continue similarly: separate the next term \( \lambda \mathbf{X}_{N-1,M-1} \) from the remaining sum and apply~\citet[Lemma 5.4.9]{vershynin2018high} again for \( \mathbf{Z} = \lambda \mathbf{X}_{N-1, M-1} \). Repeating this process \( NM \) times, we obtain
\begin{equation}\label{eq:eUpperbound}
    \Pr( \lambda_{\max} (\mathbf{S}) \geq t ) \leq  \Tr \left(e^{-\lambda t} \exp \left( \sum_{i=1}^{N} \sum_{j=1}^{M} \log \mathbb{E} \, \exp{\lambda \mathbf{X}_{ij}} \right)\right).    
\end{equation}
\noindent \textbf{MGF of the individual terms.} 
It remains to bound the matrix-valued moment generating function  $\mathbb{E} \, e^{\lambda \mathbf{X}_{ij}}$ for each term \(\mathbf{X}_{ij}\). We now use Lemma~\ref{lem:5.4.10.alt}.

\noindent \textbf{Completion of the proof.} Using Lemma~\ref{lem:5.4.10.alt}, we obtain
\allowdisplaybreaks
\begin{multline}
\mathbb{E} \left[\exp \left(\lambda \mathbf{X}_{ij} /K \right)\right] \preceq  \exp \left(g(\lambda)\: \mathbb{E} [\mathbf{X}_{ij}^2]/K^2\right)  \Leftrightarrow  
\prod_{i,j=1}^{N,M} \mathbb{E}\left[ \exp \left(\lambda \mathbf{X}_{ij} /K \right)\right] \preceq   \prod_{i,j=1}^{N,M} \exp \left(g(\lambda)\: \mathbb{E} [\mathbf{X}_{ij}^2 ] /K^2\right) \\ \Leftrightarrow  
\prod_{i,j=1}^{N,M} \mathbb{E} \left[\exp \left(\lambda \mathbf{X}_{ij} /K \right) \right]  \preceq  \exp \left(g(\lambda)\: \sum_{i,j=1}^{N,M} \mathbb{E} [\mathbf{X}_{ij}^2 ] /K^2\right),
\end{multline} which implies
\begin{equation}\label{eq:lem5410Application}
   \prod_{i,j=1}^{N,M} \mathbb{E} \left[\exp \left(\lambda \mathbf{X}_{ij} /K\right)\right]  \preceq  \exp \left(g(\lambda)\: \sum_{i,j=1}^{N,M} \mathbb{E} [\mathbf{X}_{ij}^2] /K^2\right). 
\end{equation} 
Also, given that $\mathbf{X}_{ij}$ are independent, we have
\begin{equation}
    \prod_{i,j=1}^{N,M} \mathbb{E}\left[ \exp \left(\lambda \mathbf{X}_{ij} /K\right) \right]=   
    \exp \left(\log\left( \prod_{i,j=1}^{N,M} \mathbb{E} \exp \left(\lambda  \mathbf{X}_{ij} /K \right)\right)\right) = \\ 
    \exp \left( \sum_{i,j=1}^{N,M} \log \mathbb{E} \exp \left(\lambda   \mathbf{X}_{ij} /K\right)\right),
\end{equation} 
which combined with~(\ref{eq:lem5410Application})
\begin{equation}
    \exp \left( \sum_{i,j=1}^{N,M} \log \mathbb{E} \exp \left(\lambda   \mathbf{X}_{ij} /K \right)\right) \preceq  \exp \left(g(\lambda)\: \sum_{i,j=1}^{N,M} \mathbb{E} [\mathbf{X}_{ij}^2]/K^2\right),
\end{equation} 
and applying the trace to both sides yields
\begin{equation}
    \Tr \left(\exp \left( \sum_{i,j=1}^{N,M} \log \mathbb{E} \exp \left(\lambda   \mathbf{X}_{ij} /K\right)\right)\right)  \leq \Tr \left( \exp \left(g(\lambda)\: \tilde{\mathbf{Z}} / K^2\right)\right), 
\end{equation} 
where \( \tilde{\mathbf{Z}} \stackrel{\Delta}{=} \mathbb{E} \left[\sum_{i=1}^N\sum_{j=1}^M\mathbf{X}_{ij}^2\right] \). Since the trace of \( \exp(g(\lambda) \tilde{\mathbf{Z}} / K^2 )  \) is a sum of \( d \) positive eigenvalues, it is bounded by \( d \) times the maximum eigenvalue and using~\citet[Definition 5.4.2]{vershynin2018high}, we obtain
\begin{multline}\label{eq:eq:lem5410Application1}
    \Tr \left(\exp \left(g(\lambda)\: \tilde{\mathbf{Z}} / K^2 \right)\right) \leq d \: \lambda_{\max}\left( \exp \left( g(\lambda) \: \tilde{\mathbf{Z}} / K^2 \right)\right)     = d \: \exp \left( g(\lambda) \: \lambda_{\max}(\tilde{\mathbf{Z}} / K^2) \right)  
    \\  =  d \: \exp \left( g(\lambda) \: \| \tilde{\mathbf{Z}}  \|  / K^2 \right)  
     =  d \: \exp \left( g(\lambda) \: \sigma^2 / K^2\right).
\end{multline} 
Combining~(\ref{eq:eUpperbound}) and~(\ref{eq:eq:lem5410Application1}) we get 
\begin{equation}\label{eq:eUpperbound1}
    \Pr( \lambda_{\max} (\mathbf{S}) \geq K t ) \leq  e^{-\lambda t} \Tr \left(\exp \left( \sum_{i=1}^{N} \sum_{j=1}^{M} \log \mathbb{E} \, \exp{\lambda \mathbf{X}_{ij} / K} \right) \right) \leq   d \: \exp \left( -\lambda t + g(\lambda) \: \sigma^2 / K^2\right),   
\end{equation} 
which implies
\begin{equation}\label{eq:eUpperbound2}
    \Pr( \lambda_{\max} (\mathbf{S}) \geq  t )  \leq   d \: \exp \left( -\frac{\lambda }{K}t + \frac{g(\lambda)}{ K^2}\sigma^2\right).   
\end{equation} 
Minimizing over \( \lambda > 0 \), the minimum occurs at 
\begin{equation}
    \lambda = \log\left(1 + \frac{K t}{\sigma^2} \right), \quad t \geq 0.
\end{equation} 
Plugging this into the bound, we get
\begin{equation}
    \Pr \left( \lambda_{\max} \left( \mathbf{S} \right) \geq t \right)
\leq d \: \exp \left( - \frac{\sigma^2}{K^2} \: h\left( \frac{K t}{\sigma^2} \right) \right),
\end{equation} 
where 
\begin{equation}
    h(u) = (1 + u) \log(1 + u) - u, \quad \text{for } u > 0.    
\end{equation}
We know that~\citet[Exercise 2.8]{LugosiBook}
\begin{equation}
    h(u) \geq \frac{u^2}{2(1 + u/3)},    
\end{equation} 
with $u > 0$ and thus
\begin{equation}\label{eq:eUpperbound3}
    \Pr \left( \lambda_{\max} \left( \mathbf{S} \right) \geq t \right)
\leq d \: \exp \left( - \frac{\sigma^2}{K^2} \: \frac{u^2}{2(1 + u/3)} \right),    
\end{equation} 
where \( u = \frac{Kt}{\sigma^2} \). Substituting $u$ in~(\ref{eq:eUpperbound3}), we obtain
\begin{equation}
    \Pr \left( \lambda_{\max} \left( \mathbf{S} \right) \geq t \right)
\leq d \: \exp \left( - \frac{t^2 / 2}{\sigma^2 + K t/3} \right).    
\end{equation} 
Following similar steps with $-\mathbf{S}$ instead of $\mathbf{S}$ and using~(\ref{eq:splitPrAbsMaxLambda}), yields
\begin{equation}
    P(|\lambda_{\max} (\mathbf{S})| \geq t) \leq
    \begin{cases}
        2d \exp \left( \frac{-3t^2}{8\sigma^2} \right), \quad t \leq \sigma^2 / K \\
        2d \exp \left( \frac{-3t}{8K} \right), \quad t > \sigma^2 / K.
    \end{cases}
\end{equation} 
Intuitively, for small $t$, i.e., $t \leq \sigma^2 / K$, we have a sub-Gaussian bound, while for large $t$, i.e., $t > \sigma^2 / K$, we have a sub-exponential bound. Looking for the tightest bound, we may write 
\begin{equation}
    P(|\lambda_{\max} (\mathbf{S})| \geq t) \leq  2d \exp \left( - \frac{3}{8} \min \left\{ \frac{t^2}{\sigma^2}, \frac{t}{K} \right\}\right).
\end{equation}
\end{proof}
\label{lemma:matrixBernsteinTwoSources}
\end{restatable}

\end{document}